\newcommand{\A}{\mathbf{A}}
\newcommand{\B}{\mathbf{B}}
\newcommand{\T}{\mathbf{T}}
\newcommand{\C}{\mathbf{C}}
\newcommand{\K}{\mathbf{K}}
\newcommand{\U}{\mathcal{U}}
\newcommand{\OT}{\operatorname{OT}}
\newcommand{\UOT}{\operatorname{UOT}}
\newcommand{\KL}{\operatorname{KL}}
\newcommand{\tC}{\widetilde{\mathbf{C}}}
\newcommand{\tK}{\widetilde{\mathbf{K}}}
\newcommand{\tT}{\widetilde{\mathbf{T}}}
\newcommand{\tA}{\widetilde{\mathbf{A}}}
\newcommand{\tO}{\widetilde{O}}
\newcommand{\one}{{\mathbf{1}_n}}
\def\u{\bm{u}}
\def\v{\bm{v}}
\def\a{\bm{a}}
\def\b{\bm{b}}
\def\x{\bm{x}}
\def\y{\bm{y}}
\def\eps{\varepsilon}
\def\lam{\lambda}
\newcommand{\nnz}{\mbox{nnz}}
\newcommand{\diag}{\operatorname{diag}}
\newcommand{\RR}{\mathbb{R}}
\begin{document}

\title{Importance Sparsification for Sinkhorn Algorithm}

\author{\name Mengyu Li \email limengyu516@ruc.edu.cn \\
       \addr Institute of Statistics and Big Data\\
       Renmin University of China\\
       Beijing, China
       \AND
       \name Jun Yu$^{*}$ \email yujunbeta@bit.edu.cn \\
       \addr School of Mathematics and Statistics\\
       Beijing Institute of Technology\\
       Beijing, China\\
       $^*$Joint first author
       \AND
       \name Tao Li \email 2019000153lt@ruc.edu.cn \\
       \addr Institute of Statistics and Big Data\\
       Renmin University of China\\
       Beijing, China
       \AND
       \name Cheng Meng$^{\dag}$ \email chengmeng@ruc.edu.cn \\
       \addr Center for Applied Statistics, Institute of Statistics and Big Data\\
       Renmin University of China\\
       Beijing, China\\
       $^\dag$Corresponding author}
\editor{Michael Mahoney}

\maketitle

\begin{abstract}
Sinkhorn algorithm has been used pervasively to approximate the solution to optimal transport (OT) and unbalanced optimal transport (UOT) problems. However, its practical application is limited due to the high computational complexity. To alleviate the computational burden, we propose a novel importance sparsification method, called \textsc{Spar-Sink}, to efficiently approximate entropy-regularized OT and UOT solutions. Specifically, our method employs natural upper bounds for unknown optimal transport plans to establish effective sampling probabilities, and constructs a sparse kernel matrix to accelerate Sinkhorn iterations, reducing the computational cost of each iteration from $O(n^2)$ to $\widetilde{O}(n)$ for a sample of size $n$. Theoretically, we show the proposed estimators for the regularized OT and UOT problems are consistent under mild regularity conditions. Experiments on various synthetic data demonstrate \textsc{Spar-Sink} outperforms mainstream competitors in terms of both estimation error and speed. A real-world echocardiogram data analysis shows \textsc{Spar-Sink} can effectively estimate and visualize cardiac cycles, from which one can identify heart failure and arrhythmia. To evaluate the numerical accuracy of cardiac cycle prediction, we consider the task of predicting the end-systole time point using the end-diastole one. Results show \textsc{Spar-Sink} performs as well as the classical Sinkhorn algorithm, requiring significantly less computational time.
\end{abstract}

\begin{keywords}%
echocardiogram analysis, element-wise sampling, importance sampling, (unbalanced) optimal transport, Wasserstein-Fisher-Rao distance
\end{keywords}

\section{Introduction}\label{sec:intro}

The optimal transport (OT) problem, initiated by Gaspard Monge in the 18th century, aims to calculate the Wasserstein distance that quantifies the discrepancy between two probability measures.
Recently, the Wasserstein distance has played an increasingly preponderant role in machine learning \citep{courty2016optimal, arjovsky2017wasserstein, meng2019large, muzellec2020missing, balaji2020robust}, statistics \citep{flamary2018wasserstein, panaretos2019statistical, meng2020sufficient, dubey2020functional}, computer vision \citep{ferradans2014regularized, su2015optimal, solomon2015convolutional, xu2019learning}, biomedical research \citep{tanay2017scaling, schiebinger2019optimal, marouf2020realistic}, among others. 
We refer to \cite{peyre2019computational} and \cite{panaretos2019statistical} for recent reviews.

Despite the broad range of applications, existing methods for computing the Wasserstein distance suffer from a huge computational burden when the sample size $n$ is large.
Specifically, traditional approaches involve solving differential equations \citep{brenier1997homogenized, benamou2002monge} or linear programming problems \citep{rubner1997earth, pele2009fast}.
The computational cost of such methods is of the order $O(n^3\log(n))$.

To alleviate the computational burden, a large number of efficient computational tools have been developed in the recent decade.
One major class of approaches is called the regularization-based method, which solves an entropy-regularized OT problem instead of the original one \citep{cuturi2013sinkhorn}. 
The regularized OT problem is unconstrained and convex with a differentiable objective function, and can be solved using the Sinkhorn algorithm \citep{sinkhorn1967concerning} in $O(Ln^2)$ time, where $L$ is the number of iterations. 
It has been shown that regularized OT solutions possess better theoretical properties than the unregularized counterparts \citep{montavon2016wasserstein, rigollet2018entropic, feydy2019interpolating, peyre2019computational}.
Another advantage of the regularization-based approach is that it can be applied to a generic class of unbalanced optimal transport (UOT) problems \citep{chizat2018scaling}. 
The UOT problem relaxes the strict marginal constraints of OT by allowing partial displacement of mass, making it more suitable for applications that involve both mass variation (e.g., creation or destruction) and mass transportation \citep{frogner2015learning, chizat2018scaling, zhou2018wasserstein, wang2020robust}.
The Sinkhorn algorithm can be naturally extended to approximate UOT solutions, also requiring an $O(Ln^2)$ computational cost \citep{chizat2018scaling,pham2020unbalanced}.
In general, the Sinkhorn algorithm enables researchers to approximate the OT and UOT solutions efficiently, and thus has been extensively studied in the recent decade \citep{cuturi2014fast, genevay2019sample, feydy2019interpolating, lin2019efficient, pham2020unbalanced}.
There also exist slicing-based methods to approximate the Wasserstein distance \citep{pitie2005n, rabin2011wasserstein, bonneel2015sliced, meng2019large, deshpande2019max, zhang2021review, nguyen2021distributional, nguyen2023hierarchical}, and such methods are beyond the scope of this paper. A recent review of such methods can be found in \cite{nadjahi2021sliced}.

Despite the wide application, the time and memory requirements of the Sinkhorn algorithm grow quadratically with $n$, which hinders its broad applicability to many large-scale optimal transport problems.
To address the computational bottleneck, many efficient variants of the Sinkhorn algorithm have been proposed in recent years \citep{solomon2015convolutional, altschuler2019massively, pham2020unbalanced, scetbon2020linear, scetbon2021low, klicpera2021scalable, le2021robust, sejourne2022faster, liao2022fast, liao2022fast2}.
For example, in contrast to the scheme of Sinkhorn that updates all rows and columns of the transport plan at each step, the variants including greedy Sinkhorn (\textsc{Greenkhorn}) \citep{altschuler2017near, lin2022efficiency}, randomized Sinkhorn (\textsc{Randkhorn}) \citep{lin2019acceleration}, and screening Sinkhorn (\textsc{Screenkhorn}) \citep{alaya2019screening} only update partial row(s) or column(s) in each iteration, based on different selection criteria. These variants have been shown to converge faster in practice, making them appealing for large-scale applications.
In addition, \cite{xie2020fast} developed an inexact proximal point method to address numerical instability issues of Sinkhorn algorithm.

Nevertheless, most of the existing variants of the Sinkhorn algorithm still require an $O(Ln^2)$ computational cost.
One exception is the \textsc{Nys-Sink} approach proposed by \cite{altschuler2019massively}, where the authors proposed to accelerate the Sinkhorn algorithm using the Nystr{\"o}m method, a well-known technique for low-rank matrix approximation \citep{kumar2012sampling}.
The computational complexity of \textsc{Nys-Sink} is reduced to $O(Lrn)$, where $r\leq n$ denotes the estimated rank of the kernel matrix $\K$ with respect to (w.r.t.) the Sinkhorn algorithm.
Further details of the kernel matrix $\K$ will be provided in the subsequent section.
However, the \textsc{Nys-Sink} method suffers from two limitations: it requires (i) $\K$ to be symmetric positive semi-definite, and (ii) $\K$ possessing a low-rank structure.
Such constraints restrict the applicability of \textsc{Nys-Sink} in many practical scenarios.
For instance, the Wasserstein-Fisher-Rao distance \citep{kondratyev2016new, chizat2018interpolating, liero2018optimal}, a popular distance in UOT problems, is associated with a kernel matrix $\K$ that is highly sparse and nearly full-rank; see Section~\ref{sec:background} for more details.
The \textsc{Nys-Sink} method thus may be ineffective for estimating the Wasserstein-Fisher-Rao distance in large-scale UOT problems.
Therefore, the development of an efficient variant of the Sinkhorn algorithm capable of handling large-scale asymmetric and nearly full-rank kernel matrix $\K$ remains a blank field requiring further research.

In this paper, we propose a randomized sparsification variant of the Sinkhorn algorithm, called \textsc{Spar-Sink}, for both OT and UOT problems.
Specifically, we construct a sparsified kernel matrix $\tK$ by carefully sampling $s=o(n^2)$ elements from $\K$ and setting the remaining ones to zero.
We then leverage $\tK$ and sparse matrix multiplications to accelerate the iterations in the Sinkhorn algorithm, reducing the computational cost from $O(n^2)$ to $O(s)$ per iteration.

The key to the success of the proposed strategy is developing an effective sampling probability.
We demonstrate that both OT and UOT problems provide natural upper bounds for the elements in the unknown optimal transport plan.
Drawing inspiration from the importance sampling technique, we employ such upper bounds to construct sampling probabilities.
Theoretically, we show that the proposed estimators for entropic OT and UOT problems are consistent when $s=\widetilde{O}(n)$ under certain regularity conditions, where $\widetilde{O}(\cdot)$ suppresses logarithmic factors.
Extensive simulations show \textsc{Spar-Sink} yields much smaller estimation errors compared with mainstream competitors.

We consider a real-world echocardiogram data analysis to demonstrate the performance of \textsc{Spar-Sink}.
Specifically, we propose using the Wasserstein-Fisher-Rao (WFR) distance \citep{kondratyev2016new,chizat2018interpolating,liero2018optimal}, a special metric in UOT problems, to characterize the similarity between any two frames in an echocardiogram video. 
Compared to the Wasserstein distance, the WFR distance prevents long-range mass transportation between two distributions, and thus can achieve a balance between global transportation and local truncation.
Intuitively, such a distance is more consistent with the nature of myocardial motion that the cardiac muscle would not transport too far.
We focus on the task of cardiac cycle identification, which is an essential but laborious task for the downstream assessment of cardiac function \citep{ouyang2020video}.
We apply the proposed \textsc{Spar-Sink} algorithm to approximate the WFR distance and predict cardiac cycles automatically and efficiently, which has the potential to obviate the heavy work for cardiologists.
Empirical results show that our method can effectively estimate and visualize cardiac cycles, with the potential to identify heart failure and arrhythmia from the results.
To evaluate the numerical accuracy of cardiac cycle prediction, we predict the end-systole time point using the end-diastole one.
The results show \textsc{Spar-Sink} achieves the same prediction accuracy as the Sinkhorn algorithm while requiring much less computational time. 

A problem closely related to the optimal transport is the (fixed-support) Wasserstein barycenter problem, which aims to calculate the barycenter of a set of probability measures (whose supports are predetermined) in the Wasserstein space \citep{agueh2011barycenters}. 
Extending the work of \cite{cuturi2013sinkhorn}, Wasserstein barycenters can also be approximated by entropic smoothing \citep{cuturi2014fast} using the iterative Bregman projection (IBP) algorithm \citep{benamou2015iterative}. 
Concerning the computational hardness, significant research has been devoted to further enhancing the celebrated IBP algorithm \citep{cuturi2018semidual, kroshnin19complexity, lin2020fixed, guminov2021combination}. 
In this paper, we also extend the idea of sparsification to the IBP algorithm, efficiently approximating Wasserstein barycenters.

The remainder of this paper is organized as follows. We start in Section~\ref{sec:background} by introducing the background of OT and UOT problems. 
In Section~\ref{sec:method}, we develop the sampling probabilities and provide the details of the main algorithm.
The theoretical properties of the proposed estimators are presented in Section~\ref{sec:theory}.
We examine the performance of the proposed method through extensive synthetic data sets in Section~\ref{sec:simu}.
Echocardiogram data analysis is provided in Section~\ref{sec:real}.
Extensions, technical details, and additional numerical results and applications are relegated to the Appendix.

\section{Background}\label{sec:background}
Here we summarize the notation used throughout the paper. We adopt the standard convention of using uppercase boldface letters for matrices, lowercase boldface letters for vectors, and regular font for scalars. 
We denote non-negative real numbers by $\mathbb{R}_{+}$, the set of integers $\{1, \ldots, n\}$ by $[n]$, and the $(n-1)$-dimensional simplex by $\Delta^{n-1}=\{\x \in \mathbb{R}_{+}^{n}: \sum_{i=1}^{n} x_{i}=1\}$. 
An empirical measure $\mu$ supported by $n$ points $\x_i \in \mathbb{R}^{d}, i\in [n]$ is defined as $\mu=\sum_{i=1}^{n} a_{i} \delta_{\x_{i}}$, where $\delta_\cdot$ is the Dirac delta function and $\a = (a_1, \ldots, a_n)$ is the corresponding histogram in $\mathbb{R}_{+}^{n}$.
For two histograms $\a, \b \in \mathbb{R}_{+}^{n}$, we define the Kullback-Leibler divergence $\mathrm{KL}(\a \| \b)$ between $\a$ and $\b$ by $\mathrm{KL}(\a \| \b)=\sum_{i=1}^{n} a_{i} \log (a_{i}/b_{i})-a_{i}+b_{i}$, where we adopt the standard convention that $0 \log(0)=0$.
For a coupling matrix $\T \in \mathbb{R}_{+}^{n \times n}$, its Shannon entropy is defined as $H(\T)=-\sum_{i,j} T_{ij} (\log (T_{ij})-1)$. 
We use $\|\A\|_2$ to denote the spectral norm (i.e., maximal singular value) of a matrix $\A$, and its condition number is defined as $\|\A\|_2/\lambda_{\min}(\A)$, where $\lambda_{\min}(\cdot)$ is the minimal singular value. For $\A$ and $\B$ of the same dimension, we denote their Frobenius inner product by $\langle \A, \B \rangle=\sum_{i,j} A_{i j} B_{i j}$. 
For a vector $\bm x$, we use $\|\bm x\|_p$ and $\|\bm x\|_\infty$ to represent its $\ell_p$ norm and infinity norm, respectively.
For two non-negative sequences $(x_{n})_{n}$ and $(y_{n})_{n}$, we denote $x_{n}=\widetilde{O}(y_{n})$ if there exist constants $c, c^\prime>0$ such that $x_{n} \leq c^\prime y_{n}(\log (n))^{c}$.

\subsection{Optimal Transport Problem and Sinkhorn Algorithm}
To begin with, we consider two empirical probability measures $\a \in \Delta^{m-1}$ and $\b \in \Delta^{n-1}$. For brevity, we focus on the case of $m=n$ in this paper, since the extension to unequal cases is straightforward.
The goal of the optimal transport problem is to compute the minimal effort of moving the masses $\a$ and $\b$ onto each other, according to some ground cost between the supports. Due to \citet{kantorovich1942transfer}, the modern OT formulation takes the form
\begin{equation}\label{eq:ot}
\OT(\a, \b):=\min _{\T \in \mathcal{U}(\a, \b)}\langle \T, \C\rangle,
\end{equation}
where $\mathcal{U}(\a, \b):= \{\T \in \mathbb{R}_{+}^{n \times n}: \T \mathbf{1}_{n} = \a, \T^{\top} \mathbf{1}_{n} = \b\}$ is the set of admissible transportation plans, i.e., all joint probability distributions with marginals $\a, \b$, and $\C \in \mathbb{R}_{+}^{n \times n}$ is a given cost matrix with bounded entries. 
The solutions to~\eqref{eq:ot} are called the optimal transport plan.
When $\C$ is a pairwise distance matrix of the power $p$, $W_p(\cdot, \cdot) := \OT(\cdot, \cdot)^{1/p}$ defines the $p$-Wasserstein distance on $\Delta^{n-1}$.

A clear drawback of OT is that the computational cost of directly solving the problem~\eqref{eq:ot} is hugely prohibitive. 
Indeed, conventional methods require $O(n^3 \log(n))$ time \citep{brenier1997homogenized, rubner1997earth, benamou2002monge, pele2009fast}. Even the fastest algorithms known to date for~\eqref{eq:ot} have a computational complexity of at least $O(n^{2.5} \log(n))$ \citep{lee2014path, lee2015efficient, guo2020fast, an2022efficient}.

To approximate the solution to the OT problem efficiently, \cite{cuturi2013sinkhorn} introduced an entropic penalty term to \eqref{eq:ot} and turned it into an entropy-regularized OT problem
\begin{equation}\label{eq:rot}
\OT_\eps(\a, \b) := \min _{\T \in \mathcal{U}(\a, \b)} \langle \T, \C\rangle - \eps H(\T),
\end{equation}
where the regularization parameter $\eps>0$ controls the strength of the penalty term. 
Let $\T_\eps^\ast \in \mathbb{R}^{n \times n}$ be the solution to~\eqref{eq:rot}. 
It is known that when $\eps \to 0$, $\OT_\eps(\a, \b)\to\OT(\a, \b)$; when $\eps \to \infty$, $\T_\eps^\ast \to \a\b^\top$ \citep{peyre2019computational}.

In general, the solution $\T_\eps^\ast$ is a projection onto $\mathcal{U}(\a, \b)$ of the kernel matrix $\K := \exp(-\C / \eps)$. The $(i,j)$th entry of $\K$ is given by $K_{ij} = \exp(-C_{ij} / \eps)$.
Indeed, for two (unknown) convergent scaling vectors $\u^\ast,\v^\ast \in \mathbb{R}_{+}^{n}$, the unique solution $\T_\eps^\ast$ takes the form
\begin{equation}\label{eq:rot-solu}
\T_\eps^\ast = \operatorname{diag}(\u^\ast) \K  \operatorname{diag}(\v^\ast).
\end{equation}
Based on the equation~\eqref{eq:rot-solu}, $\T_\eps^\ast$ can be approximated by the celebrated Sinkhorn algorithm using iterative matrix scaling \citep{sinkhorn1967concerning, cuturi2013sinkhorn}, requiring a computational cost of the order $O(Ln^2)$. The pseudocode for the Sinkhorn algorithm is shown in Algorithm~\ref{alg:sink-ot}, where the operator $\oslash$ denotes the element-wise division.

\begin{algorithm}
\caption{\textsc{SinkhornOT}($\K, \a, \b, \delta$)}
\begin{algorithmic}[1]
\State {\bf Initialize:}
$t \leftarrow 0; \v^{(0)} \leftarrow \mathbf{1}_n$
\Repeat
    \State $t \leftarrow t+1$
    \State $\u^{(t)} \leftarrow \a \oslash \K \v^{(t-1)} ; \quad \v^{(t)} \leftarrow \b \oslash \K^{\top} \u^{(t)}$
\Until{$\|\u^{(t)}-\u^{(t-1)}\|_1 + \|\v^{(t)}-\v^{(t-1)}\|_1 \le \delta$}
\State {\bf Output:} $\T_\eps^\ast = \operatorname{diag}(\u^{(t)}) \K  \operatorname{diag}(\v^{(t)})$
\end{algorithmic}
\label{alg:sink-ot}
\end{algorithm}

\subsection{Unbalanced Optimal Transport Problem}
When $\a,\b \in \mathbb{R}_{+}^{n}$ are two arbitrary positive measures such that their total mass does not equal each other, the marginal constraints in the classical OT problem~\eqref{eq:ot} are no longer valid. 
To overcome such an obstacle, researchers extended the classical optimal transport problem to the so-called unbalanced optimal transport (UOT) problem by relaxing the marginal constraints.
In the literature, there exist several different formulations of the UOT problem; see \cite{liero2016optimal} and \cite{chizat2018unbalanced} for reference.
In this paper, we focus on the static formulation that only involves a minor modification of the initial linear program of OT.
Specifically, the UOT problem between $\a$ and $\b$ is defined as 
\begin{equation}\label{eq:uot}
\UOT(\a, \b):=\min _{\T \in \mathbb{R}_{+}^{n\times n}}\langle \T, \C\rangle + \lambda \mathrm{KL}(\T \mathbf{1}_{n} \| \a) + \lambda \mathrm{KL} (\T^{\top} \mathbf{1}_{n} \| \b).
\end{equation}
Here, $\lam > 0$ is a regularization parameter that balances the trade-off between the transportation effort and maintaining the global structure of input measures. 
Intuitively, mass transportation increases with larger $\lam$.
Note that when $\|\a\|_1 = \|\b\|_1$ and $\lam \rightarrow \infty$, the UOT problem~\eqref{eq:uot} degenerates to the classical OT problem~\eqref{eq:ot}.

One particular solution to the UOT problem is the so-called Wasserstein-Fisher-Rao distance \citep{kondratyev2016new,chizat2018interpolating,liero2018optimal}.
Such a distance is associated with a cost matrix $\C=(C_{ij})$ such that
$C_{ij}=-\log \left[\cos _{+}^{2}\left(d_{ij}/(2\eta)\right)\right]$, where $d_{ij}$ is a distance and $\cos _{+}: z \mapsto \cos \left(\min \left(z, \pi/2\right)\right)$. 
Here, the parameter $\eta$ controls the sparsity level in the kernel matrix $\K$, such that a smaller value of $\eta$ is associated with a sparser $\K$. 
More precisely, when $d_{ij} \ge \pi\eta$, it follows that $C_{ij}=\infty$ and thus $K_{ij}=0$, that is, the transportation between $a_i$ and $b_j$ is blocked.
Hence, a smaller $\eta$ causes more elements in $\K$ to be truncated to zero, resulting in a sparser matrix.
Moreover, a small $\eta$ leads to the diagonal or block-diagonal structure in $\K$, resulting in a large rank of the kernel matrix.
The $\lam$-Wasserstein-Fisher-Rao distance is defined as $\operatorname{WFR}_{\lam}(\cdot, \cdot) := \UOT(\cdot, \cdot)^{1/2}$.
Such a distance has been widely applied in natural language processing \citep{wang2020robust}, earthquake location problems \citep{zhou2018wasserstein}, shape modification, color transfer, and growth models \citep{chizat2018scaling}.

Similar to the classical OT problem, the exact computation of the UOT problem is not scalable in terms of the number of support points $n$. 
Inspired by the success of entropy-regularized OT, we consider the entropic version of the UOT problem, defined as
\begin{equation}\label{eq:ruot}
\UOT_{\lam,\eps}(\a, \b) := \min _{\T \in \mathbb{R}_{+}^{n\times n}}\langle \T, \C\rangle + \lambda \mathrm{KL}(\T \mathbf{1}_{n} \| \a) + \lambda \mathrm{KL} (\T^{\top} \mathbf{1}_{n} \| \b) - \eps H(\T),
\end{equation}
with given parameters $\lam, \eps>0$.
Similarly, we introduce the kernel matrix $\K = \exp(-\C / \eps)$ and solve the problem~\eqref{eq:ruot} by iterative matrix scaling.
Algorithm~\ref{alg:sink-uot} proposed by \cite{chizat2018scaling} is a straightforward generalization of the Sinkhorn algorithm from OT problems to UOT problems. 
The output of Algorithm~\ref{alg:sink-uot} is the unique solution to the problem~\eqref{eq:ruot}. 
Note that when $\lam \rightarrow \infty$, we have $\lam/(\lam+\eps) \to 1$ and thus Algorithm~\ref{alg:sink-uot} degenerates to Algorithm~\ref{alg:sink-ot}.

\begin{algorithm}
\caption{\textsc{SinkhornUOT}($\K, \a, \b, \lam, \eps, \delta$)}
\begin{algorithmic}[1]
\State {\bf Initialize:}
$t \leftarrow 0; \u^{(0)}, \v^{(0)} \leftarrow \mathbf{1}_n$
\Repeat
    \State $t \leftarrow t+1$
    \State $\u^{(t)} \leftarrow \left(\a \oslash \K \v^{(t-1)}\right)^{\lam/(\lam+\eps)}; \quad \v^{(t)} \leftarrow \left(\b \oslash \K^{\top} \u^{(t)}\right)^{\lam/(\lam+\eps)}$
\Until{$\|\u^{(t)}-\u^{(t-1)}\|_1 + \|\v^{(t)}-\v^{(t-1)}\|_1 \le \delta$}
\State {\bf Output:} $\T_{\lam,\eps}^\ast = \operatorname{diag}(\u^{(t)}) \K  \operatorname{diag}(\v^{(t)})$
\end{algorithmic}
\label{alg:sink-uot}
\end{algorithm}

\section{Main Algorithm}\label{sec:method}
In this section, we present our main algorithm called importance sparsification for the Sinkhorn algorithm (\textsc{Spar-Sink}).
The idea is first to apply element-wise subsampling on the kernel matrix $\K$ to obtain a sparse sketch $\tK$.
We then use $\tK$ as a surrogate for $\K$ and use sparse matrix multiplication techniques to accelerate the iterations in the Sinkhorn algorithm.

\subsection{Matrix Sparsification and Importance Sampling}

Given an input matrix $\A$, element-wise matrix sparsification seeks to select (and rescale) a small set of elements from $\A$ and produce a sparse sketch $\tA$, that can serve as a good proxy for $\A$.
Pioneered by \cite{achlioptas2007fast}, previous research has been dedicated to developing various sampling frameworks and probabilities to construct an effective $\tA$ \citep{arora2006fast, candes2010power, drineas2011note, achlioptas2013near, chen2014coherent, gupta2018exploiting}.
Finding such a matrix $\tA$ can not only be used to accelerate matrix operations \citep{drineas2006fast, mahoney2011randomized, gupta2018exploiting, li2023efficient}, but also has broad applications in recovering data with missing features, and preserving privacy when the data cannot be fully observed \citep{kundu2017recovering}.

In this study, we implement the matrix sparsification via the Poisson sampling framework following the recent work of \cite{braverman2021near}. 
Poisson sampling looks at each element and determines whether to include it in the subsample according to a specific probability independently.
Compared to the other commonly used subsampling technique, sampling with replacement, Poisson sampling has a higher approximation accuracy in some situations and is more convenient to implement in distributed systems; see \cite{wang2021comparative} for a comprehensive comparison.

The key to success is how to construct an effective $\tK$ that leads to an asymptotically unbiased solution with a relatively small variance.
To achieve the goal, we develop sampling probabilities based on the idea behind importance sampling, which is widely used for variance-reduction in numerical integration \citep{liu1996metropolized,liu2008monte}.
The importance sampling technique can be described as follows: to approximate the summation $\mu = \sum_{i=1}^N f_i$ with $f_i\ge 0$, we assign each $i \in [N]$ a probability $q_{i} \geq 0$ such that $\sum_{i=1}^{N} q_{i}=1$, and then sample a subset of size $s(<N)$, $\{i_t\}_{t=1}^s$, from $[N]$ based on the probabilities $\{q_{i}\}_{i=1}^N$.
The summation then can be approximated by $\mu\approx\sum_{t=1}^s f_{i_t}/(sq_{i_t})$.
\cite{kahn1953methods} showed when $f_i$'s are known, the optimal sampling probability $q_i$ in terms of variance-reduction is proportional to $f_i$ \citep[Chap.~9]{owen2013monte}.
Despite the effectiveness, such a strategy is not feasible when the values of $f_i$ are unknown or computationally expensive.
Instead, a popular surrogate is using a proper upper bound of $f_i$, denoted by $q_i'$ ($i \in [N]$), as the (un-normalized) sampling probability, such that a higher value of $f_i$ is associated with a larger value of $q_i'$ \citep{owen2013monte, zhao2015stochastic, katharopoulos2018not}.

Following this line of thinking, we reveal a natural upper bound for the elements in the unknown optimal transport plan, and such an upper bound could be used to construct the sampling probability.

\subsection{Importance Sparsification for OT Problems}\label{sec:sparsink-ot}

Recall that our goal is to approximate the entropic OT ``distance''\footnote{Considering its distance-like properties, we employ the term ``distance'' for terminological consistency, despite the fact that the (entropic) OT or UOT distance is not a proper distance.}
\begin{equation}\label{approx-rot}
\OT_\eps(\a, \b) = \langle \T_\eps^\ast,\C \rangle - \eps H(\T_\eps^\ast),  
\end{equation}
where $\C$ is a given cost matrix and $\T_\eps^\ast$ is the unique solution to \eqref{eq:rot}.
To accelerate the Sinkhorn algorithm (i.e., Algorithm~\ref{alg:sink-ot}, illustrated in the left panel of Fig.~\ref{fig:sparsink}), we propose to construct a sparse sketch $\tK$ from $\K$, as shown in the right panel of Fig.~\ref{fig:sparsink}, and compute sparse matrix-vector multiplications, i.e., $\tK \v$ and $\tK^{\top} \u$, in each iteration.
According to the principle of Poisson sampling, the $\tK$ is formulated as follows: given a subsampling parameter $s < n^2$ and a set of sampling probabilities $\{p_{ij}\}_{(i,j)\in [n]\times [n]}$ such that $\sum_{i,j}p_{ij} = 1$, we construct $\tK$ by selecting and rescaling a small fraction of elements from $\K$ and zeroing out the remaining elements, i.e.,
\begin{align}\label{eq:pois}
\widetilde{K}_{i j}= \begin{cases} K_{i j}/p^\ast_{i j} & \text { with prob. } p^\ast_{i j}=\min \left(1, s p_{i j}\right) \\
0 & \text { otherwise. }\end{cases}  
\end{align}
The rescaling factor $p_{ij}^*$ ensures that the sparsified kernel matrix $\tK$ is unbiased w.r.t. $\K$.
Note that 
$\mathbb{E}\{\nnz(\tK)\}=\sum_{i,j} p^\ast_{i j} \leq s \sum_{i,j} p_{i j} = s$,
where $\nnz(\cdot)$ denotes the number of non-zero elements. 
Such an inequality indicates that $s$ is an upper bound of the expected number of non-zero elements in $\tK$.

\begin{figure}[!t]
    \centering
    \includegraphics[width=0.65\linewidth]{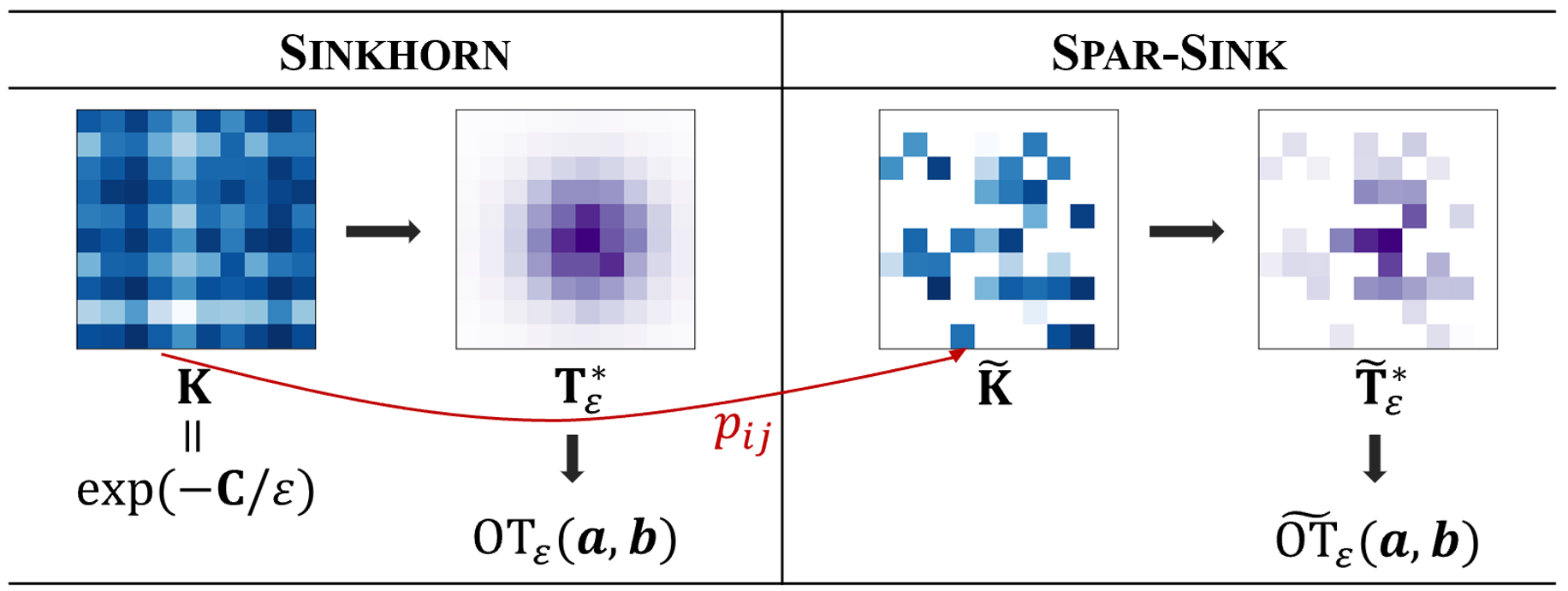}
    \caption{An illustration of the Sinkhorn algorithm (Left panel) and our \textsc{Spar-Sink} method (Right panel). The non-zero elements of each matrix are labeled with colors.}
    \label{fig:sparsink}
\end{figure}

Consider the sampling probabilities $p_{ij}$.
Note that the transportation loss in \eqref{approx-rot} can be written as a summation 
\begin{equation}\label{eq:ot-sum}
\langle \T_\eps^\ast,\C \rangle =\sum_{i,j} {(T_\eps^\ast)}_{ij} C_{ij}.
\end{equation}
According to~\eqref{eq:rot-solu}, $\T_\eps^\ast$ and $\K$ enjoy the same sparsity structure, that is, ${(T_\eps^\ast)}_{ij} = 0$ if $K_{ij} = 0$, as shown in Fig.~\ref{fig:sparsink}.
Thus, sampling elements from $\K$ is equivalent to sampling the corresponding terms from the summation~\eqref{eq:ot-sum}.
Following the idea of importance sampling, the optimal sampling probability $p_{ij}^+$ for $K_{ij}$ should be proportional to ${(T_\eps^\ast)}_{ij} C_{ij}$ from the perspective of variance-reduction.
However, ${(T_\eps^\ast)}_{ij}$ is unknown beforehand, and thus $p_{ij}^+$ is impractical.
Fortunately, there exists a natural upper bound for such a sampling probability.
Based on the marginal constraints on $\T_\eps^\ast$, we have ${(T_\eps^\ast)}_{ij} \le a_i$ and ${(T_\eps^\ast)}_{ij} \le b_j$.
Moreover, we focus on the general scenario where the ground cost between supports is bounded, i.e., $C_{ij} \le c_0$ for some constant $c_0>0$.
Therefore, we have the upper bound 
$${(T_\eps^\ast)}_{ij} C_{ij} \le c_0 \sqrt{a_i b_j}.$$
Such an inequality motivates us to use the sampling probability
\begin{equation}\label{eq:pij-ot}
p_{ij} =\frac{\sqrt{a_i b_j}}{\sum_{1\leq i,j \leq n} \sqrt{a_i b_j}}, \quad 1\leq i,j\leq n.
\end{equation}
Algorithm~\ref{alg:core-ot} summarizes the proposed algorithm for OT problems.

\begin{algorithm}
\caption{\textsc{Spar-Sink} algorithm for OT}
\begin{algorithmic}[1]
\State {\bf Input:}
$\a, \b \in\mathbb{R}_{+}^{n}$, $\K\in\mathbb{R}_{+}^{n \times n}$, $0<s<n^2$, $\eps,\delta > 0$
\State Construct $\tK$ according to~\eqref{eq:pois} and~\eqref{eq:pij-ot}
\State Compute $\tT_\eps^\ast = \textsc{SinkhornOT}(\tK, \a, \b, \delta)$ by using Algorithm~\ref{alg:sink-ot}
\State {\bf Output:} $\widetilde{\OT}_\eps(\a, \b) = \langle \tT_\eps^\ast, \C \rangle - \eps H(\tT_\eps^\ast)$
\end{algorithmic}
\label{alg:core-ot}
\end{algorithm}

\subsection{Importance Sparsification for UOT Problems}\label{sec:uot3.2}

For unbalanced problems, we aim to approximate the entropic UOT distance
\begin{equation}\label{approx-ruot}
\UOT_{\lam,\eps}(\a, \b) = \langle \T_{\lam,\eps}^\ast,\C \rangle + \lam \mathrm{KL}(\T_{\lam,\eps}^\ast \mathbf{1}_{n} \| \a) + \lam \mathrm{KL} (\T_{\lam,\eps}^{\ast\top} \mathbf{1}_{n} \| \b) - \eps H(\T_{\lam,\eps}^\ast),
\end{equation}
where $\T_{\lam,\eps}^\ast$ is the unique solution to~\eqref{eq:ruot}.
Again, we define $\u^\ast, \v^\ast$ as the convergent scaling factors in Algorithm~\ref{alg:sink-uot}, such that $\T_{\lam,\eps}^\ast = \diag(\u^\ast) \K \diag(\v^\ast)$.

Similar to the former subsection, we apply the element-wise Poisson sampling to get an unbiased sparsification of $\K$.
The formulation of $\tK$ is the same as the one in~\eqref{eq:pois}; however, the marginal constraints no longer hold, and thus the sampling probability differs.

Recall that our goal is to find an upper bound of $(T_{\lam,\eps}^\ast)_{ij} C_{ij}$.
According to the iteration steps in Algorithm~\ref{alg:sink-uot}, i.e., 
$(u_i^\ast)^{(\lam+\eps)/\lam} (\sum_{j=1}^{n} K_{ij} v_j^\ast) = a_{i}$ and $(v_j^\ast)^{(\lam+\eps)/\lam} (\sum_{i=1}^{n} K_{ij} u_i^\ast) = b_{j}$, we have
$$
(u_i^\ast)^{\frac{\lam+\eps}{\lam}} K_{ij} v_j^\ast \le a_i, \quad u_i^\ast K_{ij} (v_j^\ast)^{\frac{\lam+\eps}{\lam}} \le b_j \quad \Rightarrow \quad (u_i^\ast)^{\frac{2\lam+\eps}{\lam}} K_{ij}^2 (v_j^\ast)^{\frac{2\lam+\eps}{\lam}} \le a_i b_j
$$
because scaling factors $\u^\ast, \v^\ast$ are non-negative. This follows that
$$ \left(T_{\lam,\eps}^\ast\right)_{ij} = u_i^\ast K_{ij} v_j^\ast \le \left(a_i b_j\right)^{\frac{\lam}{2\lam+\eps}} K_{ij}^{\frac{\eps}{2\lam+\eps}}.$$
Under the scenario that $C_{ij}\leq c_0$, such an upper bound motivates us to sample with the probability
\begin{equation}\label{eq:pij-uot}
p_{ij} =\frac{(a_i b_j)^{\frac{\lam}{2\lam+\eps}} K_{ij}^{\frac{\eps}{2\lam+\eps}}}{\sum_{1\leq i,j \leq n} (a_i b_j)^{\frac{\lam}{2\lam+\eps}} K_{ij}^{\frac{\eps}{2\lam+\eps}}}, \quad 1\leq i,j\leq n.
\end{equation}
Note that when $\lam \to \infty$, the sampling probability $p_{ij}$ defined by~\eqref{eq:pij-uot} degenerates to the one defined in~\eqref{eq:pij-ot}.
This is consistent with the fact that Algorithm~\ref{alg:sink-uot} degenerates to Algorithm~\ref{alg:sink-ot} when $\lam \to \infty$.
Algorithm~\ref{alg:core-uot} summarizes the proposed algorithm for UOT problems.

\begin{algorithm}
\caption{\textsc{Spar-Sink} algorithm for UOT}
\begin{algorithmic}[1]
\State {\bf Input:}
$\a, \b \in\mathbb{R}_{+}^{n}$, $\K\in\mathbb{R}_{+}^{n \times n}$, $0<s<n^2$, $\lam, \eps, \delta > 0$
\State Construct $\tK$ according to~\eqref{eq:pois} and~\eqref{eq:pij-uot}
\State Compute $\tT_{\lam,\eps}^\ast = \textsc{SinkhornUOT}(\tK, \a, \b, \lam, \eps, \delta)$ by using Algorithm~\ref{alg:sink-uot}
\State {\bf Output:} $\widetilde{\UOT}_{\lam,\eps}(\a, \b) = \langle \tT_{\lam,\eps}^\ast, \C \rangle + \lam \mathrm{KL}(\tT_{\lam,\eps}^\ast \mathbf{1}_{n} \| \a) +\lambda \mathrm{KL} (\tT_{\lam,\eps}^{\ast\top} \mathbf{1}_{n} \| \b) - \eps H(\tT_{\lam,\eps}^\ast)$
\end{algorithmic}
\label{alg:core-uot}
\end{algorithm}

In this study, we further extend the \textsc{Spar-Sink} approach to approximate Wasserstein barycenters, by noticing that our importance sparsification mechanism is also applicable for accelerating the iterative Bregman projection algorithm \citep{benamou2015iterative}.
Details for this extension are provided in the Appendix.

\section{Theoretical Results}\label{sec:theory}

This section shows that the proposed estimators w.r.t. entropic OT and UOT distances are consistent under certain regularity conditions.
All the proofs are detailed in the Appendix.
Without loss of generality, we assume the supports of $\a$ and $\b$ are identical\footnote{This is because if $\a$ and $\b$ have two non-overlapping supports, denoted by $\{\x_i\}_{i=1}^n$ and $\{\y_i\}_{i=1}^n$, respectively, one can construct the measures $\tilde{\a},\tilde{\b}\in\Delta^{2n}$, such that $\tilde{\a}=(a_1,\ldots,a_n,0,\ldots,0)$, $\tilde{\b}=(0,\ldots,0,b_1,\ldots,b_n)$.
The measures $\tilde{\a}$ and $\tilde{\b}$ thus share the same support $\{\x_1,\ldots,\x_n,\y_1,\ldots,\y_n\}$.} and $\eps$ is relatively small. Then, the cost matrix $\C$ is symmetric, and the resulting kernel matrix $\K=\exp(-\C/\eps)$ is positive definite.

\begin{theorem}\label{thm:sot}
Suppose the following conditions hold: 
(i) $\|\K\|_2 \ge n^\alpha/c_1$ for some constants $1/2 < \alpha \leq 1$ and $c_1>0$, and the condition number of $\K$ is bounded by $c_2>0$; 
(ii) $p_{ij}^*\ge c_3 s/n^2$ for some constant $c_3>0$; 
(iii) $s\ge c_4 n^{3-2\alpha} \log^4(2n)$, where $c_4=8/(c_3\log^4(1+\epsilon))$ and $\epsilon>0$; and 
(iv) there exist constants $c_a,c_{\alpha}>0$ such that
$n\|\a\|_\infty\le c_a$, $\operatorname{osc}(\boldsymbol{\alpha}^*)\le c_{\alpha}\eps$, $\operatorname{osc}(\bar{\boldsymbol{\alpha}})\le c_{\alpha}\eps$,
where $\operatorname{osc}(\mathbf{x}) := \max_i x_i-\min_i x_i$, and $(\boldsymbol{\alpha}^*,\boldsymbol{\beta}^*)$ and $(\bar{\boldsymbol{\alpha}},\bar{\boldsymbol{\beta}})$ denote the optimal dual potentials for the entropic OT problem and its sparsified counterpart, respectively; see \eqref{eq:dual-ot} and \eqref{eq:dual-ot-spar}.
Then, as $n\to\infty$, the following result holds with probability approaching one:
\begin{equation}\label{eq:ot-thm-ub}
\widetilde{\OT}_\eps(\a, \b) \le \OT_\eps(\a, \b) +  c_5 \eps \sqrt{n^{3-2\alpha}/s},
\end{equation}
where $c_5>0$ is a constant depending on $c_1$, $c_2$, $c_3$, $c_4$, $c_a$, and $c_{\alpha}$ only.
\end{theorem}

We discuss the regularity conditions in Theorem~\ref{thm:sot}.
Condition~(i) is naturally established when $\eps$ is relatively small.
Indeed, non-diagonal entries of $\K$ go to zero quickly as the cost or distance increases, thus yielding a numerically sparse kernel matrix with a diagonal-like structure.
Condition~(ii) requires $p_{ij}$ to be of the order $O(1/n^2)$, which can always be satisfied by combining the proposed sampling probability and uniform sampling probability linearly.
Such a shrinkage strategy is common in subsampling literature, and we refer to \cite{ma2015statistical} and \cite{yu2022optimal} for more discussion.
Condition~(iii) implies that the subsample size should be large enough, especially when the signal in $\K$ is weak (i.e., $\alpha$ is small). Under the condition~(iii), the upper bound of approximation error in \eqref{eq:ot-thm-ub} tends to zero in probability, which leads to the consistency of $\widetilde{\OT}_\eps(\a, \b)$ w.r.t. the entropic OT distance. 
Condition~(iv) imposes a mild marginal regularity together with bounded oscillations of the optimal dual potential.
Moreover, consider a general case that $\|\K\|_2 = O(n)$, i.e., $\alpha = 1$, condition~(iii) indicates us to select $s=\tO(n)$ elements to construct the sparse sketch.

To analyze the UOT problem, we first rescale the source measure such that $\bm a \in \Delta^{n-1}$. Then, the following result holds.

\begin{theorem}\label{thm:suot}
    Suppose the regularity conditions (i)---(iii) in Theorem~\ref{thm:sot} hold. Also suppose that: (iv) there exist constants $c_a,c_{u,\alpha}>0$ such that
    $n\|\a\|_\infty\le c_a$, $\|\boldsymbol{\alpha}^*\|_\infty\le c_{u,\alpha}\eps$, $\|\bar{\boldsymbol{\alpha}}\|_\infty\le c_{u,\alpha}\eps$,
    where $(\boldsymbol{\alpha}^*,\boldsymbol{\beta}^*)$ and $(\bar{\boldsymbol{\alpha}},\bar{\boldsymbol{\beta}})$ denote the optimal dual potentials for the entropic UOT problem and its sparsified counterpart, respectively; see \eqref{eq:dual-uot} and \eqref{eq:dual-uot-spar}; (v) $\varepsilon/\lambda \leq c_6$ for some constant $c_6>0$. As $n\to\infty$, the following result holds with probability approaching one that
	\begin{equation}\label{eq:uot-thm-ub}
	\widetilde{\UOT}_{\lam,\eps} (\a, \b) \le \UOT_{\lam,\eps} (\a, \b)
	+c_8 \varepsilon \sqrt{n^{3-2\alpha}/s},
	\end{equation}
	where $c_8>0$ is a constant depending on $c_1$, $c_2$, $c_3$, $c_4$, $c_a$, $c_{u,\alpha}$, and $c_6$ only.
\end{theorem}

Consider the additional conditions in Theorem~\ref{thm:suot}.
Condition~(iv) imposes a mild marginal regularity together with bounded UOT dual potentials.
Condition~(v) naturally holds when $\eps=O(\lambda)$.
Theorem~\ref{thm:suot} shows the consistency of $\widetilde{\UOT}_{\lam,\eps} (\a, \b)$ w.r.t. the entropic UOT distance, and also requires $s$ to be at least of the order $\tO(n)$.

The following theorem shows that the proposed \textsc{Spar-Sink} algorithm has the same number of iteration bound as the classical Sinkhorn algorithm up to a constant, for both OT and UOT problems. This result is a straightforward extension of the iteration bounds presented in \cite{altschuler2017near} and \cite{pham2020unbalanced}.

\begin{theorem}\label{thm:time}
Suppose the Sinkhorn algorithm and \textsc{Spar-Sink} algorithm have the same settings of parameters. 
Under the conditions of Theorem~\ref{thm:sot} (resp. Theorem~\ref{thm:suot}), both Algorithm~\ref{alg:sink-ot} and Algorithm~\ref{alg:core-ot} (resp. Algorithm~\ref{alg:sink-uot} and Algorithm~\ref{alg:core-uot}) converge approximately within the same order of iterations in probability.
\end{theorem}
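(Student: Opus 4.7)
The plan is to invoke the iteration-complexity analyses of \cite{altschuler2017near} for OT Sinkhorn and \cite{pham2020unbalanced} for UOT Sinkhorn, and then verify that the quantities on which those bounds depend change by at most a polylogarithmic factor when the kernel matrix $\K$ is replaced by its sparsified surrogate $\tK$. Since Algorithm~\ref{alg:core-ot} and Algorithm~\ref{alg:core-uot} are, by construction, the classical Sinkhorn algorithm applied to $(\tK,\a,\b,\ldots)$, the same proofs apply verbatim once $\K$ is replaced by $\tK$ in all kernel-dependent quantities.

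First I would record the bound of \cite{altschuler2017near} for OT, which takes the form $L_{\mathrm{Sink}}=O(R/\delta)$ where $R$ is a logarithmic quantity involving $\log(\|\a\|_\infty\vee\|\b\|_\infty)$ and $\log(1/\min_{ij}K_{ij})$. Because the non-zero entries of $\tK$ take the value $K_{ij}/p^*_{ij}$ and condition~(ii) of Theorem~\ref{thm:sot} gives $p^*_{ij}\ge c_3 s/n^2$, we have $|\log\tK_{ij}|\le|\log K_{ij}|+\log(n^2/(c_3 s))$. Condition~(iii) combined with $s=\tO(n)$ then implies $\log(n^2/(c_3 s))=O(\log n)$, so the analogous quantity $\widetilde R$ for $\tK$ satisfies $\widetilde R\le R+O(\log n)$. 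Hence the iteration bound for Algorithm~\ref{alg:core-ot} is of the same order as that for Algorithm~\ref{alg:sink-ot}. The UOT case follows the same template using the bound of \cite{pham2020unbalanced}, whose dependence on the kernel is again through the logarithms of its non-zero entries.

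Before closing the argument I would verify that Algorithm~\ref{alg:sink-ot} (respectively Algorithm~\ref{alg:sink-uot}) is well-defined when run on $\tK$: every row and column of $\tK$ must contain at least one non-zero entry so that the divisions in the iterations are finite. Under condition~(iii) the expected number of non-zeros per row is at least $c_3 c_4\log^4(2n)$, and a standard Chernoff bound together with a union bound over the $n$ rows and $n$ columns shows that all row and column sums are positive with probability $1-o(1)$.

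The main obstacle is to keep track of the precise logarithmic quantity that enters the iteration bound, particularly in the UOT setting where the sampling probability \eqref{eq:pij-uot} involves a fractional power of $K_{ij}$. Verifying that the rescaling factor $1/p^*_{ij}$ inflates the relevant logarithm only by an $O(\log n)$ term requires using condition~(iv) to bound the exponent $\eps/(2\lam+\eps)$ and to keep $\lam+\eps$ of the same order as $\lam$, and using conditions~(iii) and~(v) to control the normalizing denominator of $p_{ij}$. Once these bounds are in place, Theorem~\ref{thm:time} follows by direct substitution into the iteration bounds of \cite{altschuler2017near} and \cite{pham2020unbalanced}.
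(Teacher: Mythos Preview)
Your high-level strategy—invoke the iteration bounds of \cite{altschuler2017near} and \cite{pham2020unbalanced} and then check that the kernel-dependent quantities change only by an $O(\log n)$ term when $\K$ is replaced by $\tK$—is sound, and for the UOT half it coincides with the paper's argument: the paper proves a dedicated lemma (Lemma~\ref{lem:uot-time}) showing that the dual-variable bound $R$ from \cite{pham2020unbalanced} is replaced by $R'=R+O(\log n)$ for the sparsified kernel, using exactly the observation $|\log\tK_{ij}|\le|\log K_{ij}|+\log(n^2/(c_3 s))$ on the nonzero entries that you describe, together with condition~(iii).

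For the OT half, however, the paper takes a different and more economical route. Rather than bounding $\log(1/\min_{ij}\tK_{ij})$ directly, the paper writes the Altschuler et al.\ potential bound in the form $L_2\le 4\delta^{-2}\bigl(\widetilde{\OT}_\eps(\a,\b)-f(\bm 0,\bm 0)\bigr)$ and then invokes Theorem~\ref{thm:sot} to get $\widetilde{\OT}_\eps(\a,\b)=\OT_\eps(\a,\b)+o_P(1)$; since $\OT_\eps(\a,\b)-f(\bm 0,\bm 0)\le\log(q/l)$ with $q=\sum_{ij}K_{ij}$ and $l=\min_{ij}K_{ij}$, the bound for $L_2$ inherits the same order as $L_1$ without ever re-examining the entries of $\tK$. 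Your direct route should also work, but it requires carrying the Altschuler et al.\ dual-variable bound through a kernel with many zero entries (the quantity $\min_{ij}\tK_{ij}$ is zero, so one must restrict to the support and verify the argument still closes), whereas the paper sidesteps this by reusing its main consistency theorem. Two minor points: the tolerance dependence in the paper is $\delta^{-2}$, not $\delta^{-1}$; and your well-definedness check that every row and column of $\tK$ is nonempty via a Chernoff/union bound is a reasonable addition that the paper leaves implicit.
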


\section{Simulations}\label{sec:simu}

In this section, we evaluate the performance of our proposed method (\textsc{Spar-Sink}) in both OT and UOT problems using synthetic data sets.
We compare \textsc{Spar-Sink} with state-of-the-art variants of Sinkhorn regarding approximation accuracy and computational time, including: (i) \textsc{Greenkhorn} \citep{altschuler2017near}; (ii) \textsc{Screenkhorn} \citep{alaya2019screening}; (iii) \textsc{Nys-Sink} \citep{altschuler2019massively}; (iv) the naive random element-wise subsampling method in the Sinkhorn algorithm (\textsc{Rand-Sink}), which is similar to the proposed \textsc{Spar-Sink} method, except that the sampling probabilities for all the elements are equal to each other.

We set the stopping threshold $\delta = 10^{-6}$ for all the algorithms considered in the experiments. The maximum number of iterations is set to be $5n$ for \textsc{Greenkhorn} and to be $10^3$ for all other methods. The decimation factor in \textsc{Screenkhorn} is taken as $3$. Other parameters are set by default according to the Python Optimal Transport toolbox \citep{flamary2021pot}.
All experiments are implemented on a server with 251GB RAM, 64 cores Intel(R) Xeon(R) Gold 5218 CPU and 4 GeForce RTX 3090 GPU.
The implementation code is available at this link: \url{https://github.com/Mengyu8042/Spar-Sink}.

\subsection{Approximation Performance}\label{sec:simu-subsec1}

For the OT problem, the goal is to estimate the entropic OT distance between two empirical probability measures $\a, \b \in \Delta^{n-1}$, i.e., $\OT_\eps(\a,\b)$ defined in~\eqref{eq:rot}.
These two measures share the same support points $\{\x_i\}_{i=1}^n$, where $\x_i\in\RR^d$, $n=10^3$ and $d \in \{5, 10, 20, 50\}$.
We use the squared Euclidean cost matrix $\C$ such that $C_{ij} = \|\x_i - \x_j\|_2^2$ for $1\leq i,j\leq n$, and we take $\eps \in \{10^{-1},10^{-2},10^{-3}\}$.
In addition, we consider three scenarios for generating $\a, \b$ and $\{\x_i\}_{i=1}^n$ as follows:

\begin{itemize}
\item[\textbf{C1.}] $\a, \b$ are empirical Gaussian distributions $N(\frac{1}{3},\frac{1}{20})$ and $N(\frac{1}{2},\frac{1}{20})$, respectively; $\x_i$'s are generated from multivariate uniform distribution over $(0,1)^d$, i.e., $\x_i \sim U(0,1)^d$; 
\item[\textbf{C2.}] $\a, \b$ are same to those in \textbf{C1}; $\x_i$'s are generated from multivariate Gaussian distribution, i.e., $\x_i \sim N(\mathbf{0}_{d}, \mathbf{\Sigma})$ with $\Sigma_{jk}=0.5^{|j-k|}$ for $(j,k)\in [d]\times [d]$;
\item[\textbf{C3.}] $\a, \b$ are empirical t-distributions with 5 degrees of freedom $t_5(\frac{1}{3},\frac{1}{20})$ and $t_5(\frac{1}{2},\frac{1}{20})$, respectively; $\x_i$'s are same to those in \textbf{C1}.
\end{itemize}

We first compare the subsampling-based approaches: \textsc{Nys-Sink}, \textsc{Rand-Sink}, and \textsc{Spar-Sink} (i.e., Algorithm~\ref{alg:core-ot}).
For the \textsc{Rand-Sink} and \textsc{Spar-Sink} methods, we set the expected subsample size $s = \{2,2^{2},2^{3},2^{4}\}\times s_0(n)$ with $s_0(n) = 10^{-3} n \log^4(n)$, where $s_0(n)$ is set in the light of Theorem~\ref{thm:sot}.
For a fair comparison, we select $r = \lceil s/n \rceil$ columns in $\K$ for the \textsc{Nys-Sink} approach, such that the selected elements for the subsampling-based methods are roughly at the same size.
To compare the approximation performance, we calculate the empirical relative mean absolute error (RMAE) for each estimator based on 100 replications, i.e., 
$$
\operatorname{RMAE}^{(\OT)} = \frac{1}{100}\sum_{i=1}^{100} \frac{|\widetilde{\OT}_{\eps}^{(i)}-{\OT}_{\eps}^{(i)}|}{{\OT}_{\eps}^{(i)}},
$$
where $\widetilde{\OT}_{\eps}^{(i)}$ represents the estimator in the $i$th replication, and ${\OT}_{\eps}^{(i)}$ is calculated using the classical Sinkhorn algorithm (i.e., Algorithm~\ref{alg:sink-ot}).

The results of $\operatorname{RMAE}^{(\OT)}$ versus different subsample sizes $s$ are shown in Fig.~\ref{fig:simu-ot}. From Fig.~\ref{fig:simu-ot}, we observe that all the estimators result in smaller $\operatorname{RMAE}^{(\OT)}$ as $s$ increases, and the proposed \textsc{Spar-Sink} method consistently outperforms the competitors. 
We also observe that \textsc{Spar-Sink} decreases faster than others in most cases, which indicates the proposed method has a relatively high convergence rate.

\begin{figure}[!t]
    \centering
    \includegraphics[width=0.8\linewidth]{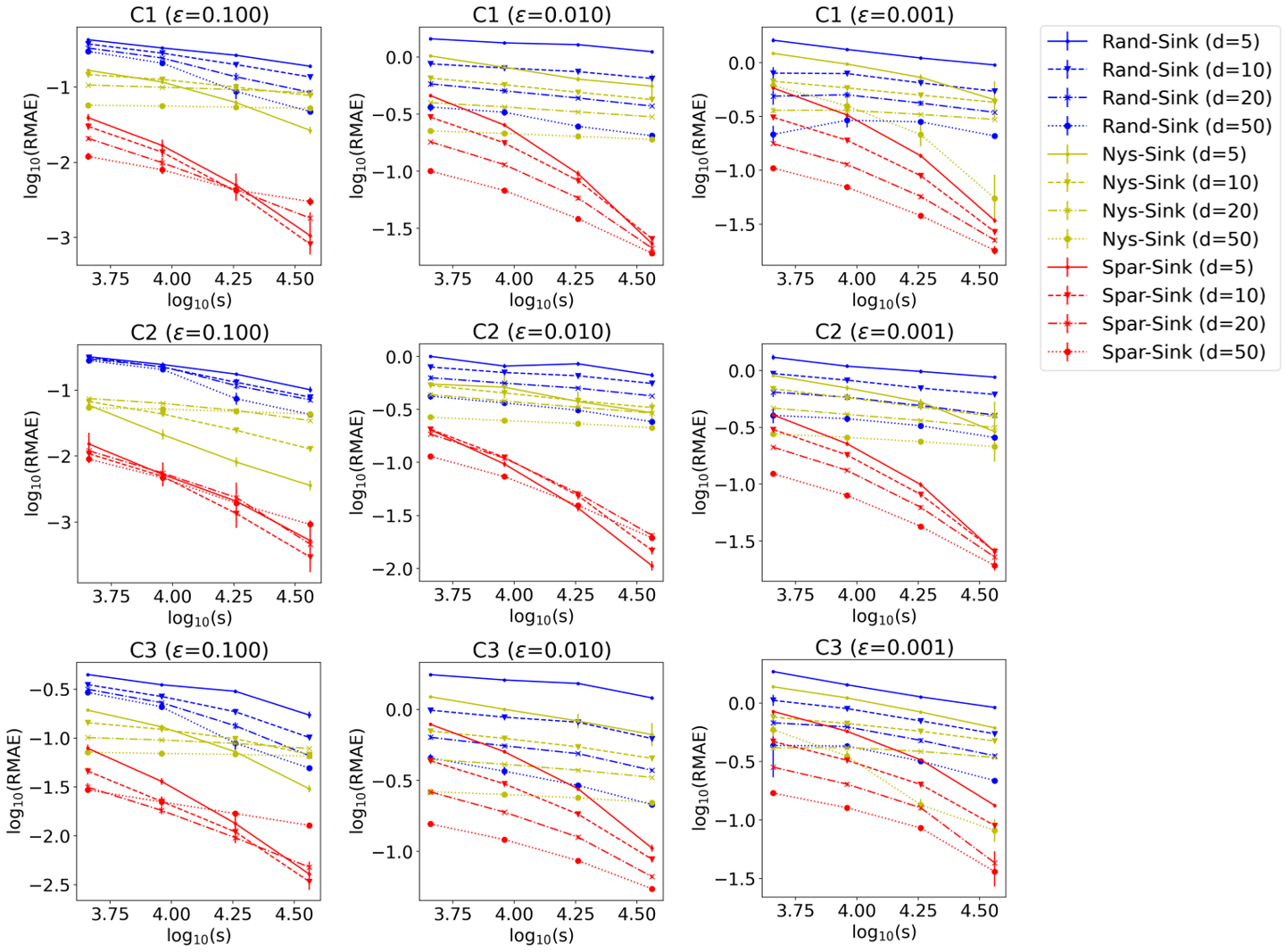}
    \caption{Comparison of subsampling-based methods w.r.t. $\operatorname{RMAE}^{(\OT)}$ versus increasing $s$ (in log-log scale). Each row represents a different data generation pattern (\textbf{C1---C3}), and each column represents a different $\eps$. Different methods are marked by different colors, respectively, and each line type represents a different dimension $d$. Vertical bars are the standard errors.}
    \label{fig:simu-ot}
\end{figure}

For the UOT problem shown in~\eqref{eq:ruot}, we set the total mass of $\a$ and $\b$ to be 5 and 3, respectively.
The regularization parameters are set to be $\eps=0.1$ and $\lam=0.1$. 
Other choices of parameters lead to similar results and are relegated to Appendix.
Empirical results show the performance of the proposed method is robust to these parameters.
The goal is to approximate the Wasserstein-Fisher-Rao distance, where the cost function is defined as $C_{ij} = -\log \left\{\cos _{+}^{2}\left(d_{ij}/(2\eta)\right)\right\}$ with Euclidean distance $d_{ij} = \|\x_i - \x_j\|_2$.
Recall that the parameter $\eta$ controls the sparsity level of the kernel matrix $\K$, and a smaller $\eta$ is associated with a sparser $\K$.
We take different values of $\eta$ such that there are around 70\%, 50\%, and 30\% non-zero elements in $\K$, and these scenarios are denoted by \textbf{R1}, \textbf{R2}, and \textbf{R3}, respectively. 
Other settings are the same as those in OT problems.

For comparison, we calculate the empirical RMAE of approximating ${\UOT}_{\lam,\eps}(\a, \b)$ based on 100 replications, i.e., $$
\operatorname{RMAE}^{(\UOT)} = \frac{1}{100} \sum_{i=1}^{100} \frac{|\widetilde{\UOT}_{\lam,\eps}^{(i)}-{\UOT}_{\lam,\eps}^{(i)}|}{{\UOT}_{\lam,\eps}^{(i)}},
$$
where $\widetilde{\UOT}_{\lam,\eps}^{(i)}$ represents the estimator in the $i$th replication, and ${\UOT}_{\lam,\eps}^{(i)}$ is calculated using the unbalanced Sinkhorn algorithm (i.e., Algorithm~\ref{alg:sink-uot}).
The results of $\operatorname{RMAE}^{(\UOT)}$ versus different $s$ are shown in Fig.~\ref{fig:simu-uot}, from which we observe that $\operatorname{RMAE}^{(\UOT)}$ of both \textsc{Rand-Sink} and \textsc{Nys-Sink} methods decrease slowly with the increase of $s$, while \textsc{Spar-Sink} converges much faster.
In general, the proposed \textsc{Spar-Sink} significantly outperforms the competitors under all circumstances. 
Such an observation indicates the proposed algorithm can select informative elements for the Sinkhorn algorithm, resulting in an asymptotically unbiased result with a relatively small estimation variance.

\begin{figure}[!t]
    \centering
    \includegraphics[width=0.8\linewidth]{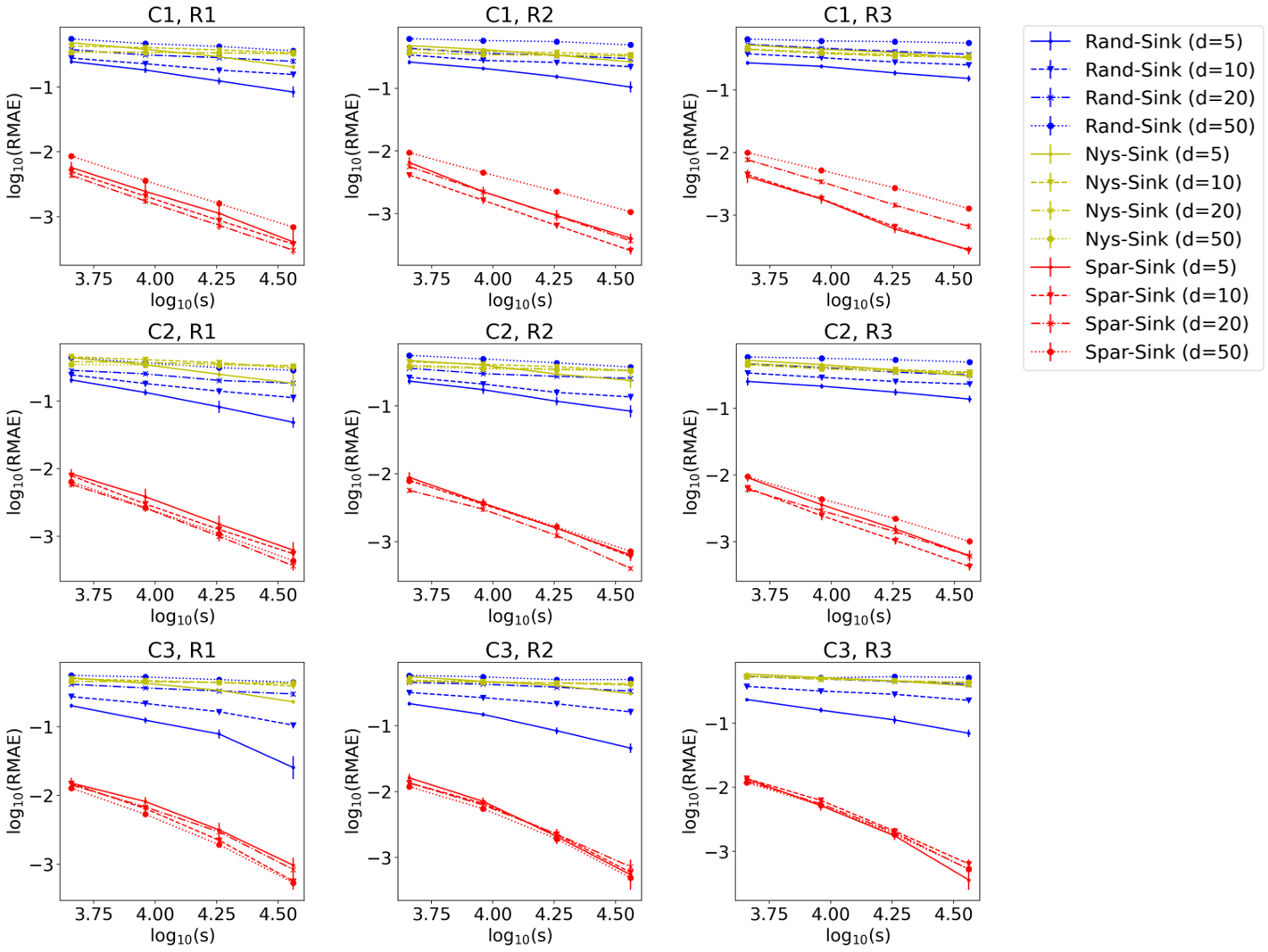}
    \caption{Comparison of subsampling-based methods w.r.t. $\operatorname{RMAE}^{(\UOT)}$ versus increasing $s$ (in log-log scale). Each row represents a different data generation pattern (\textbf{C1---C3}), and each column represents a different sparsity ratio (\textbf{R1---R3}). Different methods are marked by different colors, respectively, and each line type represents a different dimension $d$. Vertical bars are the standard errors.}
    \label{fig:simu-uot}
\end{figure}

We now include the methods without subsampling, \textsc{Greenkhorn} and \textsc{Screenkhorn}, to comparison and fix the subsample parameter as $s=8s_0(n)$ for above subsampling-based approaches. We show their $\operatorname{RMAE}^{(\OT)}$ versus increasing sample size $n$ under \textbf{C1} in Fig.~\ref{fig:simu-greenkhorn}, where $n\in\{2^2, 2^3, \ldots, 2^7\}\times 10^2$. 
We omit the result of \textsc{Screenkhorn} in the case of $\eps=10^{-3}$ as it fails to output a feasible solution when $\eps$ is relatively small in our setup. 
From Fig.~\ref{fig:simu-greenkhorn}, we observe the proposed \textsc{Spar-Sink} method yields comparable errors to \textsc{Greenkhorn} and \textsc{Screenkhorn} for a relatively large $\eps$, and its advantage turns prominent when $\eps$ becomes small. Additionally, the approximation error of \textsc{Spar-Sink} converges asymptotically as $n$ increases, which is consistent with Theorem~\ref{thm:sot}. We also show the convergence of $\operatorname{RMAE}^{(\UOT)}$ versus increasing $n$ in the Appendix, and the results are in good agreement with Theorem~\ref{thm:suot}.

\begin{figure}[!t]
    \centering
    \includegraphics[width=0.8\linewidth]{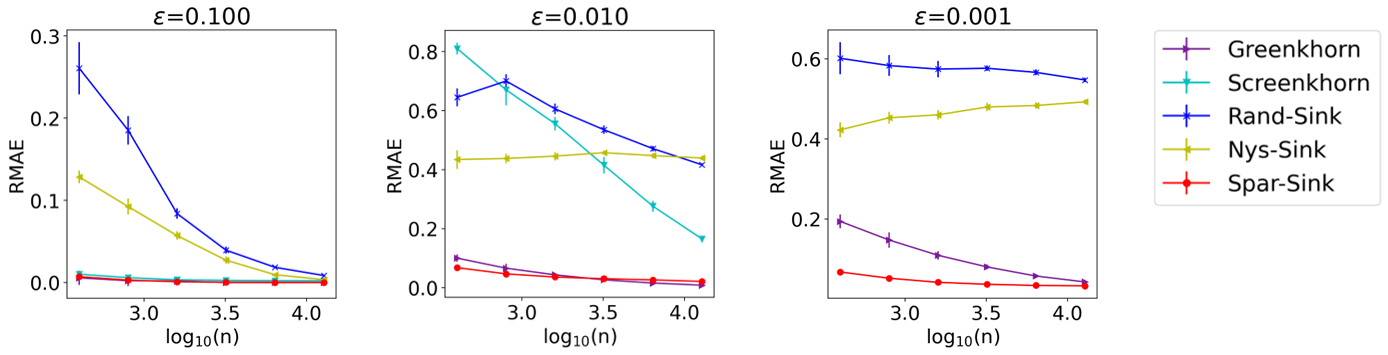}
    \caption{Comparison of different methods w.r.t. $\operatorname{RMAE}^{(\OT)}$ versus $\log_{10}(n)$ under \textbf{C1}. Each subfigure represents a different $\eps$, and each color marks a specific method. Vertical bars are the standard errors.}
    \label{fig:simu-greenkhorn}
\end{figure}

\subsection{Computational Cost and CPU Time}

Consider the computational cost of Algorithm~\ref{alg:core-ot}.
Constructing the sketch $\tK$ requires $O(n^2)$ time, and such a step can be naturally paralleled.
The matrix $\tK$ contains at most $s$ non-zero elements, and thus calculating $\tK \v$ and $\tK^{\top} \u$ takes $O(s)$ time.
Therefore, the overall computational cost of Algorithm~\ref{alg:core-ot} is at the order of $O(n^2 + Ls)$, which becomes $O(n^2 + Ln)$ when $s=\widetilde{O}(n)$.
Similarly, the computational cost of Algorithm~\ref{alg:core-uot} is at the order of $O(\nnz(\K) +Ln)$, where $\nnz(\cdot)$ denotes the number of non-zero elements.

We compare the CPU time of the classical Sinkhorn algorithm and the variants of Sinkhorn for both OT and UOT problems in Fig.~\ref{fig:simu-time}.
The \textsc{Rand-Sink} method has similar computing time to \textsc{Spar-Sink} and is omitted for clarity.
We choose $s = 8 s_0(n)$ for \textsc{Spar-Sink} and $r=\lceil s/n \rceil$ for \textsc{Nys-Sink} with $n\in\{2^3, 2^4, \ldots, 2^8\}\times 10^2$.

\begin{figure}[!t]
    \centering
    \subfigure[CPU time (in seconds) for estimating the Wasserstein distance under \textbf{C1}.]{
    \includegraphics[height=3.3cm]{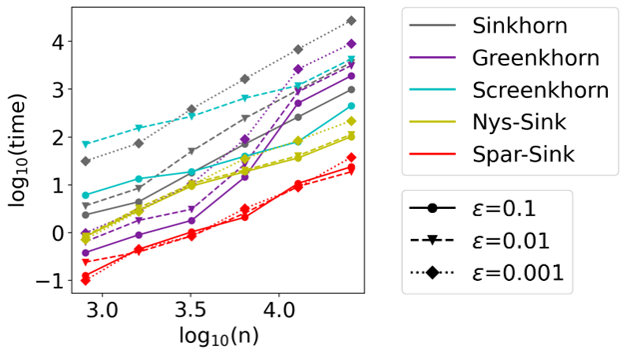}\label{fig:simu-ot-time}
    }
    \hspace{1cm}
    \subfigure[CPU time (in seconds) for estimating the WFR distance under \textbf{C1, R2}.]{
    \includegraphics[height=3.3cm]{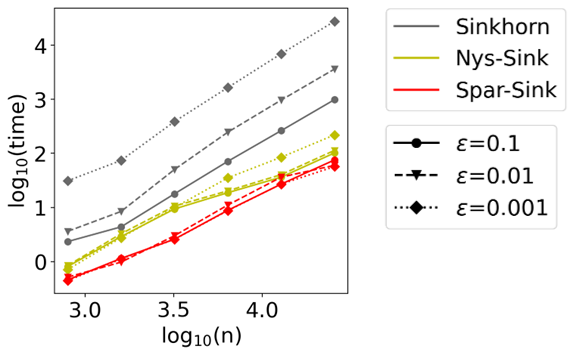}\label{fig:simu-uot-time}
    }
    \caption{Comparison of different methods w.r.t. computational time. Different methods are marked by different colors. Each line type represents a different value of $\eps$.}
    \label{fig:simu-time}
\end{figure}

In Fig.~\ref{fig:simu-time}, we observe that \textsc{Spar-Sink} speeds up the Sinkhorn algorithm hundreds of times and also computes much faster than \textsc{Greenkhorn} and \textsc{Screenkhorn}, especially when $n$ is large enough.
We also observe that a smaller value of $\eps$ leads to a longer CPU time for the Sinkhorn algorithm.
Such an observation is consistent with the results in \cite{altschuler2017near} and \cite{pham2020unbalanced}, which showed the number of iterations for Sinkhorn increases as $\eps$ decreases.
In contrast, the effect of $\eps$ on the CPU time is less significant for \textsc{Spar-Sink}. 
These observations indicate that the proposed algorithms are suitable for dealing with large-scale OT and UOT problems.

\section{Echocardiogram Analysis}\label{sec:real}

Echocardiography has been widely used to visualize myocardial motion due to its fast image acquisition, relatively low cost, and no side effects. 
Previous study has developed various echocardiology-based techniques to determine the ejection fraction \citep{ouyang2020video}, prognosticate cardiovascular disease \citep{zhang2021ensemble}, screen the cardiotoxicity \citep{bouhlel2020early}, among others.
One fundamental task in echocardiogram data analysis is cardiac circle identification, which is necessary and crucial for downstream analysis. 
The cardiac cycle is the performance of the human heart from the beginning of one heartbeat to the beginning of the next. 
A single cycle consists of two basic periods, diastole and systole \citep{fye2015caring}.
Owing to the variation in cardiac activity caused by changes in loading and cardiac conditions, it is recommended to consider multiple cycles rather than only one representative cycle to perform measurements.
However, this is not always done in clinical practice, given the tedious and laborious nature of human labeling.
To obviate the heavy work for cardiologists, we propose an optimal transport method to automatically identify and visualize multiple cardiac cycles.

We consider an echocardiogram videos data set \citep{ouyang2020video} containing 10,030 apical-four-chamber echocardiogram videos, each of which ranges from 24 to 1,002 frames with an average of 51 frames per second. 
A single frame is a gray-scale image of $112 \times 112$ pixels.
Each video is annotated with two separate time points representing the end-systole (ES) and the end-diastole (ED). 
Figure~\ref{fig:echo-video} gives a clip example of the echocardiogram videos data set.

\begin{figure}[!t]
    \centering
    \includegraphics[width=0.95\linewidth]{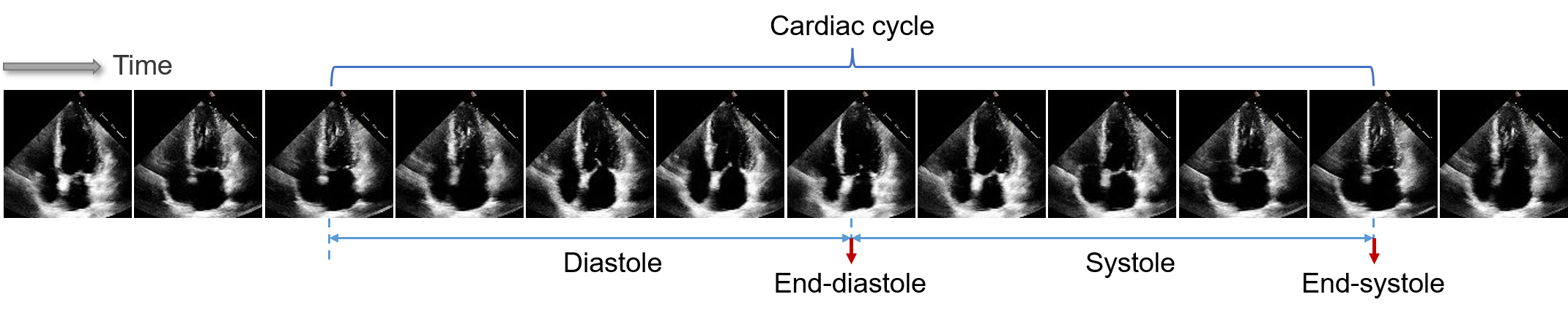}
    \caption{Echocardiogram videos data set visualization. Two basic periods, diastole and systole, form a cardiac cycle.}
    \label{fig:echo-video}
\end{figure}

We propose identifying cardiac cycles using pairwise distances between the frames in an echocardiogram video.
In particular, we use the normalized pixel gray levels of each frame as a mass distribution supported on $\RR^2$, such that a lighter color is associated with a larger mass.
We then use the Wasserstein-Fisher-Rao distance to measure the dissimilarity between each pair of frames.
Compared to the Wasserstein distance, the WFR distance prevents long-range mass transportation and thus can achieve a balance between global transportation and local truncation. 
Intuitively, such a distance is more consistent with the characteristics of myocardial motion that the cardiac muscle would not move largely.
To identify the cardiac cycles of an individual, we first compute the pairwise WFR distance matrix of his/her video, and then conduct a multidimensional scaling (MDS) for the distance matrix. 
However, computing the full WFR distance matrix using the classical Sinkhorn algorithm for a video of 200 frames requires nearly a hundred days.
To alleviate the computational burden, we sample every other two frames (sampling period of 3) and then use our proposed \textsc{Spar-Sink} algorithm to approximate the pairwise WFR distances of the downsampled videos.
The parameters are set to be $\eps = 0.01$, $\lambda=1$, $\eta = 15$, and $s=8s_0(n)$. Empirical results show the performance is not sensitive to these parameters.
Our CPU implementation requires only a few hours to calculate the distance matrix for one video.
Further acceleration using GPU implementation is left for future research.

Figure~\ref{fig:echo-cycle} visualizes the distance matrices and the MDS results w.r.t. three individuals, respectively.
Each dot in the MDS result represents a single frame, and the time points w.r.t. frames are denoted by different colors. 
By connecting the dots sequentially according to the time points, the cyclical nature of cardiac activities is clearly recovered.
Moreover, we can make a preliminary assessment of one's cardiac function from the pattern of these cardiac circles.
For instance, by comparing with the first individual from the control group, we can see that the circle size differs in different cycles for the third individual with arrhythmia.

\begin{figure}[!t]
    \centering
    \includegraphics[width=0.8\linewidth]{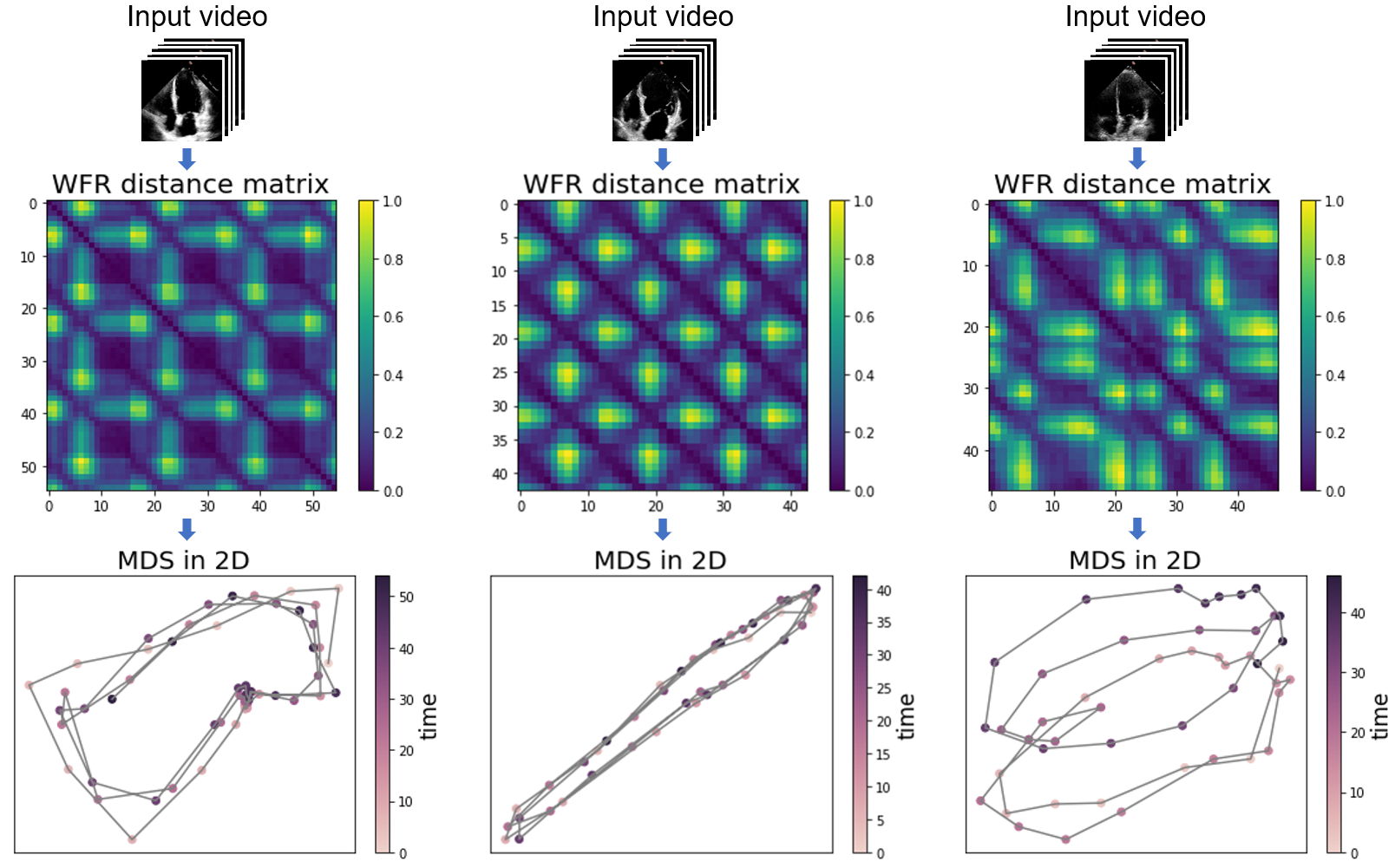}
    \caption{(From left to right) Each column is associated with an individual corresponding to a specific state of cardiac function, i.e., health, heart failure, and arrhythmia, respectively. (Top row) Input echocardiogram videos. (Middle row) Normalized WFR distance matrices computed by the \textsc{Spar-Sink} algorithm. (Bottom row) MDS in 2D: each point corresponds to a frame and is colored by the corresponding time iteration.}
    \label{fig:echo-cycle}
\end{figure}

Beyond the intuitive visualization aforementioned, we are also interested in the accuracy of cycle prediction. 
Therefore, we consider the task of ED time point prediction.
Specifically, for each video, we use the manually annotated ES and ED time points, $t_{ES}$ and $t_{ED}$, as the ground truth, and we aim to predict $t_{ED}$ using $t_{ES}$. 
Intuitively, in one cardiac cycle, the ED frame is the one that is most dissimilar to the ES frame.
Following this line of thinking, we calculate the WFR distances between the ES frame and the other frames, respectively, within one cardiac cycle, and the predicted ED frame is the one that yields the largest WFR distance.
After obtaining the prediction $\hat{t}_{ED}$, we define its error as 
$$\text{Error} = \left|1-\frac{\hat{t}_{E D}-t_{E S}}{t_{E D}-t_{E S}}\right|.$$
A smaller error implies the prediction is closer to the ground truth.

\begin{table}[t]
\centering
\label{tab:echo}
\resizebox{\textwidth}{!}{%
\begin{tabular}{ccccccc}
\multicolumn{7}{c}{(a) Original scale ($n=112\times 112$).}  \\ 
\hline
\multicolumn{1}{c}{}                &          & $s=s_0(n)$ & $s=2s_0(n)$ & $s=2^{2}s_0(n)$ & $s=2^{3}s_0(n)$  & $n^2$         \\
\hline
\multirow{2}{*}{\textsc{Nys-Sink}}  & Error    & 0.49$_{\pm 0.23}$         & 0.44$_{\pm 0.27}$         & 0.31$_{\pm 0.28}$         & 0.32$_{\pm 0.22}$          &   -            \\
                                    & Time     & 370.30             & 522.37             & 662.71             & 831.01              &   -            \\
\multirow{2}{*}{\textsc{Robust-NysSink}}  & Error    & 0.47$_{\pm 0.34}$         & 0.41$_{\pm 0.25}$         & 0.34$_{\pm 0.18}$         & 0.33$_{\pm 0.17}$          &   -            \\
                                    & Time     & 376.01             & 509.58             & 664.50            & 838.61              &   -            \\
\multirow{2}{*}{\textsc{Rand-Sink}} & Error    & 0.21$_{\pm 0.14}$         & 0.13$_{\pm 0.08}$         & 0.11$_{\pm 0.08}$         & 0.09$_{\pm 0.06}$          &   -            \\
                                    & Time     & 181.26             & 226.32             & 251.68             & 314.56              &   -            \\
\multirow{2}{*}{\textsc{Spar-Sink}} & Error    & \textbf{0.09}$_{\pm 0.06}$         & \textbf{0.07}$_{\pm 0.05}$         & \textbf{0.06}$_{\pm 0.05}$         & \textbf{0.06}$_{\pm 0.04}$          &   -            \\
                                    & Time     & 210.69             & 262.89             & 302.9              & 357.46              &   -            \\
\multirow{2}{*}{Sinkhorn}  & Error    &    -               &   -                &   -                &   -                 & 0.06$_{\pm 0.05}$  \\
                                    & Time     &    -               &   -                &   -                &   -                 & 15649.31      \\
\hline
\multicolumn{7}{c}{(b) Mean-pooling with $2 \times 2$ filters and stride 2 ($n=56\times 56$).}  \\ 
\hline
\multicolumn{1}{c}{}                &          & $s=s_0(n)$ & $s=2s_0(n)$ & $s=2^{2}s_0(n)$ & $s=2^{3}s_0(n)$  & $n^2$         \\
\hline
\multirow{2}{*}{\textsc{Nys-Sink}}  & Error    & 0.78$_{\pm 0.21}$         & 0.64$_{\pm 0.26}$         & 0.55$_{\pm 0.27}$         & 0.45$_{\pm 0.26}$          &   -           \\
                                    & Time     & 40.51              & 46.97              & 51.46              & 59.54               &   -           \\
\multirow{2}{*}{\textsc{Robust-NysSink}}  & Error    & 0.79$_{\pm 0.31}$         & 0.61$_{\pm 0.27}$         & 0.50$_{\pm 0.24}$         & 0.43$_{\pm 0.22}$          &   -           \\
                                    & Time     & 40.75              & 46.11              & 50.08              & 56.20               &   -           \\
\multirow{2}{*}{\textsc{Rand-Sink}} & Error    & 0.38$_{\pm 0.29}$         & 0.35$_{\pm 0.32}$         & 0.28$_{\pm 0.27}$         & 0.16$_{\pm 0.13}$          &   -           \\
                                    & Time     & 22.56              & 23.73              & 25.38              & 27.45               &   -           \\
\multirow{2}{*}{\textsc{Spar-Sink}} & Error    & \textbf{0.30}$_{\pm 0.21}$         & \textbf{0.14}$_{\pm 0.11}$         & \textbf{0.11}$_{\pm 0.09}$         & \textbf{0.11}$_{\pm 0.09}$          &   -           \\
                                    & Time     & 24.34              & 27.24              & 28.59              & 32.43               &   -           \\
\multirow{2}{*}{Sinkhorn}  & Error    &    -               &   -                &   -                &   -                 & 0.11$_{\pm 0.10}$  \\
                                    & Time     &    -               &   -                &   -                &   -                 & 668.81        \\
\hline                                    
\end{tabular}
}
\caption{Average errors (with standard deviations presented in footnotes) and CPU time (in seconds) for predicting the ED time point. }
\end{table}

We calculate the WFR distances using the Sinkhorn algorithm, as well as three subsampling algorithms, i.e., \textsc{Rand-Sink}, \textsc{Nys-Sink}, and the proposed \textsc{Spar-Sink} algorithm, under different subsample sizes. 
Considering the potential outliers in real data, we also include a robust variant of \textsc{Nys-Sink} (\textsc{Robust-NysSink}) proposed by \cite{le2021robust} for comparison.
The results for 100 randomly selected videos are reported in Table~\ref{tab:echo}. 
From panel~(a) in Table~\ref{tab:echo}, we observe that all these subsampling algorithms yield significantly less CPU time than the classical Sinkhorn algorithm.
In addition, \textsc{Spar-Sink} is as accurate as Sinkhorn, while \textsc{(Robust-)Nys-Sink} and \textsc{Rand-Sink} yield much larger errors. 

Another interesting question is how the proposed algorithm compares with pooling techniques, which are widely used in computer vision to accelerate computation \citep{boureau2010learning, gong2014multi, xu2022regularized}.
To answer this question, we reduce the size of the images from $112\times 112$ to $56\times 56$ using mean-pooling with $2\times 2$ filters and stride 2.
We then compute the WFR distances on these pooled images.
The results are provided in panel~(b) of Table~\ref{tab:echo}, from which we observe that all the algorithms require significantly less CPU time for the pooled images; however, the error increases.
Again, the proposed \textsc{Spar-Sink} algorithm is the only one that yields the same error as the Sinkhorn algorithm.
We also observe that compared to the Sinkhorn algorithm for pooled images (i.e., Sinkhorn in panel~b), the \textsc{Spar-Sink} for original images (i.e., \textsc{Spar-Sink} in panel~a) requires a shorter CPU time and yields more minor errors.
Such an observation indicates that the proposed algorithm could be a better alternative for pooling strategy when calculating transport distances between large-scale images. 
In addition, one can also combine the pooling strategy with the proposed algorithm to further reduce CPU time without loss of accuracy.

Besides the application in echocardiogram analysis, we evaluate the \textsc{Spar-Sink} approach in two common machine learning applications: color transfer and generative modeling. The experimental results are presented in the Appendix, which provides additional evidence of the effectiveness of our proposed method.

\section{Concluding Remarks}\label{sec:conclude}

Realizing the natural upper bounds for unknown transport plans in (unbalanced) optimal transport problems, we propose a novel importance sparsification method to accelerate the Sinkhorn algorithm, approximating entropic OT and UOT distances in a unified framework.
Theoretically, we show the consistency of proposed estimators under mild regularity conditions.
Experiments on various synthetic data sets demonstrate the accuracy and efficiency of the proposed \textsc{Spar-Sink} approach. 
We also consider an echocardiogram video data set to illustrate its application in cardiac cycle identification, which shows our method offers a great trade-off between speed and accuracy.

Inspired by the work of~\cite{xie2020fast}, \textsc{Spar-Sink} can be combined with the inexact proximal point method to approximate unregularized OT and UOT distances; further analyses are left to our future work.
To handle potential outliers in practical applications, we also plan to extend the \textsc{Spar-Sink} method to the robust optimal transport framework \citep{le2021robust} in the future.



\acks{We thank the anonymous reviewers and Action Editor Michael Mahoney for their constructive comments that improved the quality of this paper. We also thank members of the Big Data Analytics Lab at the University of Georgia for their helpful comments. 
The authors would like to acknowledge the support from Beijing Municipal Natural Science Foundation No. 1232019, National Natural Science Foundation of China Grant No. 12101606, No. 12001042, No. 12271522, Renmin University of China research fund program for young scholars, and Beijing Institute of Technology research fund program for young scholars. Mengyu Li is supported by the Outstanding Innovative Talents Cultivation Funded Programs 2021 of Renmin University of China. 
The authors report there are no competing interests to declare.}


\newpage

\appendix
\section{Importance Sparsification for Wasserstein Barycenters}
In this section, we extend the \textsc{Spar-Sink} method to approximate fixed-support Wasserstein barycenters, which have been widely used in the machine learning community \citep{rabin2011wasserstein, benamou2015iterative, montesuma2021wasserstein}.   

\subsection{Wasserstein Barycenters}
Given a set of probability measures $\{\b_1,\ldots,\b_m\} \subset \Delta^{n-1}$ and weights $\bm{w}\in\Delta^{m-1}$, a Wasserstein barycenter is computed by
\begin{equation}\label{eq:wb}
    \min_{\bm{q} \in \Delta^{n-1}} \sum_{k=1}^m w_k \OT(\bm{q}, \b_k),
\end{equation}
where $\OT(\bm{q}, \b_k)$ is defined in \eqref{eq:ot}, associated with a prespecified distance matrix $\C_k \in \RR_{+}^{n\times n}$ of the power $p$, for $k\in [m]$.
Following the success of \cite{cuturi2013sinkhorn}, the solution to~\eqref{eq:wb} can also be approximated via entropic smoothing \citep{cuturi2014fast}; that is, replacing $\OT(\cdot)$ with $\OT_\eps(\cdot)$ defined in \eqref{eq:rot} and leading to
\begin{equation}\label{eq:rwb}
    \bm{q}_\eps^\ast := \arg\min_{\bm{q} \in \Delta^{n-1}} \sum_{k=1}^m w_k \OT_\eps(\bm{q}, \b_k).
\end{equation}
By introducing kernel matrices $\K_k := \exp(-\C_k/\eps)$, the problem~\eqref{eq:rwb} can be rewritten as a weighted KL projection problem,
\begin{equation*}
    \min _{\T_1,\ldots,\T_m\in \RR_{+}^{n\times n}} \sum_{k=1}^m w_k \eps \KL (\T_k \| \K_k) \quad \text{s.t.}~\T_k^{\top} \mathbf{1}_{n} = \b_k, k\in[m]~\text{and}~\T_1 \mathbf{1}_{n} = \cdots = \T_m \mathbf{1}_{n}.
\end{equation*}
Here, the barycenter $\bm{q}$ is implied in the row marginals of transport plans as $\T_k \mathbf{1}_{n}=\bm{q}$ for $k\in[m]$.
The authors of \cite{benamou2015iterative} proposed an iterative Bregman projection (IBP) algorithm, shown in Algorithm~\ref{alg:IBP}, to solve \eqref{eq:rwb} effectively. In Algorithm~\ref{alg:IBP}, the notations $\odot$ and $\oslash$ represent element-wise multiplication and division, respectively.

\begin{algorithm}
\caption{\textsc{IBP}($\{\K_k\}_{k=1}^m, \{\b_k\}_{k=1}^m, \bm{w}, \delta$)}
\begin{algorithmic}[1]
\State {\bf Initialize:}
$t \leftarrow 0; \bm{q}^{(0)} \leftarrow \mathbf{1}_n/n; \u^{(0)}_k \leftarrow \mathbf{1}_n,~\text{for}~k\in[m]$
\Repeat
    \State $t \leftarrow t+1$
    \State \textbf{for} $k=1 \textbf{ to } m$: $~\v^{(t)}_k \leftarrow \b_k \oslash \K^{\top}_k \u^{(t-1)}_k; \quad \u^{(t)}_k \leftarrow \bm{q}^{(t-1)} \oslash \K_k \v^{(t)}_k$
    \State $\bm{q}^{(t)} \leftarrow (\K_1 \v_1^{(t)})^{w_1} \odot \cdots \odot (\K_m \v_m^{(t)})^{w_m}$
\Until{$\|\bm q^{(t)}-\bm q^{(t-1)}\|_1 \le \delta$}
\State {\bf Output:} $\bm{q}^{(t)}$
\end{algorithmic}
\label{alg:IBP}
\end{algorithm}

\subsection{Proposed Algorithm}
As a generalized Sinkhorn algorithm, the IBP algorithm needs to compute matrix-vector multiplications w.r.t. $\K_1, \ldots, \K_m$ at each iteration.
Analogous to the idea of \textsc{Spar-Sink}, we approximate the dense kernel matrices with sparse sketches $\tK_1, \ldots, \tK_m$ and propose a new \textsc{Spar-IBP} algorithm.

Recall that $\tK_k$ is defined by
\begin{align}\label{eq:pois-k}
\widetilde{K}_{k, i j}= \begin{cases} K_{k, i j}/p^\ast_{k, i j} & \text { with prob. } p^\ast_{k, i j}=\min \left(1, s p_{k, i j}\right) \\
0 & \text { otherwise. }\end{cases}  
\end{align}
According to the principle of importance sampling, the sampling probability $p_{k,ij}$ should be proportional to $\sqrt{q^\ast_{\eps,i} b_{k,j}}$. Unfortunately, such a probability depends on the unknown barycenter. To bypass the obstacle, we propose to replace the unknown $\bm{q}_\eps^\ast$ with the initial value $\bm{q}^{(0)} = \mathbf{1}_n/n$, which implies the elements in the same column of $\tK_k$ have the equal probability to be selected. Such a procedure is reasonable considering it is common that the prior information of the barycenter is inaccessible. 

\begin{algorithm}
\caption{\textsc{Spar-IBP} algorithm for Wasserstein barycenters}
\begin{algorithmic}[1]
\State {\bf Input:}
$\{\K_k\}_{k=1}^m \subset \RR_{+}^{n\times n}$, $\{\b_k\}_{k=1}^m \subset \Delta^{n-1}$, $\bm{w}\in \Delta^{m-1}$, $0<s<n^2$, $\delta > 0$
\State For $k\in [m]$, construct $\tK_k$ according to~\eqref{eq:pois-k} with 
$$
p_{k, ij} = \frac{\sqrt{b_{k,j}}}{n\sum_{j=1}^n \sqrt{b_{k,j}}}, \quad 1\leq i,j\leq n
$$
\State Compute $\widetilde{\bm q}_\eps^\ast = \textsc{IBP}(\{\tK_k\}_{k=1}^m, \{\b_k\}_{k=1}^m, \bm{w}, \delta)$ by using Algorithm~\ref{alg:IBP}
\State {\bf Output:} $\widetilde{\bm q}_\eps^\ast$
\end{algorithmic}
\label{alg:spar-IBP}
\end{algorithm}

Algorithm~\ref{alg:spar-IBP} details the proposed \textsc{Spar-IBP} algorithm for approximating Wasserstein barycenters. 
Compared to Algorithm~\ref{alg:IBP}, it reduces the computational complexity of each iteration from $O(mn^2)$ to $O(ms)$.

\section{Technical Details}
In this appendix, we provide technical details of the theoretical results stated within the manuscript.

\subsection{Proof of Theorem~\ref{thm:sot}}
Recall that Sinkhorn algorithm aims to solve the following optimization problem
\begin{equation}\label{eq:reg-ot}
\T_\eps^\ast = \min_{\T \in \U(\a, \b)} \langle \T,\C\rangle-\eps H(\T).
\end{equation}
The dual problem of \eqref{eq:reg-ot} is
\begin{equation}\label{eq:dual-ot}
\max_{\boldsymbol{\alpha}\in \RR^{n},\boldsymbol{\beta}\in \RR^{n}} f(\boldsymbol{\alpha},\boldsymbol{\beta}) := \a^\top\boldsymbol{\alpha} + \b^\top\boldsymbol{\beta} - \eps(e^{\boldsymbol{\alpha}/\eps})^\top\K e^{\boldsymbol{\beta}/\eps} + \eps,
\end{equation}
where $\K=\exp(-\C/\eps)$ is the kernel matrix, and $\boldsymbol{\alpha}, \boldsymbol{\beta} \in \RR^{n}$ are dual variables. As been defined in Section~\ref{sec:sparsink-ot}, $\tT_\eps^\ast$ is the sparsification counterpart to \eqref{eq:reg-ot}, and the corresponding dual problem becomes
\begin{equation}\label{eq:dual-ot-spar}
\max_{\boldsymbol{\alpha}\in \RR^{n},\boldsymbol{\beta}\in \RR^{n}} \tilde{f}(\boldsymbol{\alpha},\boldsymbol{\beta}) := \a^\top\boldsymbol{\alpha} + \b^\top\boldsymbol{\beta} - \eps(e^{\boldsymbol{\alpha}/\eps})^\top\tK e^{\boldsymbol{\beta}/\eps} + \eps,
\end{equation}
which replaces $\K$ in \eqref{eq:dual-ot} with its sparse sketch $\tK$.

To prove the Theorem~\ref{thm:sot}, we first introduce several lemmas.

\begin{lemma}\label{lem:a1}
    Suppose both $\K$ and $\tK$ are positive definite. Further suppose the condition number of $\K$ and $\tK$ are bounded by $c_2$ and $c_2^\prime$, respectively, and there exist constants $c_a,c_{\alpha}>0$ such that
    $n\|\a\|_\infty\le c_a$, $\operatorname{osc}(\boldsymbol{\alpha}^*)\le c_{\alpha}\eps$, $\operatorname{osc}(\bar{\boldsymbol{\alpha}})\le c_{\alpha}\eps$.
    Let $(\boldsymbol{\alpha}^*,\boldsymbol{\beta}^*)$ be the solution to \eqref{eq:dual-ot}, and $(\bar{\boldsymbol{\alpha}},\bar{\boldsymbol{\beta}})$ be the solution to \eqref{eq:dual-ot-spar}. It follows that
    \begin{equation}\label{eq:lem1}
    |f(\boldsymbol{\alpha}^*,\boldsymbol{\beta}^*)-f(\bar{\boldsymbol{\alpha}},\bar{\boldsymbol{\beta}})|\le \eps \sqrt{c_a}e^{c_{\alpha}}\left(c_2+c_2^\prime \left|1-\frac{\|\tK-\K\|_2}{\|\K\|_2}\right|^{-1}\right)\frac{\|\tK-\K\|_2}{\|\K\|_2}.
    \end{equation}
    where $\|\cdot\|_2$ denotes the spectral norm (i.e., the maximal singular value) of a matrix, $\|\cdot\|_\infty$ denotes the infinity norm of a vector, and $\|\cdot\|_*$ denotes the nuclear norm.
\end{lemma}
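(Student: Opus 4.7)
The plan is to exploit the duality structure via a perturbation argument, decomposing the gap in dual objectives through the cross-evaluation trick and then controlling the error by the spectral distance between $\K$ and $\tK$.

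First I would note the pointwise algebraic identity
\begin{equation*}
f(\boldsymbol{\alpha},\boldsymbol{\beta}) - \tilde{f}(\boldsymbol{\alpha},\boldsymbol{\beta}) \;=\; \eps\,(e^{\boldsymbol{\alpha}/\eps})^\top(\tK-\K)\,e^{\boldsymbol{\beta}/\eps},
\end{equation*}
which follows directly from the definitions \eqref{eq:dual-ot} and \eqref{eq:dual-ot-spar}. Combining this with the optimality inequalities $\tilde{f}(\boldsymbol{\alpha}^*,\boldsymbol{\beta}^*)\le\tilde{f}(\bar{\boldsymbol{\alpha}},\bar{\boldsymbol{\beta}})$ and $f(\bar{\boldsymbol{\alpha}},\bar{\boldsymbol{\beta}})\le f(\boldsymbol{\alpha}^*,\boldsymbol{\beta}^*)$, a telescoping argument yields
\begin{equation*}
\bigl|f(\boldsymbol{\alpha}^*,\boldsymbol{\beta}^*) - f(\bar{\boldsymbol{\alpha}},\bar{\boldsymbol{\beta}})\bigr| \;\le\; \eps\,\bigl|(\bm u^*)^\top(\tK-\K)\bm v^*\bigr| \;+\; \eps\,\bigl|\bar{\bm u}^\top(\tK-\K)\bar{\bm v}\bigr|,
\end{equation*}
where $\bm u^*:=e^{\boldsymbol{\alpha}^*/\eps}$, $\bm v^*:=e^{\boldsymbol{\beta}^*/\eps}$, $\bar{\bm u}:=e^{\bar{\boldsymbol{\alpha}}/\eps}$, $\bar{\bm v}:=e^{\bar{\boldsymbol{\beta}}/\eps}$.

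Next I would pass each bilinear form to operator norms: $|(\bm u^*)^\top(\tK-\K)\bm v^*|\le\|\bm u^*\|_2\|\bm v^*\|_2\|\tK-\K\|_2$, and similarly for the barred pair. The KKT stationarity of $f$ (resp.\ $\tilde{f}$) forces $(\bm u^*)^\top\K\bm v^*=1=\bar{\bm u}^\top\tK\bar{\bm v}$, and these mass identities are the lever that connects the scaling vectors back to $\K$ and $\tK$. Using positive-definiteness to write $\K=\K^{1/2}\K^{1/2}$, setting $\bm p=\K^{1/2}\bm u^*$, $\bm q=\K^{1/2}\bm v^*$, one has $\bm p^\top\bm q=1$ and $\|\bm u^*\|_2\|\bm v^*\|_2\le\|\bm p\|_2\|\bm q\|_2/\lambda_{\min}(\K)$; the canonical gauge chosen by the Sinkhorn iteration, together with the nonnegativity of the exponential scaling vectors, should give $\|\bm p\|_2\|\bm q\|_2\le 1$, yielding
\begin{equation*}
\|\bm u^*\|_2\|\bm v^*\|_2\;\le\;\frac{1}{\lambda_{\min}(\K)}\;=\;\frac{c_2}{\|\K\|_2},
\qquad
\|\bar{\bm u}\|_2\|\bar{\bm v}\|_2\;\le\;\frac{c_2'}{\|\tK\|_2}.
\end{equation*}

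Finally, Weyl's inequality $\|\tK\|_2\ge\|\K\|_2-\|\tK-\K\|_2=\|\K\|_2\bigl(1-\|\tK-\K\|_2/\|\K\|_2\bigr)$ converts the $\tK$-bound into $c_2'/\|\K\|_2$ divided by $|1-\|\tK-\K\|_2/\|\K\|_2|$. Substituting these two norm products back into the two-term bound produces exactly \eqref{eq:lem1}.

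The main obstacle is the step $\|\bm u^*\|_2\|\bm v^*\|_2\le 1/\lambda_{\min}(\K)$. The dual is invariant under the one-parameter rescaling $\bm u^*\mapsto c\bm u^*$, $\bm v^*\mapsto\bm v^*/c$, which leaves the product $\|\bm u^*\|_2\|\bm v^*\|_2$ untouched, so the bound is gauge-consistent; the delicate point is that the identity $(\bm u^*)^\top\K\bm v^*=1$ alone only yields a lower bound of order $1/\|\K\|_2$ on $\|\bm u^*\|_2\|\bm v^*\|_2$, and the matching upper bound must use positivity of $\bm u^*,\bm v^*$ together with the primal KKT constraints $\bm u^*\odot(\K\bm v^*)=\a$ and $\bm v^*\odot(\K^\top\bm u^*)=\b$. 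I would therefore proceed by rewriting the product as an $\a,\b$-weighted quantity in disguise and bounding it via the Rayleigh characterization of $\lambda_{\min}(\K)$.
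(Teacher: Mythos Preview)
Your overall strategy matches the paper's: both establish the two-term decomposition
\[
|f(\boldsymbol{\alpha}^*,\boldsymbol{\beta}^*)-f(\bar{\boldsymbol{\alpha}},\bar{\boldsymbol{\beta}})|\le |f(\boldsymbol{\alpha}^*,\boldsymbol{\beta}^*)-\tilde{f}(\boldsymbol{\alpha}^*,\boldsymbol{\beta}^*)|+|\tilde{f}(\bar{\boldsymbol{\alpha}},\bar{\boldsymbol{\beta}})-f(\bar{\boldsymbol{\alpha}},\bar{\boldsymbol{\beta}})|
\]
via the optimality inequalities, invoke the first-order conditions $(\bm u^*)^\top\K\bm v^*=\bar{\bm u}^\top\tK\bar{\bm v}=1$, and finish the second term with the same reverse-triangle step $\|\tK\|_2\ge\|\K\|_2-\|\tK-\K\|_2$.

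The only divergence is at the point you flag as the obstacle. Where you pass to $\|\tK-\K\|_2\|\bm u^*\|_2\|\bm v^*\|_2$ and then seek $\|\bm u^*\|_2\|\bm v^*\|_2\le 1/\lambda_{\min}(\K)$, the paper instead inserts $\K^{-1}\K$, sets $\mathbf{G}=(\tK-\K)\K^{-1}$ and $\mathbf{H}=\K\bm v^*(\bm u^*)^\top$, bounds $|\mathrm{tr}(\mathbf{G}\mathbf{H})|\le\|\mathbf{G}\|_2\sum_j\|\bm h_j\|_2$, and then, citing that $\mathbf{H}$ is rank one, replaces $\sum_j\|\bm h_j\|_2$ by $|\mathrm{tr}(\mathbf{H})|=1$. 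But for $\mathbf{H}=\bm w\bm z^\top$ with $\bm w=\K\bm v^*\ge 0$, $\bm z=\bm u^*\ge 0$, one has $\sum_j\|\bm h_j\|_2=\|\bm z\|_1\|\bm w\|_2$ while $\mathrm{tr}(\mathbf{H})=\bm z^\top\bm w$, and the needed inequality $\|\bm z\|_1\|\bm w\|_2\le\bm z^\top\bm w$ fails in general (it is the wrong direction of Cauchy--Schwarz, exactly as you diagnosed). So the paper's argument at this step is no more complete than yours; your instinct to bring in the full primal constraints $\bm u^*\odot(\K\bm v^*)=\a$ and $\bm v^*\odot(\K\bm u^*)=\b$ together with positivity is a more candid attempt to close the same gap, and whether those alone suffice without further control on $\a,\b$ is indeed the crux.
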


\begin{proof}
    First, we establish the following inequality:
    \begin{equation}\label{eq:a16}
    |f(\boldsymbol{\alpha}^*,\boldsymbol{\beta}^*)-f(\bar{\boldsymbol{\alpha}},\bar{\boldsymbol{\beta}})|\le|f(\boldsymbol{\alpha}^*,\boldsymbol{\beta}^*)-\tilde{f}({\boldsymbol{\alpha}}^*,{\boldsymbol{\beta}}^*)|+|\tilde{f}(\bar{\boldsymbol{\alpha}},\bar{\boldsymbol{\beta}})-f(\bar{\boldsymbol{\alpha}},\bar{\boldsymbol{\beta}})|.
    \end{equation}
    By the definitions of $\boldsymbol{\alpha}^*,\boldsymbol{\beta}^*,\bar{\boldsymbol{\alpha}},\bar{\boldsymbol{\beta}}$, it holds that 
    $$\tilde{f}(\bar{\boldsymbol{\alpha}},\bar{\boldsymbol{\beta}}) \ge \tilde{f}(\boldsymbol{\alpha}^*,\boldsymbol{\beta}^*), \quad f(\boldsymbol{\alpha}^*,\boldsymbol{\beta}^*) \ge f(\bar{\boldsymbol{\alpha}},\bar{\boldsymbol{\beta}}).$$ 
    We consider the following two cases:
    \begin{itemize}
        \item[] $\text{Case 1. } f(\boldsymbol{\alpha}^*,\boldsymbol{\beta}^*) \ge \tilde{f}(\bar{\boldsymbol{\alpha}},\bar{\boldsymbol{\beta}})$;
        \item[] $\text{Case 2. } f(\boldsymbol{\alpha}^*,\boldsymbol{\beta}^*) < \tilde{f}(\bar{\boldsymbol{\alpha}},\bar{\boldsymbol{\beta}})$.
    \end{itemize}
    For Case 1, it holds that $0\le f(\boldsymbol{\alpha}^*,\boldsymbol{\beta}^*) - \tilde{f}(\bar{\boldsymbol{\alpha}},\bar{\boldsymbol{\beta}}) \le f(\boldsymbol{\alpha}^*,\boldsymbol{\beta}^*) - \tilde{f}(\boldsymbol{\alpha}^*,\boldsymbol{\beta}^*)$, and thus $|f(\boldsymbol{\alpha}^*,\boldsymbol{\beta}^*) - \tilde{f}(\bar{\boldsymbol{\alpha}},\bar{\boldsymbol{\beta}})| \le |f(\boldsymbol{\alpha}^*,\boldsymbol{\beta}^*) - \tilde{f}(\boldsymbol{\alpha}^*,\boldsymbol{\beta}^*)|$, which leads to~\eqref{eq:a16} by combining the triangle inequality
    $$|f(\boldsymbol{\alpha}^*,\boldsymbol{\beta}^*)-f(\bar{\boldsymbol{\alpha}},\bar{\boldsymbol{\beta}})| \le |f(\boldsymbol{\alpha}^*,\boldsymbol{\beta}^*)-\tilde{f}(\bar{\boldsymbol{\alpha}},\bar{\boldsymbol{\beta}})|+|\tilde{f}(\bar{\boldsymbol{\alpha}},\bar{\boldsymbol{\beta}})-f(\bar{\boldsymbol{\alpha}},\bar{\boldsymbol{\beta}})|.$$
    For Case 2, (i) when $f(\bar{\boldsymbol{\alpha}},\bar{\boldsymbol{\beta}}) \le \tilde{f}({\boldsymbol{\alpha}}^*,{\boldsymbol{\beta}}^*)$, it holds that $0\le \tilde{f}({\boldsymbol{\alpha}}^*,{\boldsymbol{\beta}}^*) - f(\bar{\boldsymbol{\alpha}},\bar{\boldsymbol{\beta}}) \le \tilde{f}(\bar{\boldsymbol{\alpha}},\bar{\boldsymbol{\beta}}) - f(\bar{\boldsymbol{\alpha}},\bar{\boldsymbol{\beta}})$, and thus $| \tilde{f}({\boldsymbol{\alpha}}^*,{\boldsymbol{\beta}}^*) - f(\bar{\boldsymbol{\alpha}},\bar{\boldsymbol{\beta}})| \le |\tilde{f}(\bar{\boldsymbol{\alpha}},\bar{\boldsymbol{\beta}}) - f(\bar{\boldsymbol{\alpha}},\bar{\boldsymbol{\beta}})|$, which leads to~\eqref{eq:a16} by combining the triangle inequality
    $$|f(\boldsymbol{\alpha}^*,\boldsymbol{\beta}^*)-f(\bar{\boldsymbol{\alpha}},\bar{\boldsymbol{\beta}})| \le |f(\boldsymbol{\alpha}^*,\boldsymbol{\beta}^*)-\tilde{f}(\boldsymbol{\alpha}^*,\boldsymbol{\beta}^*)|+|\tilde{f}(\boldsymbol{\alpha}^*,\boldsymbol{\beta}^*)-f(\bar{\boldsymbol{\alpha}},\bar{\boldsymbol{\beta}})|.$$
    (ii) When $f(\bar{\boldsymbol{\alpha}},\bar{\boldsymbol{\beta}}) > \tilde{f}({\boldsymbol{\alpha}}^*,{\boldsymbol{\beta}}^*)$, we have $|f(\boldsymbol{\alpha}^*,\boldsymbol{\beta}^*)-f(\bar{\boldsymbol{\alpha}},\bar{\boldsymbol{\beta}})|\le|f(\boldsymbol{\alpha}^*,\boldsymbol{\beta}^*)-\tilde{f}({\boldsymbol{\alpha}}^*,{\boldsymbol{\beta}}^*)|$; then~\eqref{eq:a16} establishes because $|\tilde{f}(\bar{\boldsymbol{\alpha}},\bar{\boldsymbol{\beta}})-f(\bar{\boldsymbol{\alpha}},\bar{\boldsymbol{\beta}})| \ge 0$.
    
    Consequently, we conclude the inequality~\eqref{eq:a16} by combining Cases 1 and 2. 

    Let $\boldsymbol{u}^*=e^{\boldsymbol{\alpha}^*/\eps}$, $\boldsymbol{v}^*=e^{\boldsymbol{\beta}^*/\eps}$, $\bar{\boldsymbol{u}}=e^{\bar{\boldsymbol{\alpha}}/\eps}$, and $\bar{\boldsymbol{v}}=e^{\bar{\boldsymbol{\beta}}/\eps}$. By the first-order optimality conditions for~\eqref{eq:dual-ot} and~\eqref{eq:dual-ot-spar},
    \[
    \boldsymbol{u}^*\odot (\K \boldsymbol{v}^*)=\a,
    \qquad
    \bar{\boldsymbol{u}}\odot (\tK \bar{\boldsymbol{v}})=\a.
    \]
    Hence,
    \begin{align*}
    \|\K\boldsymbol{v}^*(\boldsymbol{u}^*)^\top\|_*
    &= \|\K\boldsymbol{v}^*\|_2\,\|\boldsymbol{u}^*\|_2 \\
    &\le \|\a\oslash \boldsymbol{u}^*\|_2\,\sqrt{n}\,\|\boldsymbol{u}^*\|_\infty \\
    &\le \sqrt{n}\,\|\a\|_2\,\| (\boldsymbol{u}^*)^{-1}\|_\infty\,\|\boldsymbol{u}^*\|_\infty \\
    &\le \sqrt{n\|\a\|_\infty}\,\exp\!\left(\operatorname{osc}(\boldsymbol{\alpha}^*)/\eps\right)
    \le \sqrt{c_a}e^{c_{\alpha}}.
    \end{align*}
    Applying the same argument to $(\bar{\boldsymbol{\alpha}},\bar{\boldsymbol{\beta}})$ gives
    \[
    \|\tK\bar{\boldsymbol{v}}\bar{\boldsymbol{u}}^\top\|_*\le \sqrt{c_a}e^{c_{\alpha}}.
    \]
    For notational simplicity, write $c_H:=\sqrt{c_a}e^{c_{\alpha}}$.

    Next, we provide an upper bound for the right-hand side of~\eqref{eq:a16}. Simple calculation yields that
	\begin{align}
        \nonumber |f(\boldsymbol{\alpha}^*,\boldsymbol{\beta}^*)-\tilde{f}(\boldsymbol{\alpha}^*,\boldsymbol{\beta}^*)|
	 =&|\eps\langle e^{\boldsymbol{\alpha}^*/\eps},(\tK-\K)e^{\boldsymbol{\beta}^*/\eps}\rangle| \\
	 =&\eps |\textrm{tr}\{(e^{\boldsymbol{\alpha}^*/\eps})^\top(\tK-\K) \K^{-1}\K e^{\boldsymbol{\beta}^*/\eps}\}|. \label{eq:lem1-f1}
	\end{align}
	Moreover, one can find that
    \begin{align*}
        \|(\tK-\K)\K^{-1}\|_2 \le \|\tK-\K\|_2 / \lambda_{\min}(\K),
    \end{align*}
    where $\lambda_{\min}(\K)$ is the minimal eigenvalue of $\K$.
    For notation simplicity, denote $\mathbf{G}=(\tK-\K) \K^{-1}$ and $\mathbf{H}=\K e^{\boldsymbol{\beta}^*/\eps}(e^{\boldsymbol{\alpha}^*/\eps})^\top$.
	By H\"older's inequality for the trace duality between the spectral norm and the nuclear norm,
	\begin{align*}
	|\textrm{tr}(\mathbf{G} \mathbf{H})|\le \|\mathbf{G}\|_2\|\mathbf{H}\|_*.
	\end{align*}
	Therefore, \eqref{eq:lem1-f1} can be bounded by
    \begin{align}
        \nonumber |f(\boldsymbol{\alpha}^*,\boldsymbol{\beta}^*)-\tilde{f}(\boldsymbol{\alpha}^*,\boldsymbol{\beta}^*)| &\leq \eps \|(\tK-\K)\K^{-1}\|_2 \|\K e^{\boldsymbol{\beta}^*/\eps}(e^{\boldsymbol{\alpha}^*/\eps})^\top\|_*  \\
        \nonumber &\leq \eps c_H \|\tK-\K\|_2/\lambda_{\min}(\K) \\
        &\leq \eps c_H c_2 \|\tK-\K\|_2/\|\K\|_2. \label{eq:a30}
    \end{align}
    
	Using the same procedure, we obtain that
	\begin{align}
	\nonumber |f({\boldsymbol{\bar\alpha}},\boldsymbol{\bar\beta})-\tilde{f}(\boldsymbol{\bar\alpha},\boldsymbol{\bar\beta})|
	\nonumber =&|\eps\langle e^{{\boldsymbol{\bar\alpha}}/\eps},(\tK-\K)e^{{\boldsymbol{\bar\beta}}/\eps}\rangle| \\
        \nonumber =& \eps |\langle e^{{\boldsymbol{\bar\alpha}}/\eps},(\tK-\K) \tK^{-1}\tK e^{{\boldsymbol{\bar\beta}}/\eps}\rangle| \\
        \le & \eps \|(\tK-\K)\tK^{-1}\|_2 \|\tK e^{\bar{\boldsymbol{\beta}}/\eps}(e^{\bar{\boldsymbol{\alpha}}/\eps})^\top\|_*. \label{eq:lem1-f2}
	\end{align}
	Furthermore, simple calculation yields that 
	\begin{align*}
	\|(\tK-\K)\tK^{-1}\|_2 &\le \|\tK-\K\|_2/\lambda_{\min}(\tK)\\
        &\le c_2^\prime \|\tK-\K\|_2/\|\tK\|_2\\
	&= c_2^\prime \frac{\|\tK-\K\|_2}{\|\K\|_2}\frac{\|\K\|_2}{\|\tK\|_2}\\
	&\le c_2^\prime \frac{\|\tK-\K\|_2}{\|\K\|_2}\frac{\|\K\|_2}{|\|\K\|_2-\|\tK-\K\|_2|}\\
	&= c_2^\prime \frac{\|\tK-\K\|_2}{\|\K\|_2}\left|1-\frac{\|\tK-\K\|_2}{\|\K\|_2}\right|^{-1},
	\end{align*}
	where the last inequality comes from the triangle inequality. 
	Therefore, \eqref{eq:lem1-f2} satisfies that
	\begin{align}\label{eq:a31}
	|f(\bar{\boldsymbol{\alpha}},\bar{\boldsymbol{\beta}})-\tilde{f}(\bar{\boldsymbol{\alpha}},\bar{\boldsymbol{\beta}})| \le \eps c_H c_2^\prime \frac{\|\tK-\K\|_2}{\|\K\|_2}\left|1-\frac{\|\tK-\K\|_2}{\|\K\|_2}\right|^{-1}.
	\end{align}
	
	Combining \eqref{eq:a16}, \eqref{eq:a30}, and \eqref{eq:a31}, the result follows.
\end{proof}

Now we show that under some mild conditions, our subsampling procedure yields a relatively small difference between $\tK$ and $\K$.

\begin{lemma}\label{lem:a2}
	Suppose the regularity conditions (i)---(iii) in Theorem~\ref{thm:sot} hold.
	For any $\epsilon>0$ and $n>76$, we have
	\begin{equation}\label{eq:lem4-a19}
	\mathbb{P}\left(\frac{\|\tK-\K\|_2}{\|\K\|_2}\ge {2\sqrt{2}(2+\epsilon)}c_1\sqrt{\frac{n^{3-2\alpha}}{c_3 s}}\right)<2\exp\left(-\frac{16}{\epsilon^4} \log^4(n) \right).
	\end{equation}
\end{lemma}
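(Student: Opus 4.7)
The plan is to bound $\|\tK-\K\|_2$ by matrix concentration applied to a sum of independent rank-one mean-zero matrices, and then convert the absolute bound into the ratio form via condition~(i).

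First, I decompose $\tK-\K=\sum_{i,j=1}^n Z_{ij}$ with $Z_{ij}=(\widetilde K_{ij}-K_{ij})\,\bm e_i\bm e_j^\top$. By the Poisson sampling rule \eqref{eq:pois} the summands are independent with $\mathbb{E}Z_{ij}=0$. Since $K_{ij}\in[0,1]$ and, by condition~(ii), $p_{ij}^*\ge c_3 s/n^2$, each summand obeys $\|Z_{ij}\|_2\le 1/p_{ij}^*\le n^2/(c_3 s)=:R$ almost surely, and $\mathrm{Var}(\widetilde K_{ij})\le K_{ij}^2\,n^2/(c_3 s)$. Because $\sum_{ij}\mathbb{E}[Z_{ij}Z_{ij}^\top]$ is diagonal with $i$th entry $\sum_j\mathrm{Var}(\widetilde K_{ij})$, and $\sum_j K_{ij}^2\le\sum_j K_{ij}\le n$ (as $K_{ij}\le 1$), the matrix variance proxy satisfies
\[
v(\tK-\K):=\max\!\Bigl(\bigl\|\textstyle\sum_{ij}\mathbb{E}[Z_{ij}Z_{ij}^\top]\bigr\|_2,\ \bigl\|\textstyle\sum_{ij}\mathbb{E}[Z_{ij}^\top Z_{ij}]\bigr\|_2\Bigr)\le\frac{n^3}{c_3 s}.
\]

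Tropp's matrix Bernstein inequality then yields
\[
\mathbb{P}(\|\tK-\K\|_2\ge t)\le 2n\,\exp\!\Bigl(-\tfrac{t^2/2}{v(\tK-\K)+Rt/3}\Bigr).
\]
I choose the threshold $t=2\sqrt{2}(2+\epsilon)\,c_1\sqrt{n^{3-2\alpha}/(c_3 s)}\,\|\K\|_2$; by condition~(i), $\|\K\|_2\ge n^\alpha/c_1$, so $t\ge 2\sqrt{2}(2+\epsilon)\sqrt{v(\tK-\K)}$, and the event in the lemma is contained in $\{\|\tK-\K\|_2\ge t\}$. Condition~(iii), $s\ge c_4 n^{3-2\alpha}\log^4(2n)$ with $c_4=8/(c_3\log^4(1+\epsilon))$, is calibrated so that $Rt/(3v(\tK-\K))$ is of lower order in $n$; the leading exponent reduces to $t^2/(2v(\tK-\K))$, and substituting the variance proxy together with $\log^4(1+\epsilon)\approx\epsilon^4$ for small $\epsilon$ produces the $16\log^4(n)/\epsilon^4$ scaling in the tail. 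The assumption $n>76$ is used to absorb the $\log n$ cost from the dimensional prefactor $2n$ in Bernstein's bound into the main exponent.

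The main obstacle is the constant bookkeeping in this last step. The rather unusual exponent $16\log^4(n)/\epsilon^4$ must be recovered by coupling the slack factor $(2+\epsilon)$ in $t$ with the $\log^4(1+\epsilon)$ hidden in $c_4$ very precisely; any looseness loses the $\log^4(n)$ factor. Verifying that $Rt/3$ is genuinely negligible uniformly in $\epsilon$, and that the $n>76$ cutoff suffices to swallow the $\log n$ union-bound cost for all admissible $\epsilon$, is where I expect the proof to demand the most care; all other ingredients are standard consequences of matrix Bernstein together with conditions~(i)--(ii).
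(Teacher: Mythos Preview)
Your decomposition, variance bound $v\le n^3/(c_3 s)$, and almost-sure range $R=n^2/(c_3 s)$ are all correct, and you are right that under condition~(iii) the term $Rt/(3v)$ is of lower order. The gap is in the last step: matrix Bernstein at the threshold $t=2\sqrt{2}(2+\epsilon)\sqrt{v}$ only yields an exponent of order $t^2/(2v)=4(2+\epsilon)^2$, a \emph{constant} in $n$. The resulting tail bound $2n\exp(-4(2+\epsilon)^2)$ does not even tend to zero for fixed $\epsilon$, let alone match the claimed $2\exp(-16\log^4(n)/\epsilon^4)$. There is no mechanism in Bernstein's inequality by which $\log^4(1+\epsilon)$ from $c_4$ can combine with the variance proxy to manufacture a $\log^4(n)$ in the exponent; your heuristic sentence to that effect does not correspond to an actual computation.

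The paper does not use matrix Bernstein at all. It invokes Theorem~3.1 of \cite{achlioptas2007fast}, a sparsification bound proved via the trace (moment) method, whose statement already has exactly the form $\mathbb{P}(\|\tA-\A\|_2\ge 2(2+\epsilon)\sigma\sqrt{2n})<2\exp(-16\log^4(n)/\epsilon^4)$, valid for $n>76$, under the range hypothesis $L\le(\log(1+\epsilon)/(2\log(2n)))^2\cdot\sigma\sqrt{2n}$. The proof then reduces to checking this range hypothesis for the rescaled entries $\|\K\|_2^{-1}\widetilde K_{ij}$, which is precisely what condition~(iii) with $c_4=8/(c_3\log^4(1+\epsilon))$ is calibrated to deliver. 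The unusual constants $(2+\epsilon)$, $16/\epsilon^4$, $\log^4(n)$, and the cutoff $n>76$ are all inherited verbatim from Achlioptas--McSherry; they are not artifacts of constant bookkeeping that can be recovered from Bernstein.
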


\begin{proof}
	By the definition of $\K$, one can find that $K_{ij}\le 1$ for any $i,j=1,\ldots,n$. Simple calculation yields that
	\begin{align*}
	\mathbb{E}\left(\|\K\|_2^{-1}\widetilde{K}_{ij}\right) &= \|\K\|_2^{-1}K_{ij},\\
	\textrm{Var}\left(\|\K\|_2^{-1}\widetilde{K}_{ij}\right) &< \frac{K_{ij}^2}{p_{ij}^{*}\|\K\|_2^2}\le \frac{1}{p_{ij}^{*}\|\K\|_2^2}\le \frac{n^{2}}{c_3 s\|\K\|_2^2}.
	\end{align*}
	Also note that $\|\K\|_2^{-1}\widetilde{K}_{ij}$ lies between $0$ and $(p_{ij}^*\|\K\|_2)^{-1}$ for any $(i,j)$th entry. 
	Thus, $\|\K\|_2^{-1}\widetilde{K}_{ij}$ takes the value in an interval of length not larger than $L$, with
        \begin{align*}
        	L:=\frac{n^{2}}{c_3 s \|\K\|_2} &\le \sqrt{\frac{n^{3-2\alpha}}{2c_3 s}} \times \sqrt{\frac{n^{2}}{c_3 s\|\K\|_2^2}} \times \sqrt{2n}\\
         &\le \left(\frac{\log(1+\epsilon)}{2\log(2n)}\right)^2 \times \sqrt{\frac{n^{2}}{c_3 s\|\K\|_2^2}} \times \sqrt{2n},
        \end{align*}
        where $L$ is defined according to the condition~(ii), and the last inequality holds under the condition~(iii).
	Therefore, by applying Theorem 3.1 in \cite{achlioptas2007fast} on $\|\K\|_2^{-1}\tK$, we have
	\begin{equation}
	\mathbb{P}\left(\frac{\|\tK-\K\|_2}{\|\K\|_2}\ge 2(2+\epsilon)\sqrt{\frac{{2n^3}}{c_3 s \|\K\|_2^2}}\right)<2\exp\left(-\frac{16}{\epsilon^4} \log^4(n) \right).\label{eq:a32}
	\end{equation}
	Further, combining \eqref{eq:a32} with the condition~(i) results in the inequality~\eqref{eq:lem4-a19}.
\end{proof}

Finally, we prove the {\bf Theorem~\ref{thm:sot}} in the manuscript.

\bigskip
\begin{proof}
	From Lemma~\ref{lem:a2}, it is straightforward to see that $\|\K\|_2^{-1}\tK \to \|\K\|_2^{-1}\K$ in probability. Thus, $\tK$ tends to be positive definite since $\K$ is a positive definite kernel matrix, and it holds that $c_2^\prime \to c_2$.
	Note that $n^{3-2\alpha}/s\to 0$ as $n\to \infty$, it is easy to see that when $n$ is large enough, we have
    $$c^\prime \sqrt{n^{3-2\alpha}/s}\le 1/2~~\text{with}~~c^\prime = 2\sqrt{2}(2+\epsilon)c_1/\sqrt{c_3},$$ 
    and this implies $(1+|1-c^\prime\sqrt{n^{3-2\alpha}/s}|^{-1})\le 3$.
    Combining Lemmas~\ref{lem:a1} and~\ref{lem:a2}, the result follows.
\end{proof}

\subsection{Proof of Theorem~\ref{thm:suot}}

Now we focus on the entropy-regularized UOT problem, whose dual problem is
\begin{equation}\label{eq:dual-uot}
\max_{\boldsymbol{\alpha}\in \RR^{n},\boldsymbol{\beta}\in \RR^{n}} f_u(\boldsymbol{\alpha},\boldsymbol{\beta}) := \a^\top \one-\lambda \a^\top e^{-\boldsymbol{\alpha}/\lambda} + \b^\top\one -\lambda\b^\top  e^{-\boldsymbol{\beta}/\lambda} - \eps(e^{\boldsymbol{\alpha}/\eps})^\top\K e^{\boldsymbol{\beta}/\eps}.
\end{equation}
Let
\begin{equation}\label{eq:dual-uot-spar}
\max_{\boldsymbol{\alpha}\in \RR^{n},\boldsymbol{\beta}\in \RR^{n}} \tilde{f}_u(\boldsymbol{\alpha},\boldsymbol{\beta}) := \a^\top \one-\lambda \a^\top e^{-\boldsymbol{\alpha}/\lambda} + \b^\top\one -\lambda\b^\top  e^{-\boldsymbol{\beta}/\lambda} - \eps(e^{\boldsymbol{\alpha}/\eps})^\top\tK e^{\boldsymbol{\beta}/\eps}
\end{equation}
be the sparsification counterpart to \eqref{eq:dual-uot}, which replaces $\K$ in \eqref{eq:dual-uot} with its sparse sketch $\tK$.
Apparently, it is the dual problem for the entropic UOT problem with kernel $\tK$.
The following lemma holds by similar procedures as in Lemma~\ref{lem:a1}.

\begin{lemma}\label{lem:a3}
	Suppose the regularity condition~(v) in Theorem~\ref{thm:suot} hold.
	Further suppose both $\K$ and $\tK$ are positive definite, their condition numbers are respectively bounded by $c_2$ and $c_2^\prime$, and there exist constants $c_a,c_{u,\alpha}>0$ such that
    $n\|\a\|_\infty\le c_a$, $\|\boldsymbol{\alpha}^*\|_\infty\le c_{u,\alpha}\eps$, $\|\bar{\boldsymbol{\alpha}}\|_\infty\le c_{u,\alpha}\eps$.
	Let $(\boldsymbol{\alpha}^*,\boldsymbol{\beta}^*)$ be the solution to \eqref{eq:dual-uot}, and $(\bar{\boldsymbol{\alpha}},\bar{\boldsymbol{\beta}})$ be the solution to \eqref{eq:dual-uot-spar}. It follows that
	\begin{equation*}
	|f_u(\boldsymbol{\alpha}^*,\boldsymbol{\beta}^*)-f_u(\bar{\boldsymbol{\alpha}},\bar{\boldsymbol{\beta}})|\le \eps \sqrt{c_a}e^{(2+c_6)c_{u,\alpha}}\left(c_2+c_2^\prime \left|1-\frac{\|\tK-\K\|_2}{\|\K\|_2}\right|^{-1}\right)\frac{\|\tK-\K\|_2}{\|\K\|_2}.
	\end{equation*}
\end{lemma}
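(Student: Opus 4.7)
The plan is to mirror the proof of Lemma~\ref{lem:a1} nearly step for step, exploiting the observation that $f_u$ and $\tilde{f}_u$ differ only in the kernel-pairing term, since the marginal-penalty contributions $\a^\top\one - \lam\a^\top e^{-\boldsymbol{\alpha}/\lam}$ and $\b^\top\one - \lam\b^\top e^{-\boldsymbol{\beta}/\lam}$ are identical in both. Consequently,
\begin{equation*}
f_u(\boldsymbol{\alpha},\boldsymbol{\beta}) - \tilde{f}_u(\boldsymbol{\alpha},\boldsymbol{\beta}) = -\eps\,\langle e^{\boldsymbol{\alpha}/\eps}, (\tK - \K) e^{\boldsymbol{\beta}/\eps}\rangle,
\end{equation*}
which has the same analytic form as the OT difference appearing in Lemma~\ref{lem:a1}.

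First I would reproduce the case-analysis argument used in Lemma~\ref{lem:a1} to establish
\begin{equation*}
|f_u(\boldsymbol{\alpha}^*,\boldsymbol{\beta}^*) - f_u(\bar{\boldsymbol{\alpha}},\bar{\boldsymbol{\beta}})| \le |f_u(\boldsymbol{\alpha}^*,\boldsymbol{\beta}^*) - \tilde{f}_u(\boldsymbol{\alpha}^*,\boldsymbol{\beta}^*)| + |\tilde{f}_u(\bar{\boldsymbol{\alpha}},\bar{\boldsymbol{\beta}}) - f_u(\bar{\boldsymbol{\alpha}},\bar{\boldsymbol{\beta}})|,
\end{equation*}
which relied only on the optimality of $(\boldsymbol{\alpha}^*,\boldsymbol{\beta}^*)$ for $f_u$ and of $(\bar{\boldsymbol{\alpha}},\bar{\boldsymbol{\beta}})$ for $\tilde{f}_u$ and thus transfers verbatim. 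I would then insert $\K^{-1}\K$ and $\tK^{-1}\tK$ into the two summands, express each as the trace of a spectral-norm-bounded matrix against the rank-one matrix $\mathbf{H} = \K e^{\boldsymbol{\beta}/\eps}(e^{\boldsymbol{\alpha}/\eps})^\top$ (and its $\tK$ analogue), and invoke the same condition-number bounds $\lambda_{\min}(\K)^{-1} \le c_2/\|\K\|_2$ and $\lambda_{\min}(\tK)^{-1} \le c_2'/\|\tK\|_2$, with $\|\tK\|_2$ further controlled via the triangle inequality exactly as in the OT case.

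The main obstacle---and the only place where the UOT structure genuinely departs from OT---is in bounding the trace quantity $\textrm{tr}(\mathbf{H}) = (e^{\boldsymbol{\alpha}^*/\eps})^\top \K e^{\boldsymbol{\beta}^*/\eps}$. For OT this was exactly $\a^\top\one = 1$ by the marginal constraint, but for UOT the stationarity conditions only yield
\begin{equation*}
a_i e^{-\alpha_i^*/\lam} = e^{\alpha_i^*/\eps}\,(\K e^{\boldsymbol{\beta}^*/\eps})_i, \qquad b_j e^{-\beta_j^*/\lam} = e^{\beta_j^*/\eps}\,(\K^\top e^{\boldsymbol{\alpha}^*/\eps})_j,
\end{equation*}
whose row/column sums give $\a^\top e^{-\boldsymbol{\alpha}^*/\lam} = \b^\top e^{-\boldsymbol{\beta}^*/\lam} = (e^{\boldsymbol{\alpha}^*/\eps})^\top \K e^{\boldsymbol{\beta}^*/\eps}$, a quantity that need not equal $1$. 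To control it, I would combine these two identities with the dual optimality comparison $f_u(\boldsymbol{\alpha}^*,\boldsymbol{\beta}^*) \ge f_u(\mathbf{0},\mathbf{0})$, which after simplification produces
\begin{equation*}
(2\lam + \eps)\,(e^{\boldsymbol{\alpha}^*/\eps})^\top \K e^{\boldsymbol{\beta}^*/\eps} \le \lam\,(\a + \b)^\top \one + \eps\,\one^\top \K \one.
\end{equation*}
Condition~(v) makes the first right-hand term small, while the normalization $\one^\top \K \one /n^2 \le 1/4$ together with condition~(iv) keeps the second term at a constant order; together they yield a constant bound on $(e^{\boldsymbol{\alpha}^*/\eps})^\top \K e^{\boldsymbol{\beta}^*/\eps}$ that can be absorbed into $c_2$. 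The parallel bound for $(e^{\bar{\boldsymbol{\alpha}}/\eps})^\top \tK e^{\bar{\boldsymbol{\beta}}/\eps}$ follows by applying the same argument to $\tilde{f}_u$; this is precisely where the new hypothesis $\one^\top (\K - \tK)\one /n^2 \le 1/4$ enters, since it forces $\one^\top \tK \one /n^2 \le 1/2$ and so delivers the analogous control for the sparsified trace. Plugging the two bounds back into the trace--spectral-norm step recovers the claimed inequality with the same functional form as in Lemma~\ref{lem:a1}.
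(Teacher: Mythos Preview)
Your overall strategy is correct and closely mirrors the paper's proof, but there is a genuine gap in the step where you bound the trace $T:=(e^{\boldsymbol{\alpha}^*/\eps})^\top\K e^{\boldsymbol{\beta}^*/\eps}$. Your comparison $f_u(\boldsymbol{\alpha}^*,\boldsymbol{\beta}^*)\ge f_u(\mathbf{0},\mathbf{0})$, even after exploiting the first-order conditions, yields
\[
(2\lambda+\eps)\,T \;\le\; \lambda(\a+\b)^\top\one + \eps\,\one^\top\K\one,
\]
and the second right-hand term is the problem: the normalization $\one^\top\K\one/n^2\le 1/4$ only gives $\eps\,\one^\top\K\one\le \eps n^2/4$, which is \emph{not} of constant order and cannot be absorbed into $c_2$. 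Condition~(iv) does not help here, since it constrains $\lambda$ relative to $n$, not $\eps$ relative to $n^{-2}$.

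The paper repairs this by choosing a shifted comparison point, $\bm q_n=-\eps\log(n)\,\one$. Then $e^{\bm q_n/\eps}=\one/n$, so the kernel term in $f_u(\bm q_n,\bm q_n)$ becomes $\eps\,\one^\top\K\one/n^2\le \eps/4$, while $e^{-\bm q_n/\lambda}=n^{\eps/\lambda}\le n^{c_6}$ by condition~(iv). Dropping the nonpositive contributions $-\lambda\a^\top e^{-\boldsymbol{\alpha}^*/\lambda}-\lambda\b^\top e^{-\boldsymbol{\beta}^*/\lambda}$ on the optimal side then gives
\[
\eps\,T \;\le\; \lambda n^{c_6}(\a+\b)^\top\one + \eps\,\one^\top\K\one/n^2,
\]
after which conditions~(iv)--(v) together with the normalization deliver $T<1$. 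The identical maneuver for $\tilde f_u$ is where the hypothesis $\one^\top(\K-\tK)\one/n^2\le 1/4$ enters, exactly as you anticipated. So the only missing idea in your plan is this test point $\bm q_n$; once you substitute it for $\mathbf{0}$, the remainder of your argument goes through.
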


\begin{proof}
    The proof is similar to that of Lemma~\ref{lem:a1}.
    Let $\boldsymbol{u}^*=e^{\boldsymbol{\alpha}^*/\eps}$, $\boldsymbol{v}^*=e^{\boldsymbol{\beta}^*/\eps}$, $\bar{\boldsymbol{u}}=e^{\bar{\boldsymbol{\alpha}}/\eps}$, and $\bar{\boldsymbol{v}}=e^{\bar{\boldsymbol{\beta}}/\eps}$. By the first-order optimality conditions for~\eqref{eq:dual-uot} and~\eqref{eq:dual-uot-spar},
    \[
    (\boldsymbol{u}^*)^{(\lambda+\eps)/\lambda}\odot (\K \boldsymbol{v}^*)=\a,
    \qquad
    \bar{\boldsymbol{u}}^{(\lambda+\eps)/\lambda}\odot (\tK \bar{\boldsymbol{v}})=\a.
    \]
    Hence,
    \begin{align*}
    \|\K\boldsymbol{v}^*(\boldsymbol{u}^*)^\top\|_*
    &= \|\K\boldsymbol{v}^*\|_2\,\|\boldsymbol{u}^*\|_2 \\
    &\le \|\a\oslash (\boldsymbol{u}^*)^{(\lambda+\eps)/\lambda}\|_2\,\sqrt{n}\,\|\boldsymbol{u}^*\|_\infty \\
    &\le \sqrt{n}\,\|\a\|_2\,\|(\boldsymbol{u}^*)^{-1}\|_\infty^{(\lambda+\eps)/\lambda}\,\|\boldsymbol{u}^*\|_\infty \\
    &\le \sqrt{n\|\a\|_\infty}\,\exp\!\left(\left(2+\frac{\eps}{\lambda}\right)c_{u,\alpha}\right)
    \le \sqrt{c_a}e^{(2+c_6)c_{u,\alpha}}.
    \end{align*}
    Applying the same argument to $(\bar{\boldsymbol{\alpha}},\bar{\boldsymbol{\beta}})$ gives
    \[
    \|\tK\bar{\boldsymbol{v}}\bar{\boldsymbol{u}}^\top\|_*\le \sqrt{c_a}e^{(2+c_6)c_{u,\alpha}}.
    \]
    For notational simplicity, write $c_{H,u}:=\sqrt{c_a}e^{(2+c_6)c_{u,\alpha}}$.
	By the definitions of $\boldsymbol{\alpha}^*,\boldsymbol{\beta}^*,\bar{\boldsymbol{\alpha}},\bar{\boldsymbol{\beta}}$ and the triangle inequality, it holds that
	\begin{equation}
	|f_u(\boldsymbol{\alpha}^*,\boldsymbol{\beta}^*)-f_u(\bar{\boldsymbol{\alpha}},\bar{\boldsymbol{\beta}})| \le |f_u(\boldsymbol{\alpha}^*,\boldsymbol{\beta}^*)-\tilde{f}_u({\boldsymbol{\alpha}}^*,{\boldsymbol{\beta}}^*)|+|\tilde{f}_u(\bar{\boldsymbol{\alpha}},\bar{\boldsymbol{\beta}})-f_u(\bar{\boldsymbol{\alpha}},\bar{\boldsymbol{\beta}})|. \label{eq:lem6-27}
	\end{equation}
	For the first term in the right-hand side of \eqref{eq:lem6-27}, simple calculation yields that
	\begin{align}
	|f_u(\boldsymbol{\alpha}^*,\boldsymbol{\beta}^*)-\tilde{f}_u(\boldsymbol{\alpha}^*,\boldsymbol{\beta}^*)|
	=&|\eps\langle e^{\boldsymbol{\alpha}^*/\eps},(\tK-\K)e^{\boldsymbol{\beta}^*/\eps}\rangle| \nonumber\\
	\le& \eps \|(\tK-\K)\K^{-1}\|_2 \|\K e^{\boldsymbol{\beta}^*/\eps}(e^{\boldsymbol{\alpha}^*/\eps})^\top\|_* \label{eq:lem3-22}\\
        \le& \eps c_{H,u} \|(\tK-\K)\K^{-1}\|_2 \nonumber\\
        \le& \eps c_{H,u} c_2 \frac{\|\tK-\K\|_2}{\|\K\|_2}. \nonumber
	\end{align}
	
	Applying the same techniques to Lemma~\ref{lem:a1}, we conclude that
	\begin{equation}\label{eq:lem6-29}
	|f_u(\bar{\boldsymbol{\alpha}},\bar{\boldsymbol{\beta}})-\tilde{f}_u(\bar{\boldsymbol{\alpha}},\bar{\boldsymbol{\beta}})| \le \eps c_{H,u} c_2^\prime \frac{\|\tK-\K\|_2}{\|\K\|_2}\left|1-\frac{\|\tK-\K\|_2}{\|\K\|_2}\right|^{-1}.
	\end{equation}
	Combining~\eqref{eq:lem6-27}, \eqref{eq:lem3-22}, and \eqref{eq:lem6-29} leads to the inequality in Lemma~\ref{lem:a3}.
\end{proof}

By the same argument as in the proof of Theorem~\ref{thm:sot}, $\tK$ is positive definite and its condition number is bounded with probability approaching one; hence Lemma~\ref{lem:a3} applies. Then, Theorem~\ref{thm:suot} is a direct result of Lemmas~\ref{lem:a2} and~\ref{lem:a3}.

\subsection{Proof of Theorem~\ref{thm:time}}

First, we provide a lemma that is used to establish the iteration bound for Algorithm~\ref{alg:core-uot}, i.e., \textsc{Spar-Sink} for UOT.

\begin{lemma}\label{lem:uot-time}
Suppose that $(\bar{\boldsymbol{\alpha}},\bar{\boldsymbol{\beta}})$ is the solution to \eqref{eq:dual-uot-spar}. Under the conditions of Theorem~\ref{thm:suot}, the infinity norms of $\bar{\boldsymbol{\alpha}}$ and $\bar{\boldsymbol{\beta}}$ are bounded by
$$
\max \left\{\|\bar{\boldsymbol{\alpha}}\|_{\infty}, \|\bar{\boldsymbol{\beta}}\|_{\infty}\right\} \leq \lambda R^\prime.
$$
Here, $R^\prime = \max \{\|\log (\a)\|_{\infty},\|\log (\b)\|_{\infty}\} +\log (n) + \max \{\log (n) + c_9, \|\C\|_{\infty}/\eps\}$, where $c_9$ is a constant only depending on $c_3$ and $c_4$.
\end{lemma}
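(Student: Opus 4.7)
The plan is to adapt the dual-variable bound of \cite{pham2020unbalanced} for classical UOT Sinkhorn to the sparsified setting; the only new ingredient is two-sided control on the entries of $\tK$ coming from Poisson sampling, which produces the additional ``$\log n + c_9$'' term in $R'$ whenever the sparsification scale dominates the cost-matrix scale $\|\C\|_\infty/\eps$.

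First, write down the fixed-point identity. Setting $\nabla\tilde f_u = 0$ in \eqref{eq:dual-uot-spar}, the stationarity equation in $\alpha_i$ is $a_i e^{-\bar\alpha_i/\lam}=e^{\bar\alpha_i/\eps}\sum_{j=1}^n\tilde K_{ij}e^{\bar\beta_j/\eps}$, and taking logs gives
$$\bar\alpha_i\,\frac{\lam+\eps}{\lam\eps}=\log a_i-\log\!\Big(\sum_{j=1}^n\tilde K_{ij}\,e^{\bar\beta_j/\eps}\Big),$$
with the symmetric identity for $\bar\beta_j$.

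Next, I control the log-sum by two-sided bounds on $\tK$. From the Poisson construction \eqref{eq:pois} combined with $K_{ij}\le 1$ and condition (ii), every nonzero entry satisfies $\tilde K_{ij}\le n^2/(c_3 s)$; invoking condition (iii) on $s$ collapses this into $\log\tilde K_{\max}\le \log n + c_9$ for a constant $c_9$ depending only on $c_3$ and $c_4$. Meanwhile, any nonzero entry retains $\tilde K_{ij}\ge K_{ij}\ge e^{-\|\C\|_\infty/\eps}$. Writing $v_j = e^{\bar\beta_j/\eps}$ so that $\log v_j\in[-\|\bar{\boldsymbol{\beta}}\|_\infty/\eps,\|\bar{\boldsymbol{\beta}}\|_\infty/\eps]$, these facts yield
$$\log\!\Big(\sum_j\tilde K_{ij}v_j\Big)\le\log n+(\log n+c_9)+\|\bar{\boldsymbol{\beta}}\|_\infty/\eps,$$
while, on any row $i$ that contains at least one nonzero entry,
$$\log\!\Big(\sum_j\tilde K_{ij}v_j\Big)\ge-\|\C\|_\infty/\eps-\|\bar{\boldsymbol{\beta}}\|_\infty/\eps.$$

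Substituting both bounds into the fixed-point identity, passing to absolute values, and using the trivial inequality $\max(2\log n+c_9,\|\C\|_\infty/\eps)\le\log n+\max(\log n+c_9,\|\C\|_\infty/\eps)$, I obtain
$$|\bar\alpha_i|\,\frac{\lam+\eps}{\lam\eps}\le\|\log\a\|_\infty+\log n+\max\{\log n+c_9,\,\|\C\|_\infty/\eps\}+\|\bar{\boldsymbol{\beta}}\|_\infty/\eps=R'+\|\bar{\boldsymbol{\beta}}\|_\infty/\eps,$$
together with the symmetric inequality $|\bar\beta_j|(\lam+\eps)/(\lam\eps)\le R'+\|\bar{\boldsymbol{\alpha}}\|_\infty/\eps$. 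Setting $M=\max(\|\bar{\boldsymbol{\alpha}}\|_\infty,\|\bar{\boldsymbol{\beta}}\|_\infty)$, both bounds combine to $M(\lam+\eps)/(\lam\eps)\le R'+M/\eps$, and the identity $(\lam+\eps)/(\lam\eps)-1/\eps=1/\lam$ rearranges this into the desired $M\le\lam R'$.

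The main obstacle is justifying the ``nonzero entry per row'' hypothesis used for the lower bound. I would dispatch it with a union bound: condition (ii) gives $\sum_j p_{ij}^*\ge c_3 s/n$, which under condition (iii) diverges at least poly-logarithmically in $n$, so $\prod_j(1-p_{ij}^*)\le\exp(-\sum_j p_{ij}^*)$ is negligibly small and a union bound over the $n$ rows (and columns) absorbs the bad event into the ``with probability approaching one'' statement that Theorem \ref{thm:suot} already invokes. Everything else is a routine algebraic manipulation of the Sinkhorn fixed-point equation.
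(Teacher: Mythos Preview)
Your proposal is correct and follows essentially the same route as the paper: you use the first-order stationarity condition for $\tilde f_u$, apply two-sided log-sum-exp bounds controlled by the entrywise range of $\tK$, and then cancel the $\|\cdot\|_\infty/\eps$ term by symmetry to isolate $M/\lam\le R'$. The paper's only cosmetic difference is that it introduces an auxiliary cost $\widetilde C_{ij}=C_{ij}+\eps\log p_{ij}^*$ and bounds $\|\tC\|_\infty$ rather than bounding $\tK$ directly; since $\tilde K_{ij}=e^{-\tilde C_{ij}/\eps}$ on nonzero entries, the two parametrizations are equivalent and yield the same constant $c_9=-\log(c_3c_4)$. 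Your explicit union-bound argument for the event ``every row and column of $\tK$ has a nonzero entry'' is a point the paper glosses over, so your version is slightly more careful there.
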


\begin{proof}
This proof follows the proof of Lemma~3 in \cite{pham2020unbalanced}. According to Lemma 1 in \cite{pham2020unbalanced}, it holds that
\begin{equation}\label{eq:lem7-1}
    \frac{\bar{\alpha}_i}{\lam} = \log (a_i)-\log \left(\sum\nolimits_{j=1}^n e^{(\bar{\alpha}_i+\bar{\beta}_j)/\eps} \widetilde{K}_{ij}\right).
\end{equation}
Denote $\mathcal{S}=\{(i,j)\in[n]\times [n] | \widetilde{K}_{ij}>0\}$ and $\mathcal{S}_i=\{j\in[n] | \widetilde{K}_{ij}>0\}$ for $i\in [n]$.
By introducing a matrix $\tC\in \RR^{n\times n}$ with
\begin{align*}
\widetilde{C}_{i j}= \begin{cases} C_{i j} + \eps \log (p^\ast_{i j}) & \text { if } (i,j) \in \mathcal{S} \\
0 & \text { otherwise, }\end{cases}  
\end{align*}
then \eqref{eq:lem7-1} can be rewritten as 
\begin{equation*}
    \frac{\bar{\alpha}_i}{\lam} = \log (a_i)-\log \left(\sum\nolimits_{j\in \mathcal{S}_i} e^{(\bar{\alpha}_i+\bar{\beta}_j-\widetilde{C}_{ij})/\eps}\right),
\end{equation*}
which can be further reorganized as
\begin{equation}\label{eq:lem7-2}
    \bar{\alpha}_i\left(\frac{1}{\lam}+\frac{1}{\eps}\right) = \log (a_i)-\log \left(\sum\nolimits_{j\in \mathcal{S}_i} e^{(\bar{\beta}_j-\widetilde{C}_{ij})/\eps}\right).
\end{equation}
According to the properties of the log-sum-exp function, the second term in the right-hand side of~\eqref{eq:lem7-2} has the lower bound
\begin{equation*}
    \log \left(\sum\nolimits_{j\in \mathcal{S}_i} e^{(\bar{\beta}_j-\widetilde{C}_{ij})/\eps}\right) \ge \log (|\mathcal{S}_i|) + \min_{j\in \mathcal{S}_i} \left\{\frac{\bar{\beta}_j-\widetilde{C}_{ij}}{\eps}\right\} \ge -\frac{\|\bar{\bm\beta}\|_{\infty}}{\eps} - \frac{\|\tC\|_{\infty}}{\eps},
\end{equation*}
and it has the upper bound
\begin{equation*}
    \log \left(\sum\nolimits_{j\in \mathcal{S}_i} e^{(\bar{\beta}_j-\widetilde{C}_{ij})/\eps}\right) \le \log (|\mathcal{S}_i|) + \max_{j\in \mathcal{S}_i} \left\{\frac{\bar{\beta}_j-\widetilde{C}_{ij}}{\eps}\right\} \le \log(n) + \frac{\|\bar{\bm\beta}\|_{\infty}}{\eps} +\frac{\|\tC\|_{\infty}}{\eps}.
\end{equation*}
Combining these two bounds together yields that
\begin{equation*}
    \left|\log \left(\sum\nolimits_{j\in \mathcal{S}_i} e^{(\bar{\beta}_j-\widetilde{C}_{ij})/\eps}\right)\right| \le \log(n) + \frac{\|\bar{\bm\beta}\|_{\infty}}{\eps} +\frac{\|\tC\|_{\infty}}{\eps}.
\end{equation*}
Therefore, we have
\begin{equation}\label{eq:lem7-3}
    |\bar{\alpha}_i|\left(\frac{1}{\lam}+\frac{1}{\eps}\right) \le |\log (a_i)| + \log(n) + \frac{\|\bar{\bm\beta}\|_{\infty}}{\eps} +\frac{\|\tC\|_{\infty}}{\eps}.
\end{equation}
By the definition of $\tC$ and conditions (ii)---(iii), we have 
\begin{equation*}
    C_{ij} - \eps \log(n/(c_3 c_4)) \le \widetilde{C}_{ij} \le C_{ij} \quad \text{for } (i,j) \in \mathcal{S},
\end{equation*}
which follows that
\begin{equation*}
    \|\tC\|_\infty \le \max\{\|\C\|_\infty, \eps \log(n/(c_3 c_4))\}.
\end{equation*}
Hence, \eqref{eq:lem7-3} can be further bounded by
\begin{equation*}
    |\bar{\alpha}_i|\left(\frac{1}{\lam}+\frac{1}{\eps}\right) \le |\log (a_i)| + \log(n) + \frac{\|\bar{\bm\beta}\|_{\infty}}{\eps} + \max\left\{\log(n)-\log(c_3 c_4), \frac{\|\C\|_{\infty}}{\eps} \right\}.
\end{equation*}
By choosing an index $i$ such that $|\bar{\alpha}_i| = \|\bar{\bm\alpha}\|_{\infty}$ and noting the fact that $|\log (a_i)| \le \max \{\|\log (\a)\|_{\infty},\|\log (\b)\|_{\infty}\}$, we have
$$
\|\bar{\bm\alpha}\|_{\infty}\left(\frac{1}{\lam}+\frac{1}{\eps}\right) \le \frac{\|\bar{\bm\beta}\|_{\infty}}{\eps}+R^\prime.
$$
Without loss of generality, assume that $\|\bar{\bm\alpha}\|_{\infty} \ge \|\bar{\bm\beta}\|_{\infty}$. Then, the result in Lemma~\ref{lem:uot-time} follows.
\end{proof}

Now, we prove the {\bf Theorem~\ref{thm:time}} in the manuscript.

\bigskip

\begin{proof}
(I) We first show that \textsc{Spar-Sink} has the same order of iterations to Sinkhorn in OT problems.
Suppose $L_1$ (resp. $L_2$) represents the number of iterations in the Sinkhorn algorithm (resp. \textsc{Spar-Sink} algorithm) such that $\|\u^{(t)}-\u^{(t-1)}\|_1 + \|\v^{(t)}-\v^{(t-1)}\|_1 \le \delta$.
From Lemmas 2---4 and the proof of Theorem~2 in \cite{altschuler2017near},
one can conclude that $L_1$ is bounded by
\begin{equation*}
L_1 \le 4\delta^{-2} (\OT_\eps(\a, \b)-f(\bm 0,\bm 0)) \le 4\delta^{-2} \log(q/l)
\end{equation*}
when Algorithm~1 is adopted, where $q=\sum_{i,j}K_{ij}$ and $l=\min_{i,j} K_{ij}$.

Using the same techniques, we have
\begin{equation*}
L_2 \le 4\delta^{-2} (\widetilde{\OT}_\eps(\a,\b)-f(\bm 0,\bm 0))
\end{equation*}
when Algorithm~3 is adopted.
According to Theorem~\ref{thm:sot}, we obtain that
\begin{equation}\label{eq:thm3-proof-1}
\widetilde{\OT}_\eps(\a,\b)-f(\bm 0,\bm 0) = \OT_\eps(\a, \b)-f(\bm 0,\bm 0)+r\le \log(q/l)+r
\end{equation}
with $r=\widetilde{\OT}_\eps(\a,\b)-\OT_\eps(\a, \b)=o_P(1)$ under the regularity conditions in Theorem~\ref{thm:sot}. Hence, we conclude that $L_2 \le O(\delta^{-2} \log(q/l))$ in probability, which has the same order to $L_1$.

(II) Next, we focus on the UOT problems.
Theorem~2 in \cite{pham2020unbalanced} shows that when $\eps=\epsilon/U$ and the number of iterations in the Sinkhorn algorithm achieves
\begin{equation}\label{eq:thm3-proof-2}
    L_1^\prime := 1+\left(\frac{\lam U}{\epsilon}+1\right)\left[\log (8 \eps R)+\log (\lam(\lam+1)) +3 \log \left(\frac{U}{\epsilon}\right)\right],
\end{equation}
the output of Algorithm~\ref{alg:sink-uot} is an $\epsilon$-approximation (see Definition~1 in \cite{pham2020unbalanced} for a detailed definition) of the optimal solution of the UOT problem~\eqref{eq:uot}. The quantities in~\eqref{eq:thm3-proof-2} are defined as follows:
$$
\begin{aligned}
R= & \max \left\{\|\log (\a)\|_{\infty},\|\log (\b)\|_{\infty}\right\} + \max \left\{\log (n), \frac{\|\C\|_{\infty}}{\eps} - \log (n)\right\}, \\
U_1= & \frac{\|\a\|_1+\|\b\|_1}{2}+\frac{1}{2}+\frac{1}{4 \log (n)}, \\
U_2= & \left(\frac{\|\a\|_1+\|\b\|_1}{2}\right)\left[\log \left(\frac{\|\a\|_1+\|\b\|_1}{2}\right)+2 \log (n)-1\right]+\log (n)+\frac{5}{2}, \\
U= & \max \left\{U_1+U_2, 2 \epsilon, \frac{4 \epsilon \log (n)}{\lam}, \frac{4 \epsilon(\|\a\|_1+\|\b\|_1) \log (n)}{\lam}\right\}.
\end{aligned}
$$

By using the same procedures but replacing Lemma~3 in \cite{pham2020unbalanced} with Lemma~\ref{lem:uot-time} above, and combining with the result that $\widetilde{\UOT}_{\lam, \eps}(\a,\b)-\UOT_{\lam, \eps}(\a, \b)=o_P(1)$ under the conditions of Theorem~\ref{thm:suot}, we can conclude that when $\eps=\epsilon/U$ and the number of iterations in the \textsc{Spar-Sink} algorithm achieves
$$
L_2^\prime := 1+\left(\frac{\lam U}{\epsilon}+1\right)\left[\log (8 \eps R^\prime)+\log (\lam(\lam+1)) +3 \log \left(\frac{U}{\epsilon}\right)\right],
$$
the output of Algorithm~\ref{alg:core-uot} is also an $\epsilon$-approximation of the optimal solution of~\eqref{eq:uot} in probability. Due to the fact that $R^\prime = O(R)$, we obtain that $L_1^\prime$ and $L_2^\prime$ are of the same order.

\end{proof}

\section{Additional Numerical Results}

In this appendix, we provide extra experimental results to show the robustness and asymptotic convergence of the proposed algorithm.

\subsection{Sensitivity Analysis}
We have shown that the proposed \textsc{Spar-Sink} algorithm is not sensitive to the entropic regularization parameter $\eps$. In this section, we show that the robustness also holds for the marginal regularization parameter $\lambda$ in UOT problems.

We set $\lambda \in \{0.1, 1, 5\}$, with the remaining settings being the same as those in Section~\ref{sec:simu-subsec1}. 
The results are presented in Fig.~\ref{fig:simu-uot2}, which depicts the comparison of estimation errors among various methods w.r.t. the classical Sinkhorn algorithm, represented as $\operatorname{RMAE}^{(\UOT)}$, versus increasing subsample sizes.
We observe that \textsc{Spar-Sink} performs the best in all cases, and its estimation error becomes smaller as $\eta$ decreases, i.e., from \textbf{R1} to \textbf{R2} and \textbf{R3}. Such an observation indicates the proposed method can fully exploit the sparsity of the kernel matrix, leading to superior estimation accuracy.

\begin{figure}[!t]
    \centering
    \includegraphics[width=0.8\linewidth]{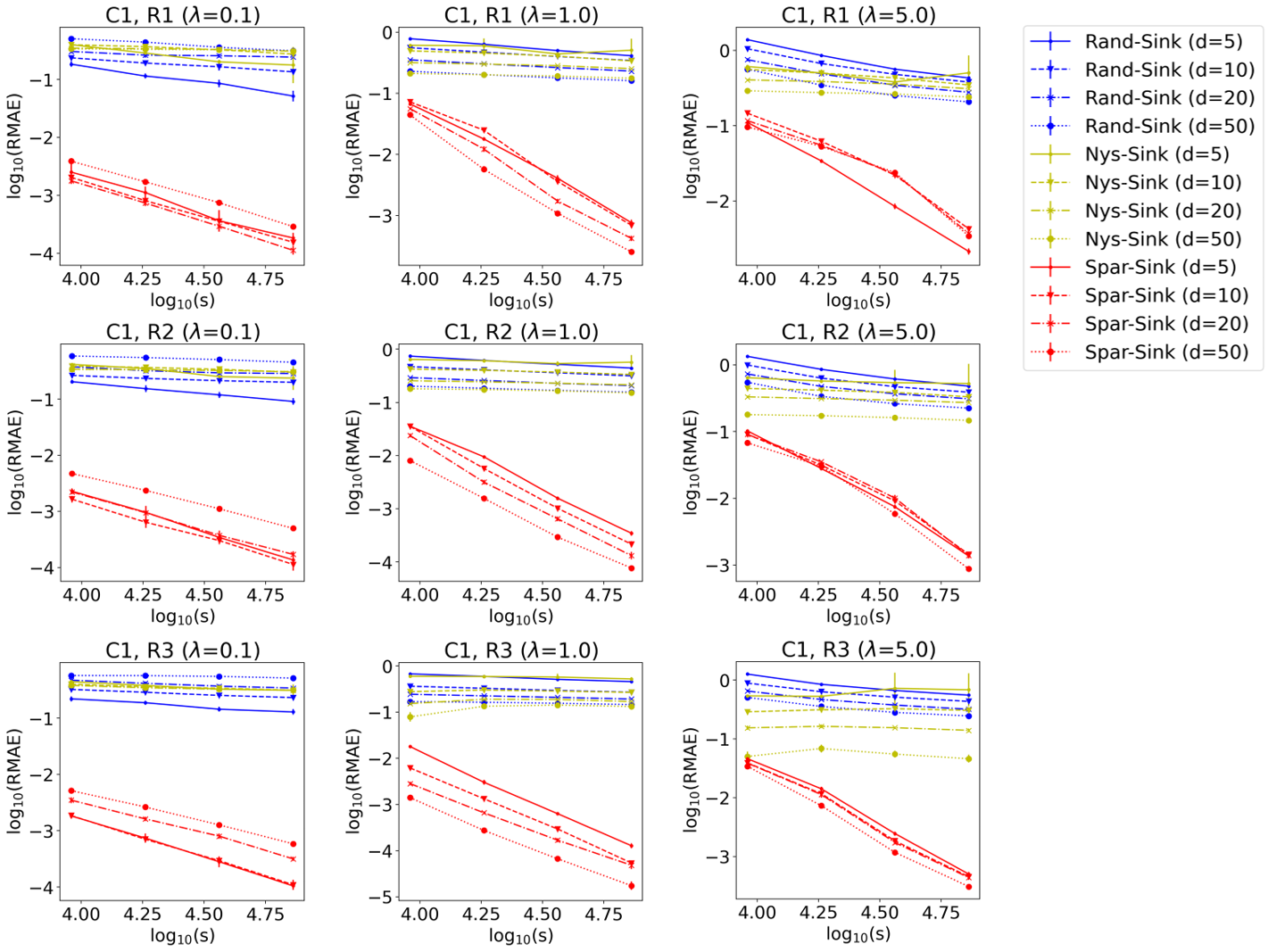}
    \caption{Comparison of subsampling-based methods w.r.t. $\operatorname{RMAE}^{(\UOT)}$ versus increasing $s$ (in log-log scale). Each row represents a different sparsity ratio (\textbf{R1---R3}), and each column represents a different $\lambda$. Different methods are marked by different colors, respectively, and each line type represents a different dimension $d$. Vertical bars are the standard errors.}
    \label{fig:simu-uot2}
\end{figure}

\subsection{Asymptotic Convergence}

To demonstrate the asymptotic convergence of our proposed method, we display the estimation error $\operatorname{RMAE}^{(\OT)}$ versus increasing sample sizes $n$ in Fig.~\ref{fig:simu-ot-n}, where $n\in\{2^0, 2^1, \ldots, 2^6\}\times 10^2$, $s = 8 s_0(n)$, and $\eps = 0.1$.
Other choices of $\eps$ yield similar results and thus are omitted here.
In Fig.~\ref{fig:simu-ot-n}, the \textsc{Spar-Sink} algorithm always yields a smaller estimation error than competitors. Notably, when $d$ is relatively large (e.g., $d \geq 10$), $\operatorname{RMAE}^{(\OT)}$ of \textsc{Spar-Sink} decreases substantially faster than that of \textsc{Nys-Sink}, which indicates a higher convergence rate.

\begin{figure}[!t]
    \centering
    \includegraphics[width=0.95\linewidth]{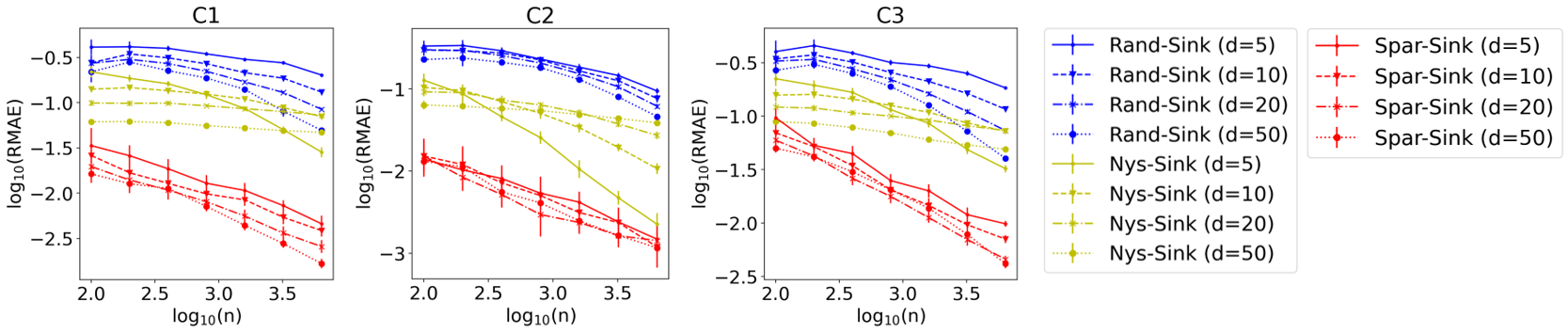}
    \caption{Comparison of different methods w.r.t. $\operatorname{RMAE}^{(\OT)}$ versus increasing $n$ (in log-log scale). Each column represents a different data generation pattern (\textbf{C1---C3}). Vertical bars are the standard errors.}
    \label{fig:simu-ot-n}
\end{figure}

Figure~\ref{fig:simu-uot-n} displays the results of $\operatorname{RMAE}^{(\UOT)}$ versus increasing $n$ and $s = 8 s_0(n)$, under $\eps = 0.1$ and $\lambda = 0.1$.
As shown in Fig.~\ref{fig:simu-uot-n}, the estimations of both \textsc{Rand-Sink} and \textsc{Nys-Sink} methods become worse as $n$ grows, while \textsc{Spar-Sink} converges significantly faster with the increase of $n$.

\begin{figure}[!t]
    \centering
    \includegraphics[width=0.95\linewidth]{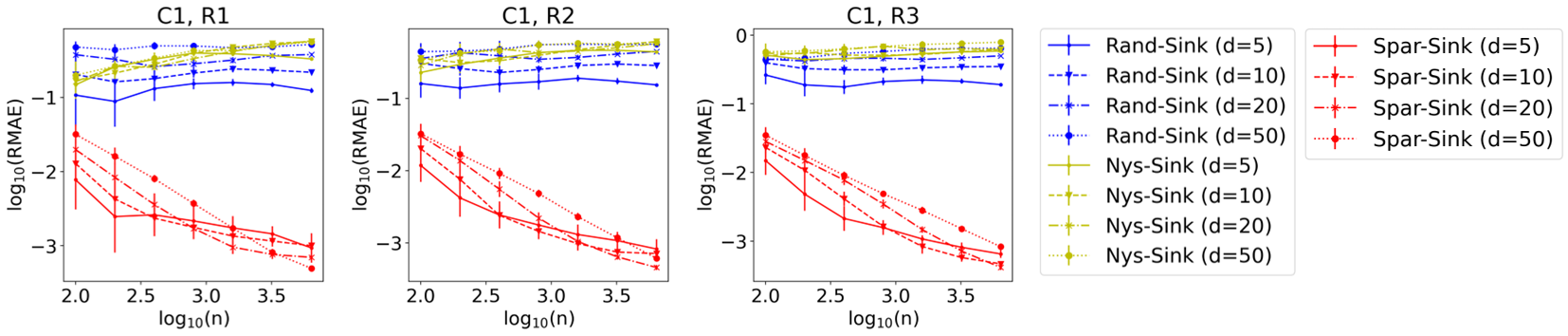}
    \caption{Comparison of different methods w.r.t. $\operatorname{RMAE}^{(\UOT)}$ versus increasing $n$ (in log-log scale). Each column represents a different sparsity ratio (\textbf{R1---R3}). Vertical bars are the standard errors.}
    \label{fig:simu-uot-n}
\end{figure}

\subsection{Approximation of Wasserstein Barycenters}

\textit{Experiments on synthetic data.} We compare the proposed \textsc{Spar-IBP} method with the classical IBP method, as well as two subsampling-based methods, \textsc{Nys-IBP} and \textsc{Rand-IBP}, which are direct extensions of \textsc{Nys-Sink} and \textsc{Rand-Sink} to approximate Wasserstein barycenters, respectively.

The input probability measures $\b_1,\b_2,\b_3\in\Delta^{n-1}$ are generated as:
\begin{itemize}
\item $\b_1$ is an empirical Gaussian distribution $N(\frac{1}{5},\frac{1}{50})$; 
\item $\b_2$ is an empirical Gaussian mixture distribution $\frac{1}{2} N(\frac{1}{2},\frac{1}{60}) + \frac{1}{2} N(\frac{4}{5},\frac{1}{80})$;
\item $\b_3$ is an empirical t-distribution with 5 degrees of freedom $t_5(\frac{3}{5},\frac{1}{100})$.
\end{itemize}
After generating the measures as above, we add $10^{-2}\max_{i\in[n]} b_{k,i}$ to each component of $\b_k$ and then normalize it such that $\sum_{i\in[n]} b_{k,i} = 1$, for $k\in[m]$.
Suppose the measures and their barycenter share the same support points $\{\x_i\}_{i=1}^n$, where $\x_i$'s are randomly and uniformly located over $(0,1)^d$, with $n=10^3$ and $d\in\{5,10,20\}$. Then, the cost matrices $\C_1=\C_2=\C_3$ are defined by squared Euclidean distances.
We set $\bm{w}=\mathbf{1}_m/m$, $\eps\in\{5,5^0,5^{-1}\}\times 10^{-2}$ and $s=\{5,10,15,20\}\times s_0(n)$ with $s_0(n) = 10^{-3} n \log^4(n)$.
For comparison, we calculate the approximation error of each estimator based on 100 replications, i.e.,
$$
\operatorname{Error} = \frac{1}{100}\sum_{i=1}^{100}\|\widetilde{\bm q}_\eps^{\ast(i)} - \bm q_\eps^{\ast(i)}\|_1,
$$
where $\widetilde{\bm q}_\eps^{\ast(i)}$ represents the estimator in the $i$th replication, and $\bm q_\eps^{\ast(i)}$ is obtained by the IBP algorithm. The results are shown in Fig.~\ref{fig:simu-bary}, from which we observe the proposed \textsc{Spar-IBP} method outperforms competitors in most circumstances, with its advantage becoming more prominent as the value of $\eps$ decreases. The comparison of CPU time has similar pattern to Fig.~\ref{fig:simu-time} and is omitted here.

\begin{figure}[!t]
    \centering
    \includegraphics[width=0.82\linewidth]{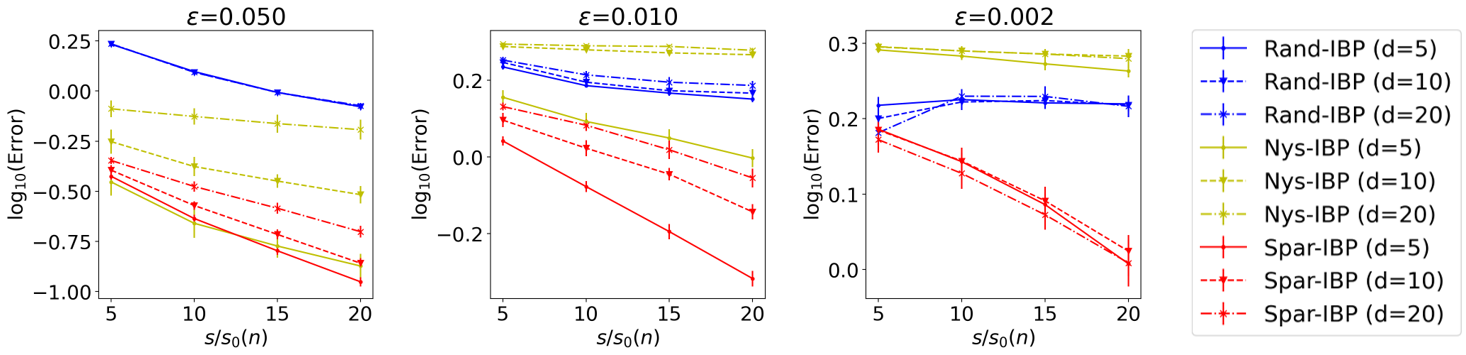}
    \caption{Comparison of different methods w.r.t. $\log_{10}(\operatorname{Error})$ versus increasing $s/s_0(n)$ under different levels of $\eps$. Each color marks a specific method, and each line type represents a different dimension $d$. Vertical bars are the standard errors.}
    \label{fig:simu-bary}
\end{figure}

\textit{Experiments on MNIST.} 
Further, we evaluate our \textsc{Spar-IBP} algorithm on the MNIST data set \citep{lecun1998gradient} following the work of \cite{cuturi2014fast}.
For each digit from 0 to 9, we randomly select 15 images from the data set, and uniformly rescale each image between half-size and double-size of its original scale at random. After that, each image is normalized such that all pixel values add up to 1. Then, the images are translated randomly within a $64\times 64$ grid, with a bias towards corners. 
Given the reshaped images with equal weights (i.e., $\bm{w}=\mathbf{1}_m/m$), we compute their Wasserstein barycenter. We also include the performance of the IBP algorithm for comparison. The images and results are shown in Fig.~\ref{fig:mnist-bary}. The regularization parameter is set to be $\eps=10^{-3}$ for both methods, and the subsample parameter is taken as $s=20s_0(n)$ for \textsc{Spar-IBP}.

\begin{figure}[!t]
    \centering
    \includegraphics[width=0.9\linewidth]{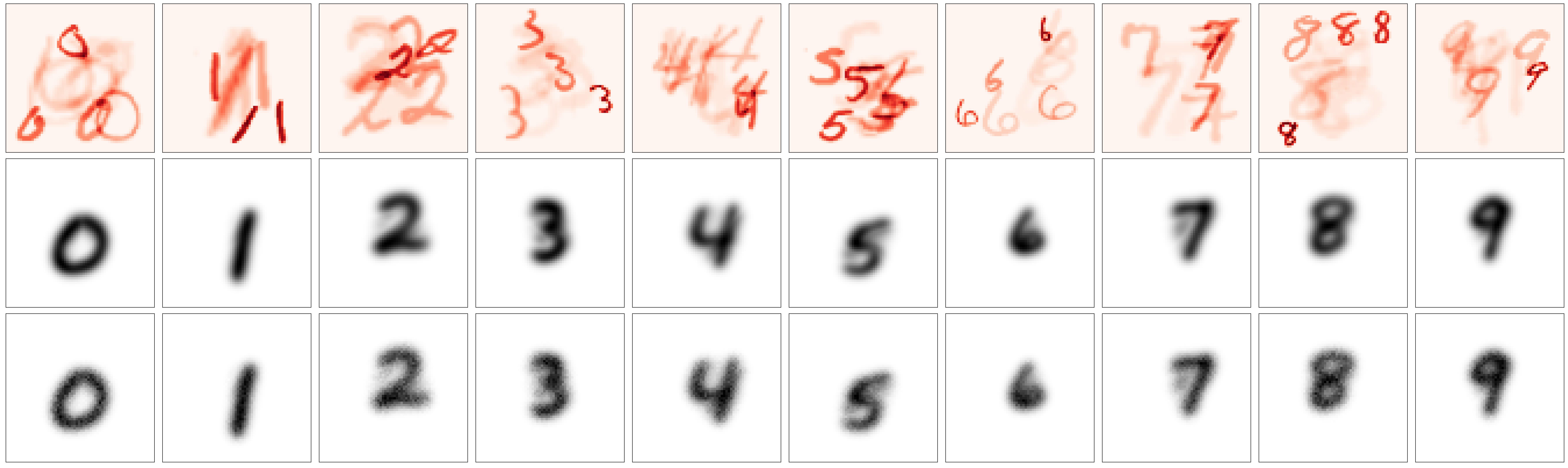}
    \caption{(Top row) For each digit, 8 out of the 15 rescaled and translated images are randomly chosen for illustration. (Middle row) Barycenters approximated by the \textsc{IBP} method. (Bottom row) Barycenters approximated by the \textsc{Spar-IBP} method.}
    \label{fig:mnist-bary}
\end{figure}

In Fig.~\ref{fig:mnist-bary}, we observe the approximate barycenters generated by \textsc{Spar-IBP} are almost as clear as those obtained by \textsc{IBP}.
Furthermore, with an average CPU time of 27.50s to compute one barycenter, \textsc{Spar-IBP} is considerably more efficient than \textsc{IBP}, which requires 340.29s.
Such results demonstrate the effectiveness and efficiency of our \textsc{Spar-IBP} method for approximating Wasserstein barycenters.

\section{Applications}
Following the recent work of \cite{le2021robust} and \cite{li2022hilbert}, we evaluate the performance of our proposed \textsc{Spar-Sink} method in two applications, color transfer and generative modeling.

\subsection{Color Transfer}
The objective is to transfer the color of an \textit{ocean sunset} image to an \textit{ocean daytime} image, as depicted in Fig.~\ref{fig:app-color1}.
The pixels of each image can be represented as point clouds in the three-dimensional RGB space.
Due to the large number of pixels in each image, which is nearly a million, we follow the preprocessing step in \cite{ferradans2014regularized} and \cite{le2021robust} to randomly downsample $n=5000$ pixels from each image, resulting in $\{\x_i\}_{i=1}^n, \{\y_j\}_{j=1}^n \subset \RR^{3}$ and use discrete uniform distributions to define $\a, \b \in \Delta^{n-1}$. We construct the cost matrix $\C$ using pairwise squared Euclidean distances, that is, $C_{ij}=\|\x_i-\y_j\|_2^2$.
To generate a new image with source content and target color, we compute the optimal transportation plan between $\a$ and $\b$ and extend the plan to the entire image using the nearest neighbor interpolation proposed by~\cite{ferradans2014regularized}.

\begin{figure}[!t]
    \centering
    \subfigure[Source and target images.]{
    \includegraphics[height=3.6cm]{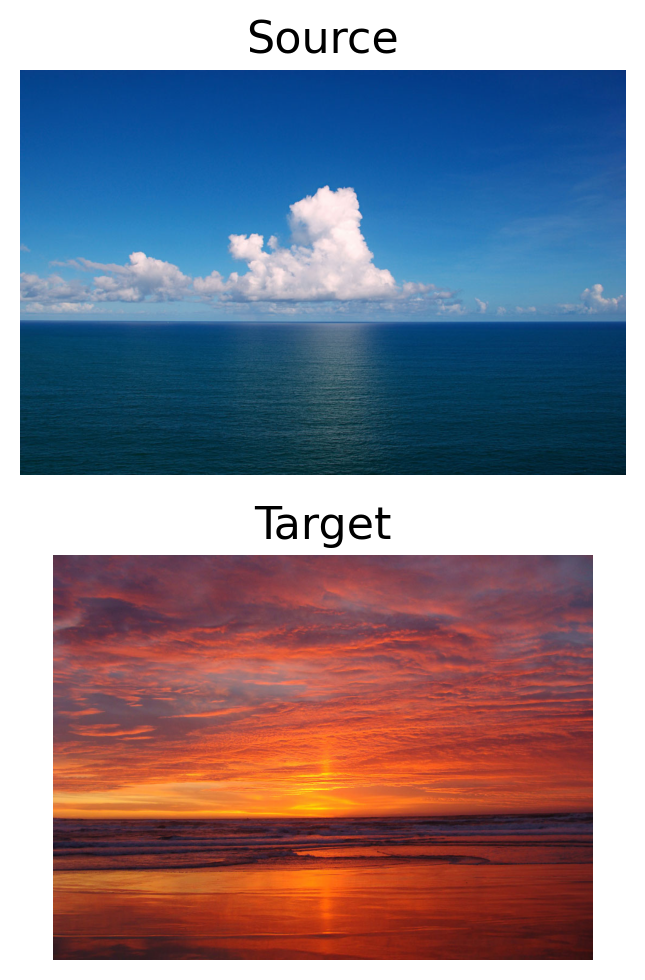}\label{fig:app-color1}
    }
    \hspace{0.2cm}
    \subfigure[Transferred images with source content and target color, respectively achieved by Sinkhorn, \textsc{Nys-Sink}, \textsc{Robust-NysSink}, and \textsc{Spar-Sink} (ours).]{
    \includegraphics[height=3.6cm]{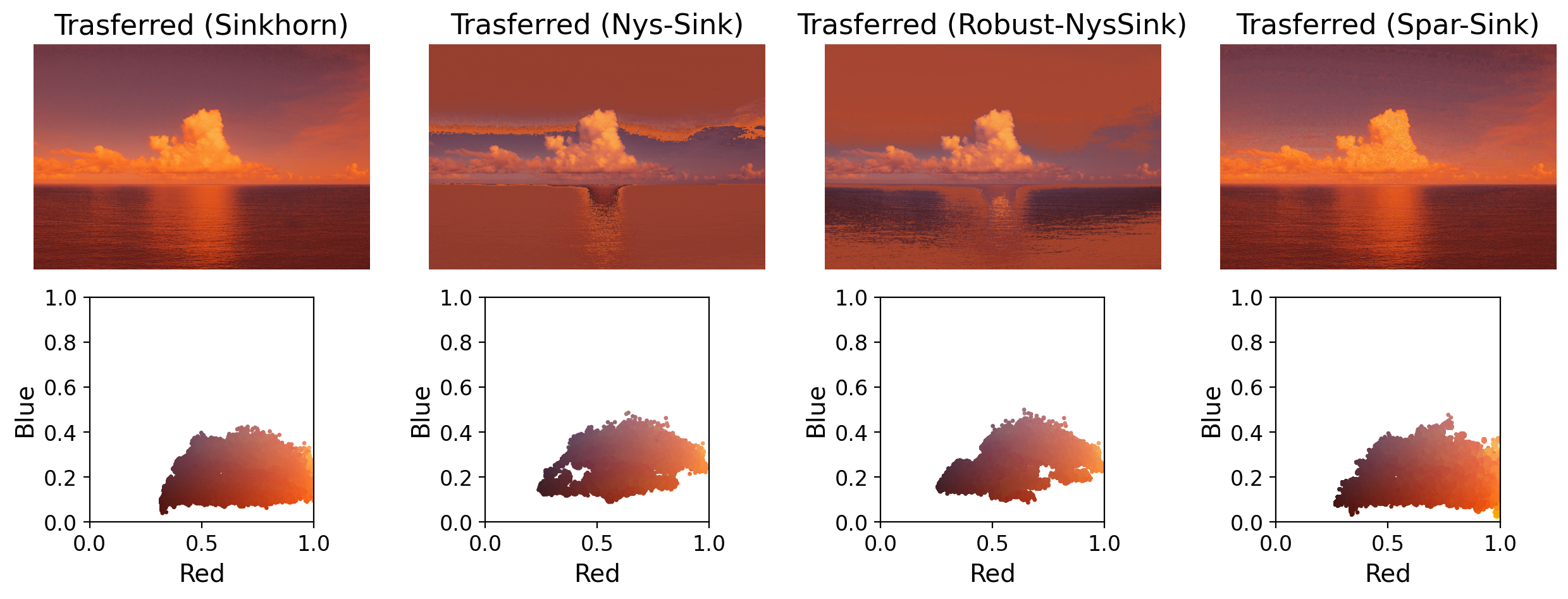}\label{fig:app-color2}
    }
    \caption{Comparison of different methods on color transfer. Subfigure~(b): (Top row) Transferred images. (Bottom row) The corresponding point clouds in RGB space. The color of the dot represents each point's RGB value.}
    \label{fig:app-color}
\end{figure}

To approximate the optimal transportation plan, we compare various methods, including the Sinkhorn, \textsc{Nys-Sink}, and \textsc{Spar-Sink} in classical OT formulations, as well as the \textsc{Robust-NysSink} from the robust OT framework \citep{le2021robust}. In this experiment, we set $\eps = 10^{-2}$ and $\lambda=10$, where $\lambda$ is the marginal regularized parameter in \textsc{Robust-NysSink}. For subsampling-based approaches, we take $s = 8s_0(n)$ for \textsc{Spar-Sink}, and $r = \lceil s/n \rceil$ for \textsc{Nys-Sink} and \textsc{Robust-NysSink}.

The results of the transferred images generated by these methods are presented in Fig.~\ref{fig:app-color2}.
Among the evaluated methods, we observe that \textsc{Spar-Sink} produces a transferred image that closely resembles the result of Sinkhorn, and its corresponding RGB scatter diagram is also similar to that of Sinkhorn.
In terms of computational efficiency, the CPU time of computing the plan is 60.45s (Sinkhorn), 12.92s (\textsc{Nys-Sink}), 27.74s (\textsc{Robust-NysSink}), and 3.15s (\textsc{Spar-Sink}), respectively.
These results demonstrate the effectiveness of our method on this common computer vision application.

\subsection{Generative Modeling}
In this section, we introduce a new variant of the Sinkhorn auto-encoder (SAE) \citep{patrini2020sinkhorn}, named \textsc{Spar-Sink} auto-encoder (SSAE), by using the proposed \textsc{Spar-Sink} approach. Specifically, we employ \textsc{Spar-Sink} to approximate the Sinkhorn divergence \citep{genevay2018learning, genevay2019sample, feydy2019interpolating} between the latent prior distribution and the expected posterior distribution during the training of auto-encoders. We assess the efficacy of this newly proposed generative model in several image generation tasks and compare it with the original SAE.

Assume $f$ (resp. $g$) is an encoder (resp. decoder) parameterized by a neural network, and $p_Z$ is a prior distribution on the latent space. Given a set of samples from a data distribution, i.e., $x_1, \ldots, x_N \sim p_X$, the objective of SAE is formulated as
$$
\min_{f,g}   \frac{1}{N} \sum_{i=1}^N c\left(x_i, g(f(x_i))\right) +\gamma S\left(f_{\#} p_X, p_Z\right),
$$
where $f_{\#}$ denotes the push-forward operator, $c(\cdot,\cdot)$ represents the reconstruction loss, and $S(\cdot,\cdot)$ is a regularizer with weight $\gamma > 0$ defined by Sinkhorn divergence, that is,
\begin{equation}\label{eq:sink-div}
    S\left(f_{\#} p_X, p_Z\right) = \OT_\eps \left(f_{\#} p_X, p_Z\right) -\frac{1}{2}\left( \OT_\eps \left(f_{\#} p_X, f_{\#} p_X\right) + \OT_\eps \left(p_Z, p_Z\right)\right).
\end{equation}
The objective of SSAE replaces $\OT_\eps(\cdot,\cdot)$ in~\eqref{eq:sink-div} with its approximation, $\widetilde{\OT}_\eps(\cdot,\cdot)$, computed using Algorithm~\ref{alg:core-ot}.

We train auto-encoders to embed the MNIST data \citep{lecun1998gradient} into a 10-dimensional latent space. The auto-encoding architecture is identical to that used in \cite{kolouri2019sliced}.
We use the Euclidean distance as the distance between samples, the binary cross entropy plus $\ell_1$ loss as the reconstruction loss, the standard Gaussian distribution as $p_Z$, and Adam \citep{kingma2014adam} as the optimizer.
For fairness, both SAE and SSAE employ the same hyperparameters: the regularization parameters are $\gamma=0.05$ and $\eps=0.01$; the number of epochs is 40; the batch size $n=500$; the learning rate is $0.001$; other parameters are set by default. Additionally, we set the subsample parameter $s=10s_0(n)$ for SSAE.

We compare SAE and SSAE w.r.t. Fr{\'e}chet inception distance (FID) \citep{heusel2017gans} between $10,000$ test samples and $10,000$ randomly generated samples, and also record their running time on an RTX 3090 GPU. The comparison is conducted based on 100 replications, and the results are presented in Table~\ref{tab:ae}, which shows that the proposed SSAE generator achieves a smaller FID in just half the time compared to SAE. 
Moreover, we provide image interpolation and reconstruction results obtained by SSAE in Fig.~\ref{fig:app-gen}, further highlighting the capability of our proposed method in generative modeling tasks.

\begin{table}[t]
\centering
\label{tab:ae}
\begin{tabular}{ccc}
\hline
Methods & FID & Time  \\
\hline
SAE & 24.72$_{\pm 0.13}$ & 125.87 \\
SSAE & 23.65$_{\pm 0.06}$ & 64.81 \\    
\hline
\end{tabular}
\caption{Comparisons on learning image generators w.r.t. average FID score (with standard deviations presented in footnotes) and running time (in seconds) of an epoch iteration.}
\end{table}

\begin{figure}[!t]
    \centering
    \subfigure[Digit interpolation.]{
    \includegraphics[height=3.6cm]{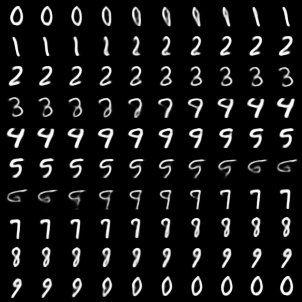}\label{fig:app-gen1}
    }
    \hspace{0.2cm}
    \subfigure[Digit reconstruction.]{
    \includegraphics[height=3.6cm]{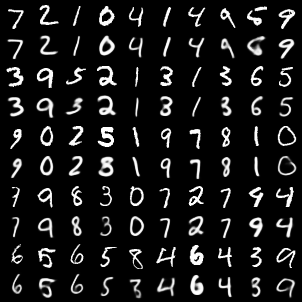}\label{fig:app-gen2}
    }
    \caption{The performance of SSAE on digit interpolation and reconstruction tasks. In the subfigure~(b), odd rows correspond to real images.}
    \label{fig:app-gen}
\end{figure}

\vskip 0.2in
\bibliography{ref} 

@inproceedings{deshpande2019max,
  title={Max-sliced {W}asserstein distance and its use for {GAN}s},
  author={Deshpande, Ishan and Hu, Yuan-Ting and Sun, Ruoyu and Pyrros, Ayis and Siddiqui, Nasir and Koyejo, Sanmi and Zhao, Zhizhen and Forsyth, David and Schwing, Alexander G},
  booktitle={Proceedings of the IEEE/CVF Conference on Computer Vision and Pattern Recognition},
  pages={10648--10656},
  year={2019},
  organization = {IEEE}
}

@phdthesis{nadjahi2021sliced,
  title={Sliced-Wasserstein distance for large-scale machine learning: Theory, methodology and extensions},
  author={Nadjahi, Kimia},
  year={2021},
  school={Institut polytechnique de Paris}
}

@Article{kundu2017recovering,
  author    = {Kundu, Abhisek and Drineas, Petros and Magdon-Ismail, Malik},
  title     = {Recovering {PCA} and sparse {PCA} via hybrid-$(\ell_1, \ell_2)$ sparse sampling of data elements},
  number    = {1},
  pages     = {2558--2591},
  volume    = {18},
  journal   = {The Journal of Machine Learning Research},
  publisher = {JMLR. org},
  year      = {2017},
}

@Article{drineas2011note,
  author    = {Drineas, Petros and Zouzias, Anastasios},
  title     = {A note on element-wise matrix sparsification via a matrix-valued {B}ernstein inequality},
  number    = {8},
  pages     = {385--389},
  volume    = {111},
  journal   = {Information Processing Letters},
  publisher = {Elsevier},
  year      = {2011},
}

@article{mahoney2011randomized,
  title={Randomized algorithms for matrices and data},
  author={Mahoney, Michael W},
  journal={Foundations and Trends{\textregistered} in Machine Learning},
  volume={3},
  number={2},
  pages={123--224},
  year={2011},
  publisher={Now Publishers, Inc.}
}

@article{gupta2018exploiting,
  author    = {Gupta, Neha and Sidford, Aaron},
  journal={Advances in Neural Information Processing Systems},
  title     = {Exploiting numerical sparsity for efficient learning: Faster eigenvector computation and regression},
  volume    = {31},
  pages     = {5274--5283},
  year      = {2018},
}

@article{yu2022optimal,
  title={Optimal distributed subsampling for maximum quasi-likelihood estimators with massive data},
  author={Yu, Jun and Wang, HaiYing and Ai, Mingyao and Zhang, Huiming},
  journal={Journal of the American Statistical Association},
  volume={117},
  number={537},
  pages={265--276},
  year={2022},
  publisher={Taylor \& Francis}
}

@Article{achlioptas2013near,
  author  = {Achlioptas, Dimitris and Karnin, Zohar S and Liberty, Edo},
  title   = {Near-Optimal Entrywise Sampling for Data Matrices},
  pages   = {1565--1573},
  volume  = {26},
  journal = {Advances in Neural Information Processing Systems},
  year    = {2013},
}

@InProceedings{muzellec2020missing,
  author       = {Muzellec, Boris and Josse, Julie and Boyer, Claire and Cuturi, Marco},
  booktitle    = {International Conference on Machine Learning},
  title        = {Missing data imputation using optimal transport},
  organization = {PMLR},
  volume       = {119},
  pages        = {7130--7140},
  year         = {2020},
}

@Article{ma2015statistical,
  author    = {Ma, Ping and Mahoney, Michael W and Yu, Bin},
  title     = {A statistical perspective on algorithmic leveraging},
  number    = {1},
  pages     = {861--911},
  volume    = {16},
  journal   = {The Journal of Machine Learning Research},
  publisher = {JMLR. org},
  year      = {2015},
}

@article{altschuler2019massively,
  title={Massively scalable {S}inkhorn distances via the {N}ystr{\"o}m method},
  author={Altschuler, Jason and Bach, Francis and Rudi, Alessandro and Niles-Weed, Jonathan},
  journal={Advances in Neural Information Processing Systems},
  volume={32},
  pages={4427--4437},
  year={2019}
}

@article{achlioptas2007fast,
  title={Fast computation of low-rank matrix approximations},
  author={Achlioptas, Dimitris and Mcsherry, Frank},
  journal={Journal of the Association for Computing Machinery},
  volume={54},
  number={2},
  pages={1--19},
  year={2007}
}

@InCollection{arora2006fast,
  author    = {Arora, Sanjeev and Hazan, Elad and Kale, Satyen},
  booktitle = {Approximation, Randomization, and Combinatorial Optimization. Algorithms and Techniques},
  title     = {A fast random sampling algorithm for sparsifying matrices},
  pages     = {272--279},
  publisher = {Springer},
  year      = {2006},
}

@Article{drineas2006fast,
  author    = {Drineas, Petros and Kannan, Ravi and Mahoney, Michael W},
  title     = {Fast {M}onte {C}arlo algorithms for matrices {I}: Approximating matrix multiplication},
  number    = {1},
  pages     = {132--157},
  volume    = {36},
  journal   = {SIAM Journal on Computing},
  publisher = {SIAM},
  year      = {2006},
}

@Article{candes2010power,
  author    = {Cand{\`e}s, Emmanuel J and Tao, Terence},
  title     = {The power of convex relaxation: Near-optimal matrix completion},
  number    = {5},
  pages     = {2053--2080},
  volume    = {56},
  journal   = {IEEE Transactions on Information Theory},
  publisher = {IEEE},
  year      = {2010},
}

@InProceedings{chen2014coherent,
  author       = {Chen, Yudong and Bhojanapalli, Srinadh and Sanghavi, Sujay and Ward, Rachel},
  booktitle    = {International Conference on Machine Learning},
  title        = {Coherent matrix completion},
  organization = {PMLR},
  pages        = {674--682},
  year         = {2014},
}

@article{benamou2002monge,
  title={The {M}onge--{K}antorovitch mass transfer and its computational fluid mechanics formulation},
  author={Benamou, J-D and Brenier, Yann and Guittet, Kevin},
  journal={International Journal for Numerical Methods in Fluids},
  volume={40},
  number={1-2},
  pages={21--30},
  year={2002},
  publisher={Wiley Online Library}
}

@article{brenier1997homogenized,
  title={A homogenized model for vortex sheets},
  author={Brenier, Yann},
  journal={Archive for Rational Mechanics and Analysis},
  volume={138},
  number={4},
  pages={319--353},
  year={1997},
  publisher={Springer}
}

@inproceedings{rubner1997earth,
  title={The earth mover’s distance, multi-dimensional scaling, and color-based image retrieval},
  author={Rubner, Yossi and Guibas, Leonidas J and Tomasi, Carlo},
  booktitle={Proceedings of the ARPA Image Understanding Workshop},
  volume={661},
  pages={668},
  year={1997},
  organization={ARPA}
}

@inproceedings{pele2009fast,
  title={Fast and robust earth mover's distances},
  author={Pele, Ofir and Werman, Michael},
  booktitle={IEEE 12th International Conference on Computer Vision},
  pages={460--467},
  year={2009},
  organization={IEEE}
}

@inproceedings{arjovsky2017wasserstein,
  title={Wasserstein generative adversarial networks},
  author={Arjovsky, Martin and Chintala, Soumith and Bottou, L{\'e}on},
  booktitle={International Conference on Machine Learning},
  volume={70},
  pages={214--223},
  year={2017},
  organization={PMLR}
}

@article{courty2016optimal,
  title={Optimal transport for domain adaptation},
  author={Courty, Nicolas and Flamary, R{\'e}mi and Tuia, Devis and Rakotomamonjy, Alain},
  journal={IEEE Transactions on Pattern Analysis and Machine Intelligence},
  volume={39},
  number={9},
  pages={1853--1865},
  year={2016},
  publisher={IEEE}
}

@article{flamary2018wasserstein,
  title={Wasserstein discriminant analysis},
  author={Flamary, R{\'e}mi and Cuturi, Marco and Courty, Nicolas and Rakotomamonjy, Alain},
  journal={Machine Learning},
  volume={107},
  number={12},
  pages={1923--1945},
  year={2018},
  publisher={Springer}
}

@article{meng2019large,
  title={Large-scale optimal transport map estimation using projection pursuit},
  author={Meng, Cheng and Ke, Yuan and Zhang, Jingyi and Zhang, Mengrui and Zhong, Wenxuan and Ma, Ping},
  journal={Advances in Neural Information Processing Systems},
  volume    = {32},
  pages={8118--8129},
  year={2019}
}

@article{panaretos2019statistical,
  title={Statistical aspects of {W}asserstein distances},
  author={Panaretos, Victor M and Zemel, Yoav},
  journal={Annual Review of Statistics and Its Application},
  volume={6},
  number={1},
  pages={405--431},
  year={2019},
  publisher={Annual Reviews}
}

@article{dubey2020functional,
  title={Functional models for time-varying random objects},
  author={Dubey, Paromita and M{\"u}ller, Hans-Georg},
  journal={Journal of the Royal Statistical Society: Series B (Statistical Methodology)},
  volume={82},
  number={2},
  pages={275--327},
  year={2020},
  publisher={Wiley Online Library}
}

@article{ferradans2014regularized,
  title={Regularized discrete optimal transport},
  author={Ferradans, Sira and Papadakis, Nicolas and Peyr{\'e}, Gabriel and Aujol, Jean-Fran{\c{c}}ois},
  journal={SIAM Journal on Imaging Sciences},
  volume={7},
  number={3},
  pages={1853--1882},
  year={2014},
  publisher={SIAM}
}

@article{su2015optimal,
  title={Optimal mass transport for shape matching and comparison},
  author={Su, Zhengyu and Wang, Yalin and Shi, Rui and Zeng, Wei and Sun, Jian and Luo, Feng and Gu, Xianfeng},
  journal={IEEE Transactions on Pattern Analysis and Machine Intelligence},
  volume={37},
  number={11},
  pages={2246--2259},
  year={2015},
  publisher={IEEE}
}

@article{solomon2015convolutional,
  title={Convolutional {W}asserstein distances: Efficient optimal transportation on geometric domains},
  author={Solomon, Justin and De Goes, Fernando and Peyr{\'e}, Gabriel and Cuturi, Marco and Butscher, Adrian and Nguyen, Andy and Du, Tao and Guibas, Leonidas},
  journal={ACM Transactions on Graphics},
  volume={34},
  number={4},
  pages={1--11},
  year={2015},
  publisher={ACM New York, NY, USA}
}

@inproceedings{xu2019learning,
  title={Learning with batch-wise optimal transport loss for 3{D} shape recognition},
  author={Xu, Lin and Sun, Han and Liu, Yuai},
  booktitle={Proceedings of the IEEE/CVF Conference on Computer Vision and Pattern Recognition},
  pages={3333--3342},
  year={2019},
  organization = {IEEE}
}

@article{zhang2021review,
  title={A review on modern computational optimal transport methods with applications in biomedical research},
  author={Zhang, Jingyi and Zhong, Wenxuan and Ma, Ping},
  journal={Modern Statistical Methods for Health Research},
  pages={279--300},
  year={2021},
  publisher={Springer}
}

@article{schiebinger2019optimal,
  title={Optimal-transport analysis of single-cell gene expression identifies developmental trajectories in reprogramming},
  author={Geoffrey Schiebinger and Jian Shu and Marcin Tabaka and Brian Cleary and Vidya Subramanian and Aryeh Solomon and Joshua Gould and Siyan Liu and Stacie Lin and Peter Berube and Lia Lee and Jenny Chen and Justin Brumbaugh and Philippe Rigollet and Konrad Hochedlinger and Rudolf Jaenisch and Aviv Regev and Eric S. Lander},
  journal={Cell},
  volume={176},
  number={4},
  pages={928--943},
  year={2019},
  publisher={Elsevier}
}

@article{marouf2020realistic,
  title={Realistic in silico generation and augmentation of single-cell {RNA}-seq data using generative adversarial networks},
  author={Marouf, Mohamed and Machart, Pierre and Bansal, Vikas and Kilian, Christoph and Magruder, Daniel S and Krebs, Christian F and Bonn, Stefan},
  journal={Nature Communications},
  volume={11},
  number={1},
  pages={1--12},
  year={2020},
  publisher={Nature Publishing Group}
}

@article{tanay2017scaling,
  title={Scaling single-cell genomics from phenomenology to mechanism},
  author={Tanay, Amos and Regev, Aviv},
  journal={Nature},
  volume={541},
  number={7637},
  pages={331--338},
  year={2017},
  publisher={Nature Publishing Group}
}

@article{cuturi2013sinkhorn,
  title={Sinkhorn distances: Lightspeed computation of optimal transport},
  author={Cuturi, Marco},
  journal={Advances in Neural Information Processing Systems},
  volume={26},
  pages={2292--2300},
  year={2013}
}

@article{sinkhorn1967concerning,
  title={Concerning nonnegative matrices and doubly stochastic matrices},
  author={Sinkhorn, Richard and Knopp, Paul},
  journal={Pacific Journal of Mathematics},
  volume={21},
  number={2},
  pages={343--348},
  year={1967},
  publisher={Mathematical Sciences Publishers}
}

@article{altschuler2017near,
  title={Near-linear time approximation algorithms for optimal transport via {S}inkhorn iteration},
  author={Altschuler, Jason and Weed, Jonathan and Rigollet, Philippe},
  journal={Advances in Neural Information Processing Systems},
  volume={30},
  pages={1961--1971},
  year={2017}
}

@article{alaya2019screening,
  title={Screening {S}inkhorn Algorithm for Regularized Optimal Transport},
  author={Alaya, Mokhtar Z and Berar, Maxime and Gasso, Gilles and Rakotomamonjy, Alain},
  journal={Advances in Neural Information Processing Systems},
  volume={32},
  pages={12169--12179},
  year={2019}
}

@inproceedings{feydy2019interpolating,
  title={Interpolating between optimal transport and {MMD} using {S}inkhorn divergences},
  author={Feydy, Jean and S{\'e}journ{\'e}, Thibault and Vialard, Fran{\c{c}}ois-Xavier and Amari, Shun-ichi and Trouv{\'e}, Alain and Peyr{\'e}, Gabriel},
  booktitle={International Conference on Artificial Intelligence and Statistics},
  volume={89},
  pages={2681--2690},
  year={2019},
  organization={PMLR}
}

@article{peyre2019computational,
  title={Computational optimal transport: With applications to data science},
  author={Peyr{\'e}, Gabriel and Cuturi, Marco},
  journal={Foundations and Trends{\textregistered} in Machine Learning},
  volume={11},
  number={5-6},
  pages={355--607},
  year={2019},
  publisher={Now Publishers, Inc.}
}

@article{montavon2016wasserstein,
  title={Wasserstein training of restricted {B}oltzmann machines},
  author={Montavon, Gr{\'e}goire and M{\"u}ller, Klaus-Robert and Cuturi, Marco},
  journal={Advances in Neural Information Processing Systems},
  volume={29},
  pages={3718--3726},
  year={2016}
}

@article{rigollet2018entropic,
  title={Entropic optimal transport is maximum-likelihood deconvolution},
  author={Rigollet, Philippe and Weed, Jonathan},
  journal={Comptes Rendus Mathematique},
  volume={356},
  number={11-12},
  pages={1228--1235},
  year={2018},
  publisher={Elsevier}
}

@inproceedings{pitie2005n,
  title={N-dimensional probability density function transfer and its application to color transfer},
  author={Piti{\'e}, Fran{\c{c}}ois and Kokaram, Anil C and Dahyot, Rozenn},
  booktitle={Tenth IEEE International Conference on Computer Vision},
  volume={2},
  pages={1434--1439},
  year={2005},
  organization={IEEE}
}

@inproceedings{rabin2011wasserstein,
  title={Wasserstein barycenter and its application to texture mixing},
  author={Rabin, Julien and Peyr{\'e}, Gabriel and Delon, Julie and Bernot, Marc},
  booktitle={International Conference on Scale Space and Variational Methods in Computer Vision},
  volume={6667},
  pages={435--446},
  year={2011},
  organization={Springer}
}

@article{bonneel2015sliced,
  title={Sliced and {R}adon {W}asserstein barycenters of measures},
  author={Bonneel, Nicolas and Rabin, Julien and Peyr{\'e}, Gabriel and Pfister, Hanspeter},
  journal={Journal of Mathematical Imaging and Vision},
  volume={51},
  number={1},
  pages={22--45},
  year={2015},
  publisher={Springer}
}

@article{zhou2018wasserstein,
  title={The {W}asserstein-{F}isher-{R}ao metric for waveform based earthquake location},
  author={Zhou, DT and Chen, Jing and Wu, H and Yang, DH and Qiu, LY},
  journal={arXiv preprint arXiv:1812.00304},
  year={2018}
}

@inproceedings{wang2020robust,
  title={Robust Document Distance with {W}asserstein-{F}isher-{R}ao metric},
  author={Wang, Zihao and Zhou, Datong and Yang, Ming and Zhang, Yong and Rao, Chenglong and Wu, Hao},
  booktitle={Proceedings of the 12th Asian Conference on Machine Learning},
  volume={129},
  pages={721--736},
  year={2020},
  organization={PMLR}
}

@article{chizat2018scaling,
  title={Scaling algorithms for unbalanced optimal transport problems},
  author={Chizat, Lenaic and Peyr{\'e}, Gabriel and Schmitzer, Bernhard and Vialard, Fran{\c{c}}ois-Xavier},
  journal={Mathematics of Computation},
  volume={87},
  number={314},
  pages={2563--2609},
  year={2018}
}

@article{chizat2018interpolating,
  title={An interpolating distance between optimal transport and {F}isher--{R}ao metrics},
  author={Chizat, Lenaic and Peyr{\'e}, Gabriel and Schmitzer, Bernhard and Vialard, Fran{\c{c}}ois-Xavier},
  journal={Foundations of Computational Mathematics},
  volume={18},
  number={1},
  pages={1--44},
  year={2018},
  publisher={Springer}
}

@article{kondratyev2016new,
  title={A new optimal transport distance on the space of finite {R}adon measures},
  author={Kondratyev, Stanislav and Monsaingeon, L{\'e}onard and Vorotnikov, Dmitry},
  journal={Advances in Differential Equations},
  volume={21},
  number={11/12},
  pages={1117--1164},
  year={2016},
  publisher={Khayyam Publishing, Inc.}
}

@article{liero2018optimal,
  title={Optimal entropy-transport problems and a new {H}ellinger--{K}antorovich distance between positive measures},
  author={Liero, Matthias and Mielke, Alexander and Savar{\'e}, Giuseppe},
  journal={Inventiones Mathematicae},
  volume={211},
  number={3},
  pages={969--1117},
  year={2018},
  publisher={Springer}
}

@article{chizat2018unbalanced,
  title={Unbalanced optimal transport: Dynamic and {K}antorovich formulations},
  author={Chizat, Lenaic and Peyr{\'e}, Gabriel and Schmitzer, Bernhard and Vialard, Fran{\c{c}}ois-Xavier},
  journal={Journal of Functional Analysis},
  volume={274},
  number={11},
  pages={3090--3123},
  year={2018},
  publisher={Elsevier}
}

@article{liero2016optimal,
  title={Optimal transport in competition with reaction: The {H}ellinger--{K}antorovich distance and geodesic curves},
  author={Liero, Matthias and Mielke, Alexander and Savar{\'e}, Giuseppe},
  journal={SIAM Journal on Mathematical Analysis},
  volume={48},
  number={4},
  pages={2869--2911},
  year={2016},
  publisher={SIAM}
}

@article{frogner2015learning,
  title={Learning with a {W}asserstein loss},
  author={Frogner, Charlie and Zhang, Chiyuan and Mobahi, Hossein and Araya, Mauricio and Poggio, Tomaso A},
  journal={Advances in Neural Information Processing Systems},
  volume={28},
  pages={2053--2061},
  year={2015}
}

@inproceedings{kantorovich1942transfer,
  title={On the transfer of masses (in {R}ussian)},
  author={Kantorovich, L},
  booktitle={Doklady Akademii Nauk},
  volume={37},
  number={2},
  pages={227--229},
  year={1942}
}

@inproceedings{pham2020unbalanced,
  title={On unbalanced optimal transport: An analysis of {S}inkhorn algorithm},
  author={Pham, Khiem and Le, Khang and Ho, Nhat and Pham, Tung and Bui, Hung},
  booktitle={International Conference on Machine Learning},
  volume={119},
  pages={7673--7682},
  year={2020},
  organization={PMLR}
}

@book{owen2013monte,
  title={Monte {C}arlo Theory, Methods and Examples},
  author={Owen, Art B},
  year={2013}
}

@inproceedings{zhao2015stochastic,
  title={Stochastic optimization with importance sampling for regularized loss minimization},
  author={Zhao, Peilin and Zhang, Tong},
  booktitle={International Conference on Machine Learning},
  volume={37},
  pages={1--9},
  year={2015},
  organization={PMLR}
}

@inproceedings{katharopoulos2018not,
  title={Not all samples are created equal: Deep learning with importance sampling},
  author={Katharopoulos, Angelos and Fleuret, Fran{\c{c}}ois},
  booktitle={International Conference on Machine Learning},
  volume={80},
  pages={2525--2534},
  year={2018},
  organization={PMLR}
}

@article{kumar2012sampling,
  title={Sampling methods for the {N}ystr{\"o}m method},
  author={Kumar, Sanjiv and Mohri, Mehryar and Talwalkar, Ameet},
  journal={The Journal of Machine Learning Research},
  volume={13},
  number={1},
  pages={981--1006},
  year={2012},
  publisher={JMLR. org}
}

@inproceedings{braverman2021near,
  title={Near-optimal entrywise sampling of numerically sparse matrices},
  author={Braverman, Vladimir and Krauthgamer, Robert and Krishnan, Aditya R and Sapir, Shay},
  booktitle={Proceedings of Thirty-Fourth Conference on Learning Theory},
  volume={134},
  pages={759--773},
  year={2021},
  organization={PMLR}
}

@inproceedings{wang2021comparative,
  title={A comparative study on sampling with replacement vs {P}oisson sampling in optimal subsampling},
  author={Wang, HaiYing and Zou, Jiahui},
  booktitle={International Conference on Artificial Intelligence and Statistics},
  pages={289--297},
  volume={130},
  year={2021},
  organization={PMLR}
}

@inproceedings{lee2014path,
  title={Path finding methods for linear programming: Solving linear programs in $\widetilde{O}$(vrank) iterations and faster algorithms for maximum flow},
  author={Lee, Yin Tat and Sidford, Aaron},
  booktitle={IEEE 55th Annual Symposium on Foundations of Computer Science},
  pages={424--433},
  year={2014},
  organization={IEEE}
}

@inproceedings{lee2015efficient,
  title={Efficient inverse maintenance and faster algorithms for linear programming},
  author={Lee, Yin Tat and Sidford, Aaron},
  booktitle={IEEE 56th Annual Symposium on Foundations of Computer Science},
  pages={230--249},
  year={2015},
  organization={IEEE}
}

@inproceedings{genevay2019sample,
  title={Sample complexity of {S}inkhorn divergences},
  author={Genevay, Aude and Chizat, L{\'e}naic and Bach, Francis and Cuturi, Marco and Peyr{\'e}, Gabriel},
  booktitle={International Conference on Artificial Intelligence and Statistics},
  volume={89},
  pages={1574--1583},
  year={2019},
  organization={PMLR}
}

@inproceedings{lin2019efficient,
  title={On efficient optimal transport: An analysis of greedy and accelerated mirror descent algorithms},
  author={Lin, Tianyi and Ho, Nhat and Jordan, Michael I},
  booktitle={International Conference on Machine Learning},
  volume={97},
  pages={3982--3991},
  year={2019},
  organization={PMLR}
}

@inproceedings{guo2020fast,
  title={Fast algorithms for computational optimal transport and {W}asserstein barycenter},
  author={Guo, Wenshuo and Ho, Nhat and Jordan, Michael I},
  booktitle={International Conference on Artificial Intelligence and Statistics},
  volume={108},
  pages={2088--2097},
  year={2020},
  organization={PMLR}
}

@article{lin2019acceleration,
  title={On the acceleration of the {S}inkhorn and {G}reenkhorn algorithms for optimal transport},
  author={Lin, Tianyi and Ho, Nhat and Jordan, Michael I},
  journal={arXiv preprint arXiv:1906.01437},
  year={2019}
}

@Article{liu1996metropolized,
  author    = {Liu, Jun S},
  title     = {Metropolized independent sampling with comparisons to rejection sampling and importance sampling},
  number    = {2},
  pages     = {113--119},
  volume    = {6},
  journal   = {Statistics and Computing},
  publisher = {Springer},
  year      = {1996},
}

@Book{liu2008monte,
  author    = {Liu, Jun S},
  title     = {{M}onte {C}arlo Strategies in Scientific Computing},
  publisher = {Springer},
  year      = {2008},
}

@Article{kahn1953methods,
  author    = {Kahn, Herman and Marshall, Andy W},
  title     = {Methods of reducing sample size in {M}onte {C}arlo computations},
  number    = {5},
  pages     = {263--278},
  volume    = {1},
  journal   = {Journal of the Operations Research Society of America},
  publisher = {INFORMS},
  year      = {1953},
}

@inproceedings{an2022efficient,
  title={Efficient optimal transport algorithm by accelerated gradient descent},
  author={An, Dongsheng and Lei, Na and Xu, Xiaoyin and Gu, Xianfeng},
  booktitle={Proceedings of the AAAI Conference on Artificial Intelligence},
  volume={36},
  number={9},
  pages={10119--10128},
  year={2022},
  organization={AAAI}
}

@inproceedings{scetbon2021low,
  title={Low-rank {S}inkhorn factorization},
  author={Scetbon, Meyer and Cuturi, Marco and Peyr{\'e}, Gabriel},
  booktitle={International Conference on Machine Learning},
  volume={139},
  pages={9344--9354},
  year={2021},
  organization={PMLR}
}

@article{scetbon2020linear,
  title={Linear time {S}inkhorn divergences using positive features},
  author={Scetbon, Meyer and Cuturi, Marco},
  journal={Advances in Neural Information Processing Systems},
  volume={33},
  pages={13468--13480},
  year={2020}
}

@inproceedings{sejourne2022faster,
  title={Faster Unbalanced Optimal Transport: Translation invariant {S}inkhorn and 1-D {F}rank--{W}olfe},
  author={S{\'e}journ{\'e}, Thibault and Vialard, Fran{\c{c}}ois-Xavier and Peyr{\'e}, Gabriel},
  booktitle={International Conference on Artificial Intelligence and Statistics},
  pages={4995--5021},
  year={2022},
  organization={PMLR}
}

@article{liao2022fast,
  title={Fast {S}inkhorn {I}: An {$O(N)$} algorithm for the {W}asserstein-1 metric},
  author={Liao, Qichen and Chen, Jing and Wang, Zihao and Bai, Bo and Jin, Shi and Wu, Hao},
  journal={arXiv preprint arXiv:2202.10042},
  year={2022}
}

@article{liao2022fast2,
  title={Fast {S}inkhorn {II}: Collinear Triangular Matrix and Linear Time Accurate Computation of Optimal Transport},
  author={Liao, Qichen and Wang, Zihao and Chen, Jing and Bai, Bo and Jin, Shi and Wu, Hao},
  journal={arXiv preprint arXiv:2206.09049},
  year={2022}
}

@inproceedings{klicpera2021scalable,
  title={Scalable Optimal Transport in High Dimensions for Graph Distances, Embedding Alignment, and More},
  author={Klicpera, Johannes and Lienen, Marten and G{\"u}nnemann, Stephan},
  booktitle={International Conference on Machine Learning},
  volume={139},
  pages={5616--5627},
  year={2021},
  organization={PMLR}
}

@inproceedings{xie2020fast,
  title={A fast proximal point method for computing exact {W}asserstein distance},
  author={Xie, Yujia and Wang, Xiangfeng and Wang, Ruijia and Zha, Hongyuan},
  booktitle={Proceedings of The 35th Uncertainty in Artificial Intelligence Conference},
  volume={115},
  pages={433--453},
  year={2020},
  organization={PMLR}
}

@article{ouyang2020video,
  title={Video-based {AI} for beat-to-beat assessment of cardiac function},
  author={Ouyang, David and He, Bryan and Ghorbani, Amirata and Yuan, Neal and Ebinger, Joseph and Langlotz, Curtis P and Heidenreich, Paul A and Harrington, Robert A and Liang, David H and Ashley, Euan A and Zou, James Y},
  journal={Nature},
  volume={580},
  number={7802},
  pages={252--256},
  year={2020},
  publisher={Nature Publishing Group}
}

@book{fye2015caring,
  title={Caring for the heart: Mayo Clinic and the rise of specialization},
  author={Fye, W Bruce},
  year={2015},
  publisher={Oxford University Press}
}

@article{bouhlel2020early,
  title={Early screening of cardiotoxicity of chemotherapy by echocardiography and myocardial biomarkers.},
  author={Bouhlel, Imen and Chabchoub, Imen and Hajri, Emna and Ernez, Samia and Gouider, Jeridi},
  journal={La Tunisie Medicale},
  volume={98},
  number={12},
  pages={1017--1023},
  year={2020}
}

@article{zhang2021ensemble,
  title={Ensemble machine learning approach for screening of coronary heart disease based on echocardiography and risk factors},
  author={Zhang, Jingyi and Zhu, Huolan and Chen, Yongkai and Yang, Chenguang and Cheng, Huimin and Li, Yi and Zhong, Wenxuan and Wang, Fang},
  journal={BMC Medical Informatics and Decision Making},
  volume={21},
  number={1},
  pages={1--13},
  year={2021},
  publisher={BioMed Central}
}

@inproceedings{gong2014multi,
  title={Multi-scale orderless pooling of deep convolutional activation features},
  author={Gong, Yunchao and Wang, Liwei and Guo, Ruiqi and Lazebnik, Svetlana},
  booktitle={European Conference on Computer Vision},
  pages={392--407},
  year={2014},
  organization={Springer}
}

@inproceedings{boureau2010learning,
  title={Learning mid-level features for recognition},
  author={Boureau, Y-Lan and Bach, Francis and LeCun, Yann and Ponce, Jean},
  booktitle={2010 IEEE Computer Society Conference on Computer Vision and Pattern Recognition},
  pages={2559--2566},
  year={2010},
  organization={IEEE}
}

@article{xu2022regularized,
  title={Regularized Optimal Transport Layers for Generalized Global Pooling Operations},
  author={Xu, Hongteng and Cheng, Minjie},
  journal={arXiv preprint arXiv:2212.06339},
  year={2022}
}

@article{meng2020sufficient,
  title={Sufficient dimension reduction for classification using principal optimal transport direction},
  author={Meng, Cheng and Yu, Jun and Zhang, Jingyi and Ma, Ping and Zhong, Wenxuan},
  journal={Advances in Neural Information Processing Systems},
  volume={33},
  pages={4015--4028},
  year={2020}
}

@article{lin2020fixed,
  title={Fixed-support {W}asserstein barycenters: Computational hardness and fast algorithm},
  author={Lin, Tianyi and Ho, Nhat and Chen, Xi and Cuturi, Marco and Jordan, Michael},
  journal={Advances in Neural Information Processing Systems},
  volume={33},
  pages={5368--5380},
  year={2020}
}

@article{lin2022efficiency,
  title={On the efficiency of entropic regularized algorithms for optimal transport},
  author={Lin, Tianyi and Ho, Nhat and Jordan, Michael I},
  journal={The Journal of Machine Learning Research},
  volume={23},
  number={137},
  pages={1--42},
  year={2022}
}

@article{le2021robust,
  title={On robust optimal transport: Computational complexity and barycenter computation},
  author={Le, Khang and Nguyen, Huy and Nguyen, Quang M and Pham, Tung and Bui, Hung and Ho, Nhat},
  journal={Advances in Neural Information Processing Systems},
  volume={34},
  pages={21947--21959},
  year={2021}
}

@inproceedings{nguyen2021distributional,
  title={Distributional Sliced-{W}asserstein and Applications to Generative Modeling},
  author={Nguyen, Khai and Ho, Nhat and Pham, Tung and Bui, Hung},
  booktitle={International Conference on Learning Representations},
  year={2021}
}

@inproceedings{nguyen2023hierarchical,
  title={Hierarchical sliced {W}asserstein distance},
  author={Nguyen, Khai and Ren, Tongzheng and Nguyen, Huy and Rout, Litu and Nguyen, Tan and Ho, Nhat},
  booktitle={International Conference on Learning Representations},
  year={2023}
}

@article{balaji2020robust,
  title={Robust optimal transport with applications in generative modeling and domain adaptation},
  author={Balaji, Yogesh and Chellappa, Rama and Feizi, Soheil},
  journal={Advances in Neural Information Processing Systems},
  volume={33},
  pages={12934--12944},
  year={2020}
}

@article{li2023efficient,
  title={Efficient Approximation of {G}romov-{W}asserstein Distance Using Importance Sparsification},
  author={Li, Mengyu and Yu, Jun and Xu, Hongteng and Meng, Cheng},
  journal={Journal of Computational and Graphical Statistics},
  pages={1--25},
  year={2023},
  publisher={Taylor \& Francis}
}

@article{agueh2011barycenters,
author = {Agueh, Martial and Carlier, Guillaume},
title = {Barycenters in the {W}asserstein Space},
journal = {SIAM Journal on Mathematical Analysis},
volume = {43},
number = {2},
pages = {904-924},
year = {2011}
}

@InProceedings{cuturi2014fast,
  title = 	 {Fast Computation of {W}asserstein Barycenters},
  author = 	 {Cuturi, Marco and Doucet, Arnaud},
  booktitle = 	 {International Conference on Machine Learning},
  pages = 	 {685--693},
  year = 	 {2014},
  volume = 	 {32},
  number =       {2},
  publisher =    {PMLR}
}

@article{benamou2015iterative,
author = {Benamou, Jean-David and Carlier, Guillaume and Cuturi, Marco and Nenna, Luca and Peyr\'{e}, Gabriel},
title = {Iterative {B}regman Projections for Regularized Transportation Problems},
journal = {SIAM Journal on Scientific Computing},
volume = {37},
number = {2},
pages = {A1111-A1138},
year = {2015}
}

@InProceedings{kroshnin19complexity,
  title = 	 {On the Complexity of Approximating {W}asserstein Barycenters},
  author =       {Kroshnin, Alexey and Tupitsa, Nazarii and Dvinskikh, Darina and Dvurechensky, Pavel and Gasnikov, Alexander and Uribe, Cesar},
  booktitle = 	 {International Conference on Machine Learning},
  pages = 	 {3530--3540},
  year = 	 {2019},
  volume = 	 {97},
  publisher =    {PMLR}
}

@article{cuturi2018semidual,
  title={Semidual regularized optimal transport},
  author={Cuturi, Marco and Peyr{\'e}, Gabriel},
  journal={SIAM Review},
  volume={60},
  number={4},
  pages={941--965},
  year={2018},
  publisher={SIAM}
}

@InProceedings{guminov2021combination,
  title = 	 {On a Combination of Alternating Minimization and {N}esterov’s Momentum},
  author =       {Guminov, Sergey and Dvurechensky, Pavel and Tupitsa, Nazarii and Gasnikov, Alexander},
  booktitle = 	 {International Conference on Machine Learning},
  pages = 	 {3886--3898},
  year = 	 {2021},
  volume = 	 {139},
  publisher =    {PMLR}
}

@inproceedings{montesuma2021wasserstein,
  title={Wasserstein barycenter for multi-source domain adaptation},
  author={Montesuma, Eduardo Fernandes and Mboula, Fred Maurice Ngole},
  booktitle={Proceedings of the IEEE/CVF Conference on Computer Vision and Pattern Recognition},
  pages={16785--16793},
  year={2021}
}

@article{lecun1998gradient,
  title={Gradient-based learning applied to document recognition},
  author={LeCun, Yann and Bottou, L{\'e}on and Bengio, Yoshua and Haffner, Patrick},
  journal={Proceedings of the IEEE},
  volume={86},
  number={11},
  pages={2278--2324},
  year={1998},
  publisher={Ieee}
}

@article{flamary2021pot,
  title={{POT}: Python optimal transport},
  author={Flamary, R\'{e}mi and Courty, Nicolas and Gramfort, Alexandre and Alaya, Mokhtar Z. and Boisbunon, Aur\'{e}lie and Chambon, Stanislas and Chapel, Laetitia and Corenflos, Adrien and Fatras, Kilian and Fournier, Nemo and Gautheron, L\'{e}o and Gayraud, Nathalie T.H. and Janati, Hicham and Rakotomamonjy, Alain and Redko, Ievgen and Rolet, Antoine and Schutz, Antony and Seguy, Vivien and Sutherland, Danica J. and Tavenard, Romain and Tong, Alexander and Vayer, Titouan},
  journal={The Journal of Machine Learning Research},
  volume={22},
  number={1},
  pages={3571--3578},
  year={2021},
  publisher={JMLRORG}
}

@article{li2022hilbert,
  title={Hilbert Curve Projection Distance for Distribution Comparison},
  author={Li, Tao and Meng, Cheng and Yu, Jun and Xu, Hongteng},
  journal={arXiv preprint arXiv:2205.15059},
  year={2022}
}

@inproceedings{patrini2020sinkhorn,
  title={Sinkhorn AutoEncoders},
  author={Patrini, Giorgio and van den Berg, Rianne and Forr{\'{e}}, Patrick and Carioni, Marcello and Bhargav, Samarth and Welling, Max and Genewein, Tim and Nielsen, Frank},
  booktitle={Proceedings of The 35th Uncertainty in Artificial Intelligence Conference},
  pages={733--743},
  year={2020},
  volume={115},
  organization={PMLR}
}

@inproceedings{kolouri2019sliced,
  title={Sliced {W}asserstein auto-encoders},
  author={Kolouri, Soheil and Pope, Phillip E and Martin, Charles E and Rohde, Gustavo K},
  booktitle={International Conference on Learning Representations},
  year={2019}
}

@article{kingma2014adam,
  title={Adam: A method for stochastic optimization},
  author={Kingma, Diederik P and Ba, Jimmy},
  journal={arXiv preprint arXiv:1412.6980},
  year={2014}
}

@article{heusel2017gans,
  title={{GAN}s trained by a two time-scale update rule converge to a local {N}ash equilibrium},
  author={Heusel, Martin and Ramsauer, Hubert and Unterthiner, Thomas and Nessler, Bernhard and Hochreiter, Sepp},
  journal={Advances in Neural Information Processing Systems},
  volume={30},
  year={2017}
}

@inproceedings{genevay2018learning,
  title={Learning generative models with {S}inkhorn divergences},
  author={Genevay, Aude and Peyr{\'e}, Gabriel and Cuturi, Marco},
  booktitle={International Conference on Artificial Intelligence and Statistics},
  pages={1608--1617},
  volume={84},
  year={2018},
  organization={PMLR}
}

\end{document}